\newcommand{\beq}{\begin{equation}}
\newcommand{\eeq}{\end{equation}}
\newcommand{\beqa}{\begin{eqnarray}}
\newcommand{\eeqa}{\end{eqnarray}}
\newcommand{\beqan}{\begin{eqnarray*}}
	\newcommand{\eeqan}{\end{eqnarray*}}
\newcommand{\beqannumb}{\begin{eqnarray}}
\newcommand{\eeqannumb}{\end{eqnarray}}
\newcommand{\hl}[1]{{\textit{#1}}}
\begin{document}

\title{Forced-exploration free Strategies for Unimodal Bandits}

\author{\name Hassan Saber \email hassan.saber@inria.fr \\
       \addr SequeL Research Group \\
       Inria Lille-Nord Europe \& CRIStAL\\
       Villeneuve-d'Ascq, Parc scientifique de la Haute-Borne, France
       \AND
       \name Pierre Ménard \email pierre.menard@inria.fr \\
       \addr SequeL Research Group \\
       Inria Lille-Nord Europe \& CRIStAL\\
       Villeneuve-d'Ascq, Parc scientifique de la Haute-Borne, France
       \AND
       \name Odalric-Ambrym Maillard \email odalric.maillard@inria.fr \\
       \addr SequeL Research Group \\
       Inria Lille-Nord Europe \& CRIStAL\\
       Villeneuve-d'Ascq, Parc scientifique de la Haute-Borne, France}

\editor{}

\maketitle

\begin{abstract}
We consider a multi-armed bandit problem specified by a set of Gaussian or  Bernoulli distributions endowed with a unimodal structure. 
	  Although this problem has been addressed in the literature \citep{combes2014unimodal}, the state-of-the-art algorithms for such structure make appear a forced-exploration mechanism.
	    We introduce \IMEDUB, the first forced-exploration free strategy that exploits the unimodal-structure, by adapting to this setting the Indexed Minimum Empirical Divergence (\IMED) strategy introduced by \citet{honda2015imed}.
	    This strategy is proven optimal.  We then derive \KLUCBUB, a \KLUCB version of \IMEDUB, which is also proven optimal. 
	    Owing to our proof technique, we are further able to provide a concise finite-time analysis of both strategies in an unified way.
	    Numerical experiments show that both \IMEDUB and \KLUCBUB perform similarly in practice and outperform the state-of-the-art algorithms.
 
\end{abstract}

\begin{keywords}
  Structured Bandits, Indexed Minimum Empirical Divergence, Optimal Strategy 
\end{keywords}

	\section{Introduction}
	
	The multi-armed bandit problem is a popular framework to formalize sequential decision making problems.
	It was first introduced in the context of medical trials \citep{thompson1933likelihood,thompson1935criterion} and later formalized by \citet{ro52}:
	A bandit is specified by a set of unknown probability distributions $\nu  \!=\! (\nu_a)_{a\in\cA}$ with means $(\mu_{a})_{a\in\cA}$.
	At each time $ t \!\in\!\Nat $, the learner chooses an arm $ a_t \!\in\! \cA $, based only on the past,
	the learner then receives and observes a reward $ X_t $, conditionally independent, sampled according to $ \nu_{a_t} $. The goal of the learner is  to maximize the expected sum of rewards received over time (up to some unknown horizon $T$), or 
	equivalently minimize the \hl{regret} with respect to the strategy constantly receiving the highest mean reward 
	$$  R(\nu,T) = \Esp_\nu\!\brackets{\sum_{t=1}^T \mu^\star - X_t}  \text{ where }  \mu^\star=\max_{a \in \cA}\mu_{a}\,. $$ 
	Both means and distributions are \hl{unknown}, which makes the problem non trivial, and the learner only knows that $\nu\!\in\!\cD$  where $\cD$ is a given set of bandit configurations.
	This problem received increased attention in the middle of the $20^{\text{th}}$ century, and the seminal paper \citet{lai1985asymptotically} established the first  lower bound on the cumulative regret, showing that designing a strategy 
	that is optimal uniformly over a given set of configurations $\cD$ comes with a price. 
	The study of the lower performance bounds in multi-armed bandits successfully lead to the development of asymptotically optimal strategies for specific configuration sets, such as the \KLUCB strategy \citep{lai1987adaptive,CaGaMaMuSt2013,maillard2018boundary} for exponential families, or alternatively the \DMED and \IMED strategies from \citet{honda2011asymptotically,honda2015imed}.
	The lower bounds from \citet{lai1985asymptotically}, later extended by \citet{burnetas1997optimal} did not cover all possible configurations, and in particular \hl{structured} configuration sets were not  handled until \citet{agrawal1989asymptotically} and then \citet{graves1997asymptotically} established generic lower bounds. Here, structure refers to the fact that pulling an arm may reveals information that enables to refine estimation of other arms. 
	Unfortunately, designing  numerical efficient strategies that are provably optimal remains a challenge for many structures.

	\paragraph{Structured configurations.}
	Motivated by the growing popularity of bandits in a number of industrial and societal application domains, the study of \hl{structured configuration sets} has received increasing attention over the last few years:
	The linear bandit problem is one typical illustration \citep{abbasi2011improved, srinivas2010gaussian, durand2017streaming}, for which the linear structure considerably modifies the achievable lower bound, see \citet{lattimore2017end}. 
	The study of a \hl{unimodal} structure naturally appears in many contexts, e.g.  single-peak preference economics, voting theory
	or wireless communications, and has been first considered in \citet{yu2011unimodal} from a bandit perspective, then in  \citet{combes2014unimodal}  providing an explicit lower bound together with a strategy exploiting this specific structure.
	Other structures  include Lipschitz bandits \cite{magureanu2014oslb}, and we refer to the manuscript \citet{magureanu2018efficient} for other examples, such as cascading bandits that are useful in the context of recommender systems.
	In \citet{combes2017minimal}, a generic strategy is introduced called \OSSB (Optimal Structured Stochastic Bandit), stepping the path towards generic multi-armed bandit strategies that are  adaptive to a given structure.

	\paragraph{Unimodal-structure.}
	In this paper, we provide novel regret minimization results related to the following structure. We assume a \hl{unimodal} structure similar to that considered in \citet{yu2011unimodal} and  \citet{combes2014unimodal}. That is, there exists an undirected graph $G \!=\! (\cA, E)$ whose vertices are arms $\cA$, and whose edges $E$ characterize a partial order  among means $(\mu_a)_{a\in \cA}$. This partial order is assumed unknown to the learner. We assume that there exists a unique optimal arm $ a^\star\!=\!\argmax_{a \in \cA}\mu_a$ and that for  all sub-optimal arm $a\!\neq\! a^\star$, there exists a path $P_a \!=\! (a_1 \!=\! a, \dots, a_{\ell_a} \!=\! a^\star) \!\in\! \cA^{\ell_a} $ of length $\ell_a \!\geq\! 2$ such that for all $ i\!\in\! [1,\ell_a -1]$, $(a_i, a_{i+1}) \in E$ and $\mu_{a_i} < \mu_{a_{i+1}}$. Lastly, we assume that $\nu \!\subset\!\cP\!\coloneqq\!\Set{p(\mu), \mu\!\in\!\Theta}$, where  $p(\mu)$ is an exponential-family distribution probability with density $f(\cdot, \mu)$ with respect to some positive measure $\lambda$ on $\Real$ and mean $\mu \!\in\!\Theta \!\subset\!\Real$. $\cP$ is assumed to be known to the learner. Thus, for all $a \!\in\! \cA$ we have $\nu_a \!=\! p(\mu_a)$. 
	We denote by $\cD_{(\cP,G)}$ or simply $\cD$ the structured set of such unimodal-bandit distributions characterized by $\left(\cP,G\right)$. In the following, we assume that $\cP$ is either the set of real Gaussian distributions with means in $\Real$ and variance $1$ or the set of Bernouilli distributions with means in $(0,1)$. 
	
	

	\paragraph{Goal.}
	A key contribution in the study of unimodal bandits is the work \citet{combes2014unimodal}, where the authors establish lower confidence bounds on the regret for the unimodal structure, and introduce an asymptotically optimal strategy called \OSUB. One may then consider that unimodal bandits are solved.
    Unfortunately, a closer look at the proposed approach reveals that the considered strategy forces some arms to be played (this is different than what is called forced exploration in structured bandits; it is rather a forced exploitation scheme).
    In this paper, our goal is to introduce alternative strategies to \OSUB, that do not use any such forcing scheme, but consider variants of the pseudo-index induced by the lower bound analysis. Whether or not forcing mechanisms are desirable features is currently still under debate in the community; by providing the first strategy without any requirement for forcing in a structured bandit setup, we show that such mechanisms are not always required, which we believe opens an interesting avenue of research. 

	\paragraph{Contributions.} In this paper, we first revisit the Indexed Minimum Empirical Divergence (\IMED) strategy from \citet{honda2011asymptotically} introduced for unstructured multi-armed bandits, and adapt it to the unimodal-structured setting. We introduce  in Section~\ref{sec:imed_algo} the \IMEDUB strategy that is limited to the pulling of the current best arm or their no more than $d$ nearest arms at each time step, with $d$ the maximum degree of nodes in $G$.
	Being constructed from \IMED, \IMEDUB does not require any optimization procedure and does not separate exploration from exploitation rounds. \IMEDUB appears to be a \textit{local} strategy. Motivated by practical considerations, under the assumption that $G$ is a tree, when the number of arms $\abs{\cA}$ becomes large, we further develop \dIMEDUB, an algorithm 
	that behaves like \IMEDUB while resorting to a dichotomic second order exploration over all nodes of the graph. This helps quickly identify the best arm $a^\star$ within a large set of arms $\cA$ by empirical considerations.   We also introduce for completeness the \KLUCBUB strategy, that is similar to \IMEDUB, but inspired from \UCB strategies. 
	We prove in Theorem~\ref{th:asymptotic_optimality} that \IMEDUB, \dIMEDUB and \KLUCBUB are asymptotically optimal strategies that do not require forcing scheme. Furthermore, our unified finite time analysis shows that \IMEDUB and \KLUCBUB are closely related. Furthermore, these novel strategies significantly outperform \OSUB in practice. This is confirmed by numerical illustrations on synthetic data.  We believe that the construction of these algorithms together with the proof techniques developed in this paper are of independent interest for the bandit community.

	\paragraph{Notations.} Let $\nu \!\in\! \cD$. Let $\mu^\star \!=\! \max_{a \in \cA }\mu_a $ be the optimal mean  and $a^\star \!=\! \argmax_{a \in \cA}{\mu_a}$ be the optimal arm of $\nu$. We define for an arm $a\!\in\! \cA$ its sub-optimality gap $\Delta_a \!=\! \mu^\star \!-\! \mu_a$.  Considering an horizon $T\!\geq\! 1$, thanks to the chain rule we can rewrite the regret as follows:
	\begin{equation}
	R(\nu,T) = \sum_{a \in \cA} \Delta_a\, \Esp_\nu\big[N_a(T)\big]\,,
	\label{eq:chain_rule}
	\end{equation}
	where $ N_a(t) \!=\! \sum_{s=1}^t \ind_{\Set{a_s = a }} $ is the number of pulls of arm $a$  at time $t$.
	
	\section{Regret Lower bound}
	\label{sec:lower_bounds}
	
	In this subsection, we recall for completeness the known lower bound on the regret when we assume a unimodal structure. 	In order to obtain non trivial lower bound we consider
	strategies that are \hl{consistent} (aka uniformly-good).
	\begin{definition}[Consistent strategy]\label{def:consistent}
		A strategy is consistent on $\cD$ if for all configuration $\nu\in \cD$, for all sub-optimal arm $a$, for all $ \alpha >  0$, 
		\[
		\limT\Esp_\nu \!\left[\dfrac{N_a(T)}{T^\alpha}\right] = 0\,. 
		\]
	\end{definition} 
	We can derive from the notion of consistency an asymptotic lower bound on the regret, see \citet{combes2014unimodal}. To this end, we introduce $\cV_a \!=\! \Set{a' \in \cA:\ (a,a') \in E} $ to denote the neighbourhood of an arm $a \in \cA$. 
	\begin{proposition}[Lower bounds on the regret]\label{prop:LB_regret}Let us consider a  consistent strategy. Then, for all configuration $ \nu \!\in\! \cD$, it must be that
		\[
		\liminfT \dfrac{R(\nu,T)}{\log(T)} \geq c(\nu):= \sum_{a \in \cV_{a^\star}} \dfrac{\Delta_a}{\KLof{\mu_a}{\mu^\star}} \,,
		\]
		where  $\KLof{\mu}{\mu'} \!=\!\int_{\Real}\!\log\!\left(f(x,\mu)/f(x,\mu')\right)\!f(x,\mu) \lambda(\mathrm{d}x)$ denotes the Kullback-Leibler divergence between $\nu\!=\!p(\mu)$ and $\nu'\!=\!p(\mu')$, for $\mu,\mu' \!\in\! \Theta$. 
		\label{prop:lower_bound}
	\end{proposition}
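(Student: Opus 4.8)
The plan is to combine the chain-rule decomposition~\eqref{eq:chain_rule} with a per-arm change-of-measure argument, reducing the statement to a lower bound on the number of pulls of each \emph{neighbour} of the optimal arm. Since $R(\nu,T)=\sum_{a}\Delta_a\,\Esp_\nu[N_a(T)]$ with every summand non-negative, it suffices to lower bound the contribution of the arms $a\in\cV_{a^\star}$ and discard the rest; concretely, I would prove that for each such $a$,
\[
\liminf_{T}\ \frac{\Esp_\nu[N_a(T)]}{\log(T)}\ \geq\ \frac{1}{\KLof{\mu_a}{\mu^\star}},
\]
after which multiplying by $\Delta_a$ and summing over $\cV_{a^\star}$ yields exactly $c(\nu)$.

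Fix a suboptimal neighbour $a\in\cV_{a^\star}$. I would build a confusing instance $\nu^{(a)}$ that agrees with $\nu$ on every arm except $a$, where $\nu_a=p(\mu_a)$ is replaced by $p(\mu_a')$ with $\mu_a'>\mu^\star$. The key structural point is that $\nu^{(a)}$ stays in $\cD$: because $(a,a^\star)\in E$ and $a$ now carries the unique largest mean, every increasing path toward $a^\star$ under $\nu$ can be prolonged through the edge $(a^\star,a)$, so each arm still admits an increasing path to the new peak $a$, and $a$ is the unique maximiser. This is precisely the reason only neighbours of $a^\star$ enter $c(\nu)$: a single-arm perturbation can promote an arm to optimality while preserving unimodality only when that arm is adjacent to $a^\star$, so these are the only cheap confusing instances the structure permits.

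Next I would invoke the standard consequence of consistency. By the relative-entropy chain rule (Wald-type identity) together with the data-processing inequality, for any event $A$ measurable with respect to the observations up to time $T$,
\[
\Esp_\nu[N_a(T)]\,\KLof{\mu_a}{\mu_a'}\ \geq\ \mathrm{kl}\!\big(\mathbb{P}_\nu(A),\mathbb{P}_{\nu^{(a)}}(A)\big),
\]
where $\mathrm{kl}$ is the binary relative entropy and the left-hand side collapses to a single term since $\nu$ and $\nu^{(a)}$ differ only at $a$. Choosing $A=\{N_{a^\star}(T)\leq T-\sqrt{T}\}$ and applying Definition~\ref{def:consistent} on both sides — under $\nu$ the arm $a^\star$ is optimal and hence pulled nearly always, so $\mathbb{P}_\nu(A)\to 0$, whereas under $\nu^{(a)}$ the arm $a^\star$ is suboptimal and pulled only sub-polynomially often, so $\mathbb{P}_{\nu^{(a)}}(A)\to 1$ — a Markov-inequality estimate on $\mathbb{P}_{\nu^{(a)}}(A^c)$ forces the right-hand side to grow like $(1+o(1))\log(T)$. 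Dividing by $\log(T)$, taking $\liminf_{T}$, and finally letting $\mu_a'\downarrow\mu^\star$ and using continuity of $\mu'\mapsto\KLof{\mu_a}{\mu'}$ gives the displayed per-arm bound.

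The main obstacle, and the only genuinely structure-specific step, is verifying that the single-arm perturbation remains in $\cD$ — that raising $\mu_a$ above $\mu^\star$ leaves the partial order encoded by $G$ realisable with $a$ as the unique peak, which requires checking both strict maximality of $a$ and the persistence of an increasing path for every other arm. The adjacency $(a,a^\star)\in E$ is exactly what makes this legitimate and is what confines the sum in $c(\nu)$ to $\cV_{a^\star}$. The remaining ingredients (the transportation inequality and the limiting argument) are classical and insensitive to the unimodal structure.
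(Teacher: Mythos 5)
The paper never proves this proposition itself: it recalls it from \citet{combes2014unimodal}, so the only possible comparison is with the classical change-of-measure argument behind that citation. Your proof is correct and is essentially that argument: reducing via the chain rule to a per-arm bound $\liminf_T \Esp_\nu[N_a(T)]/\log(T)\geq 1/\KLof{\mu_a}{\mu^\star}$ for $a\in\cV_{a^\star}$, building the confusing instance $\nu^{(a)}$ by raising only $\mu_a$ above $\mu^\star$, invoking the transportation/data-processing inequality with the event $\{N_{a^\star}(T)\leq T-\sqrt{T}\}$, exploiting consistency under both $\nu$ and $\nu^{(a)}$, and finally letting $\mu_a'\downarrow\mu^\star$ are exactly the standard ingredients, and you correctly identify the structure-specific step (single-arm perturbations remain in $\cD$ precisely when $a$ is adjacent to $a^\star$) as the reason the sum is confined to $\cV_{a^\star}$. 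One small imprecision in that step: for an arm $b$ whose increasing path to $a^\star$ passes through $a$, you cannot \emph{prolong} the path through the edge $(a^\star,a)$ --- under $\nu^{(a)}$ the step leaving $a$ along the old path is no longer increasing, since $\mu_a'$ now dominates every other mean; instead you \emph{truncate} the path at $a$, and its prefix remains increasing because all means before $a$ are unchanged and lie below $\mu^\star<\mu_a'$. With that one-line fix, the verification that $\nu^{(a)}\in\cD$ with unique peak $a$ is complete, and the rest of your argument goes through as written.
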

	\begin{remark} The quantity $c(\nu)$ is a fully explicit function of $\nu$ (it does not require solving any optimization problem) for some set of distributions $\nu$ (see Remark~\ref{lb Bern}).
	This useful property no longer holds in general for arbitrary structures. Also, it is noticeable that $c(\nu)$ does not involve all the sub-optimal arms but only the ones in $\cV_{a^\star}$. This indicates that sub-optimal arms outside $\cV_{a^\star}$ are sampled $o(\log(T))$, which contrasts with the unstructured stochastic multi-armed bandits. 	See \citet{combes2014unimodal} for further insights.
	\end{remark}
	\begin{remark} \label{lb Bern} For Gaussian distributions (variance $\sigma^2 \!=\!1$), we assume $\lambda$ to be the Lebesgue measure, $\Theta\!=\!\Real$, and for $\mu \!\in\!\Real$, $f(\cdot,\mu)\!= : x \!\in\!\Real \mapsto (\sqrt{2\pi})^{-1}e^{-(x-\mu)^2\!/2}$. Then for all $\mu,\mu'\!\in\!\Real$,  $ \KLof{\mu}{\mu'}\!=\! (\mu' \!-\! \mu)^2\!/2 $. For Bernoulli distributions, a  possible setting is to assume $\lambda = \delta_0 + \delta_1$ (with $\delta_0, \delta_1$ Dirac measures), $\Theta\!=\!(0,1)$ and for $\mu\!\in\!\Theta$, $f(\cdot,\mu)\!=: x \!\in\!\Set{0,1} \mapsto \mu^x(1-\mu)^{1-x}$. Then for all $\mu,\mu'\!\in\![0,1]$, $\KLof{\mu}{\mu'}\!=\! \klof{\mu}{\mu'}$, where
	$$ \klof{\mu}{\mu'}\!\coloneqq\! \left\{\begin{array}{ll}
	     \!\!\!0 &\hspace{-5mm}\textnormal{if } \mu\!=\! \mu',    \\
	     \!\!\!+ \infty &\hspace{-5mm}\textnormal{if } \mu \!<\! \mu' \!=\! 1,    \\
	     \!\!\!\mu\log\!\left(\frac{\mu}{\mu'}\right)+(1\!-\!\mu)\log\!\left(\frac{1\!-\!\mu}{1\!-\!\mu'}\right)&\hspace{-3mm}\textnormal{otherwise},    \\     
	\end{array} \right.  $$
	with the convention $0\!\times\!\log(0) \!=\! 0$.
	\end{remark}
	
	\section{Forced-exploration free strategies for unimodal-structured bandits}
	\label{sec:imed_algo}
	
	We present in this section three novel strategies that both match the asymptotic lower bound of Proposition~\ref{prop:LB_regret}. Two of these strategies are inspired by the Indexed Minimum Empirical Divergence (\IMED) proposed by \citet{honda2011asymptotically}. The other one is based on Kullback–Leibler Upper Confidence Bounds (\KLUCB), using insights from \IMED.
	The general idea behind these algorithms is, following the intuition given by the lower bound, to narrow on the current best arm and its neighbourhood for pulling an arm at a given time step.
	\paragraph{Notations.} The empirical mean of the rewards from the arm $a$ is denoted by $ \muhat_a(t) \!=\!\sum_{s=1}^ t{\ind_{\Set{ a_s = a }} X_s}/N_a(t) $ if $ N_a(t)\!>\! 0  $, $ 0 $ otherwise. We also denote by $\muhat^\star(t) \!=\! \max_{a\in\cA}\muhat_a(t)$ and $\Ahat^\star(t) \!=\! \argmax_{a \in \cA}\muhat_a(t)$ respectively the current best mean and the current set of optimal arms. 
	
	For convenience, we recall below the \OSUB  (Optimal sampling for Unimodal Bandits) strategy from \citet{combes2014unimodal}.
		\begin{algorithm}[H]
		\caption{\OSUB }
		\label{alg:osub}
		\begin{algorithmic}
		    \STATE Pull an arbitrary arm $a_1\in \cA$
			\FOR{$ t = 1 \dots T-1$}
			\STATE Choose $\ahat^\star_t \in \argmin\limits_{\ahat^\star \in \Ahat^\star(t) }N_{\ahat^\star}(t) $ (chosen arbitrarily)
			\STATE Pull $a_{t+1}  = \begin{cases}
			    \ahat^\star_t  & \text{ if }\frac{L_{t}(\ahat^\star_t)-1}{d + 1} \in \Nat\\
			    \argmax\limits_{a\in \cV_{\ahat^\star_t}} u_a(t)  & \text{else}
			    \end{cases}$
			\ENDFOR
		\end{algorithmic}
	\end{algorithm}
	\noindent In Algorithm~\ref{alg:osub}, for some numerical constant $c\!>\!0$, the index computed by \OSUB strategy for arm $a\!\in\!\cA$ and step $t\!\geq\!1$ is 
	\[
	u_a(t)\!=\! \sup\big\{ u \!\geq\! \muhat_a(t)\!: N_a(t)\KLof{\muhat_a(t)}{u} \!\leq\! f_c\!\left(L_t(\ahat^\star_t)\right) ,
	\]
	where  $L_t(a) \!=\! \sum_{t'=1}^t\ind_{\Set{\ahat^\star_{t'}=a }}$ counts how many times  arm $a$ was a leader (best empirical arm), $d$ is the maximum degree of nodes in $G$, and $f_c(\cdot)\!=\!\log(\cdot)\!+\!c\log\log(\cdot)$.
	
	\subsection{The \IMEDUB strategy.}
	For all arm $ a \!\in\! \cA$ and time step $t \!\geq\! 1$ we introduce the \IMED index 
	$$ I_a(t) = N_a(t) \, \KLof{\muhat_a(t)}{\muhat^\star(t)} + \log\!\left(N_a(t)\right) \,, $$ with the convention $0\!\times\!\infty \!=\! 0$.
	This index can be seen as a transportation cost for moving a sub-optimal arm to an optimal one plus an exploration term: the logarithm of the numbers of pulls. When an optimal arm is considered, the transportation cost is null and there is only the exploration part. Note that, as stated in \citet{honda2011asymptotically}, $I_{a}(t)$ is an index in the weaker sense since it cannot be determined only by samples from the arm $a$ but also uses empirical means of current optimal arms. We define \IMEDUB (Indexed Minimum Empirical Divergence for Unimodal Bandits), described in Algorithm~\ref{alg:imedub}, to be the strategy consisting of pulling an arm  $a_t \!\in\! \Set{\ahat^\star_t}\!\cup\!\cV_{\ahat^\star_t}$ with minimum index at each time step $t$, where is $\ahat^\star_t \!\in\! \argmin_{\ahat^\star \in \Ahat^\star(t) }N_{\ahat^\star}(t)$ is a current best arm. This is a natural algorithm since the lower bound on the regret given in Proposition~\ref{prop:LB_regret} involves only the arms in $\cV_{a^\star}$, the neighbourhood of the arm $a^\star$ of maximal mean.
	\begin{algorithm}[H]
		\caption{\IMEDUB}
		\label{alg:imedub}
		\begin{algorithmic}
		    \STATE Pull an arbitrary arm $a_1\in \cA$
			\FOR{$ t = 1 \dots T-1$}
			\STATE Choose $\ahat^\star_t \in \argmin\limits_{\ahat^\star \in \Ahat^\star(t) }N_{\ahat^\star}(t) $ (chosen arbitrarily)
			\STATE  Pull $a_{t+1} \in \argmin\limits_{a \in \Set{\ahat^\star_t}\cup\cV_{\ahat^\star_t}}I_a(t)$ (chosen arbitrarily)
			\ENDFOR
		\end{algorithmic}
	\end{algorithm}

	\subsection{The \KLUCBUB strategy}
	For all arm $ a \!\in\! \cA$ and time step $t \!\geq\! 1$ we introduce the following Upper Confidence Bound
	{\small\[
	U_a(t)\!=\! \max\!\left\{ \begin{array}{l}
	    \hspace{-2mm} u \geq \muhat_a(t)\\ 
	    \hspace{-2mm}  N_a(t)  \KLof{\muhat_a(t)}{u} \!+\! \log\!\left(N_a(t)\right) \!\leq\! \log\!\left(N_{\ahat_t^\star}(t)\right) 
	\end{array} \hspace{-3mm} \right\}  
	\]}
	with $\ahat^\star_t \!\in\! \argmin\limits_{\ahat^\star \in \Ahat^\star(t)}\!N_{\ahat^\star}(t)$. 
	
	By convention, we set $U_a(t) \!=\! \muhat_a(t)$ if for $a\! \in\! \cA$, $\log\!\left(N_a(t)\right) \!>\! \log\!\left(N_{\ahat_t^\star}(t)\right)$.
	\begin{remark}
	A classical \KLUCB strategy would replace the term $\log\!\left(N_{\ahat_t^\star}(t)/N_a(t)\right)$ with $\log(t)$, 
	and a \KLUCBp would use $\log(t/N_a(t))$. This is a simple yet crucial modification. Indeed, although this makes \KLUCBUB not an index strategy, this enables to get a more intrinsic strategy, to simplify the analysis and get improved numerical results.
	\end{remark}
	 As for \IMEDUB and \IMED,  $U_a(t)$ is an index in a weaker sense since it cannot be determined only by samples from the arm $a$ but also uses numbers of pulls of current optimal arms. We define \wucbub (Kullback-Leibler Upper Confidence Bounds for Unimodal Bandits) to be the strategy consisting of pulling an arm  $a_t \!\in\! \Set{\ahat^\star_t}\cup\cV_{\ahat^\star_t}$ with maximum index at each time step $t$. This algorithm can be seen as a \KLUCB version of the \IMEDUB strategy.
	\begin{algorithm}[H]
		\caption{\KLUCBUB}
		
		\label{alg:wucbub}
		\begin{algorithmic}
            \STATE Pull $a_1\in \cA$ at random.
			\FOR{$ t = 1 \dots T-1$}
			\STATE Choose $\ahat^\star_t \in \argmin\limits_{\ahat^\star \in \Ahat^\star(t) }N_{\ahat^\star}(t) $ (chosen arbitrarily)
			\STATE  Pull $a_{t+1} \in \argmax\limits_{a \in \Set{\ahat^\star_t}\cup\cV_{\ahat^\star_t}}U_a(t)$ (chosen arbitrarily)
			\ENDFOR
		\end{algorithmic}
	\end{algorithm}
	\begin{remark} \IMEDUB does not require solving any optimization problem, unlike \OSUB or \KLUCBUB. We believe this feature, inherited from \IMED, makes it an especially appealing strategy. \KLUCBUB solves an optimization similar to that of the \KLUCB strategy for unstructured bandits, and also related to the optimization used in \OSUB from \citet{combes2014unimodal}. The difference between \KLUCBUB and \OSUB is that it does not use any forced exploitation.
	\end{remark}

	\subsection{The \dIMEDUB strategy for large set of arms}
	When the set of arms is large, a bad initialization of \IMEDUB (that is, choose arm $a_1$ far from $a^\star$) comes with high initial regret.  
	Indeed, \IMEDUB does not allow to explore outside the neighbourhood $\cV_{\ahat^\star_t}$ of $\ahat^\star_t$. When $\cA$ is large compared to the neighbourhoods, this may generate a large burn-in phase.
	To overcome this practical limitation, it is natural to explore outside the neighbourhood of the current best arm. However, to be compatible with the lower bound on the regret stated in Proposition~\ref{prop:lower_bound} such exploration must be asymptotically negligible. 
	We now consider $\cG$ to be a tree, and introduce \dIMEDUB, a strategy that trades-off between these two types of exploration. \dIMEDUB shares with \IMEDUB the same exploitation criteria and explores if the index of the current best arm exceeds the indexes of arms in its neighbourhood. However, in exploration phase, \dIMEDUB runs an \IMED type strategy to choose between exploring within \emph{or outside} the neighbourhood of the current best arm. For all time step $t \!\geq\! 1$, for all arm $ a' \!\in\! \cV_{\ahat^\star_t}$, for all arm $a \!\in\!\hat G_{a'}(t)$, where $G_{a'}(t)$ denotes the sub-tree containing $a'$ obtained by cutting edge $(a',\ahat^\star_t)$, we define the second order \IMED index relative to $a'$, as
	$$ I_a^{(a')}(t) = N_a(t) \, \KLpof{\muhat_a(t)}{\muhat_{a'}(t)} + \log\!\left(N_a(t)\right) \,, $$ 
	where $\KLpof{\mu}{\mu'}\!=\!\KLof{\mu}{\mu'}$ if $\mu \!<\! \mu'$, $0$ otherwise. At each exploration time step, \dIMEDUB pulls an arm in $\cS_t$ with minimal secondary index relative to the arm $\aul_t$ with current minimal index and belonging to the neighbourhood of the current best arm, where $\cS_t$ is a sub-tree of $ \hat G_{\aul_t}(t)$ dichotomously chosen that contains $\aul_t$. 
	We illustrate in Appendix~\ref{app: exp}, a way to dynamically choose $\cS_t$.  
	\begin{remark}Assuming that $G$ is a tree ensures that for all $a'\!\in\!\cV_{a^\star}$, the nodes of $G_{a'}$, the sub-tree containing $a'$ obtained by cutting edge $(a',a^\star)$, induce a unimodal bandit configuration with optimal arm $a'$. This specific property allows establishing the optimality of $\dIMEDUB$.
	\end{remark}
	\begin{algorithm}[H]
		\caption{\dIMEDUB}
		\label{alg:dimedub}
		\begin{algorithmic}
		    \STATE Pull an arbitrary arm $a_1\in \cA$
			\FOR{$ t = 1 \dots T-1$}
			\STATE Choose $\ahat^\star_t \in \argmin\limits_{\ahat^\star \in \Ahat^\star(t) }N_{\ahat^\star}(t) $ (chosen arbitrarily)
			\STATE  Choose $\aul_t \in \argmin\limits_{a \in \Set{\ahat^\star_t}\cup\cV_{\ahat^\star_t}}I_a(t)$ (chosen arbitrarily)
			\IF{$\aul_t = \ahat^\star_t$}
			\STATE Pull $a_{t+1} = \aul_t$
			\ELSE
			\STATE Pull $ a_{t+1} \in \argmin\limits_{a \in \cS_t}I^{(\aul_t)}_{a}(t) $
			\ENDIF
			\ENDFOR
		\end{algorithmic}
	\end{algorithm}
	\subsection{Asymptotic optimality of \IMEDUB, \dIMEDUB and \KLUCBUB}
	
	In this section, we state the main theoretical result of this paper.
	\begin{theorem}[Upper bounds] \label{th:upper bounds}  Let us consider a set of Gaussian or Bernoulli distributions $ \nu \!\in\! \cD$ and let $a^\star$ its optimal arm. Let $\cV_{a^\star}$  be the  sub-optimal arms in the neighbourhood of $a^\star$. Then under \IMEDUB and \KLUCBUB strategies for all $0 \!<\! \epsilon \!<\! \epsilon_\nu $, for all horizon time $ T \!\geq\! 1$, for all $a \!\in\!\cV_{a^\star}$,
		\[
		\Esp_\nu[N_{a}(T)] \leq \dfrac{1 + \alpha_\nu(\epsilon)}{\KLof{\mu_a}{\mu_{a^\star}}} \log(T) + d\abs{\cA}^2C_\epsilon + 1 
		\]
		and, for all $a \!\notin\! \Set{a^\star}\!\cup\!\cV_{a^\star} $, 
		\[ \Esp_\nu[N_{a}(T)] \leq d\abs{\cA}^2C_\epsilon + 1 \,,
		\]
		where  $d$ is the maximum degree of nodes in $G$, $\epsilon_\nu \!=\! \min \Set{ 1\!-\!\mu^\star,\, \min_{a \neq a' }\abs{\mu_a \!-\! \mu_{a'}}\!/\!4}$, $C_\epsilon \!=\! 34\log(1\!/\!\epsilon)\epsilon^{-6}$ and where $\alpha_\nu(\cdot)$ is a non-negative function depending only on $\nu$ such that $\lim\limits_{\epsilon \to 0}\alpha_\nu(\epsilon)\!=\!0$ (see Section~\ref{imed_unimodal notations} for more details).
		
		Furthermore, if the considered graph is a tree, then  under \dIMEDUB, for all horizon $T \!\geq\!1$, for all $a \in \cV_{a^\star}$,
\[
		\Esp_\nu[N_{a}(T)] \leq \dfrac{1 + \alpha_\nu(\epsilon)}{\KLof{\mu_a}{\mu_{a^\star}}} \log(T) + d\abs{\cA}^2C_\epsilon + 1 
		\]
	and, for all $a \!\notin\! \Set{a^\star}\!\cup\!\cV_{a^\star} $, 
	\beqan
	\Esp_\nu\!\left[N_{a}(T)\right] \hspace{-3mm}&\leq&\hspace{-3mm}
	\dfrac{1 + \alpha_\nu(\epsilon)}{\min\limits_{\aul \in \cV_{a^\star}}\!\!\KLof{\mu_a}{\mu_{\aul}}} \log\!\!\left(\!\dfrac{1 + \alpha_\nu(\epsilon)}{\min\limits_{\aul \in \cV_{a^\star}}\!\!\KLof{\mu_{\aul}}{\mu_{a^\star}\!}}\! \log(T)\!\!\right) \\
	&+& \hspace{-3mm} d\abs{\cA}^2C_\epsilon + 1 \,.
	\eeqan
		
	\end{theorem}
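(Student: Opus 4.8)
The plan is to prove the \IMEDUB and \KLUCBUB bounds simultaneously through a single index-comparison inequality, then to bootstrap the \dIMEDUB bound from the induced sub-tree structure. Fix a neighbour $a \!\in\! \cV_{a^\star}$ and suppose $a$ is pulled at time $t\!+\!1$, so that $a$ is selected over the leader $\ahat^\star_t$ among $\Set{\ahat^\star_t}\!\cup\!\cV_{\ahat^\star_t}$. Since the leader carries zero transportation cost, $I_{\ahat^\star_t}(t)\!=\!\log(N_{\ahat^\star_t}(t))$, so for \IMEDUB the selection rule $I_a(t)\!\leq\! I_{\ahat^\star_t}(t)$ gives
\[ N_a(t)\,\KLof{\muhat_a(t)}{\muhat^\star(t)} + \log(N_a(t)) \leq \log(N_{\ahat^\star_t}(t)) \leq \log(t)\,. \]
For \KLUCBUB the defining inequality of $U_a(t)$ together with $U_a(t)\!\geq\!U_{\ahat^\star_t}(t)$ produces the very same controlling relation; this is precisely what allows a unified treatment.

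First I would erect the concentration scaffolding. Introduce the good event on which $\ahat^\star_t\!=\!a^\star$, $\abs{\muhat^\star(t)\!-\!\mu^\star}\!\leq\!\epsilon$ and $\abs{\muhat_a(t)\!-\!\mu_a}\!\leq\!\epsilon$. On this event, continuity of $\KLof{\cdot}{\cdot}$ lower-bounds the empirical divergence by $\KLof{\mu_a}{\mu_{a^\star}}/(1\!+\!\alpha_\nu(\epsilon))$, whence the displayed inequality yields $N_a(t)\!\leq\!\frac{1+\alpha_\nu(\epsilon)}{\KLof{\mu_a}{\mu_{a^\star}}}\log(t)$, and summing $\ind_{\Set{a_{t+1}=a}}$ over $t\!\leq\!T$ delivers the leading logarithmic term. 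The complementary rounds, on which some empirical mean has drifted by more than $\epsilon$, must be shown summable: here I would invoke a boundary-crossing (deviation) inequality for exponential-family rewards bounding, for each arm, the expected number of such rounds by $C_\epsilon\!=\!34\log(1/\epsilon)\epsilon^{-6}$. Aggregating over the at most $\abs{\cA}$ arms and the at most $d$ neighbours a leader may have accounts for the additive constant $d\abs{\cA}^2C_\epsilon$.

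For a far arm $a\!\notin\!\Set{a^\star}\!\cup\!\cV_{a^\star}$ under \IMEDUB and \KLUCBUB the argument is purely structural: $a$ can be pulled only when $a\!\in\!\cV_{\ahat^\star_t}$, which forces $\ahat^\star_t\!\neq\!a^\star$. The event $\Set{\ahat^\star_t\!\neq\!a^\star}$ entails that two empirical means have crossed a gap of order $\epsilon_\nu$, so the same deviation inequality bounds the expected number of such rounds by $d\abs{\cA}^2C_\epsilon$, producing the constant bound with no logarithmic growth.

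The substantive new work is \dIMEDUB on a tree. In exploitation rounds \dIMEDUB coincides with \IMEDUB, so the neighbour bound is unchanged. For a far arm, pulled only in exploration when $\aul_t\!\neq\!\ahat^\star_t$, I would use the remark that cutting $(\aul_t,a^\star)$ makes $G_{\aul_t}$ a unimodal configuration with optimal arm $\aul_t$, so that $I^{(\aul_t)}_{\cdot}(t)$ is exactly the \IMED index of this induced configuration. Because $\cS_t$ is chosen to contain $\aul_t$, the selection rule gives $I^{(\aul_t)}_a(t)\!\leq\!I^{(\aul_t)}_{\aul_t}(t)\!=\!\log(N_{\aul_t}(t))$, i.e.\ the neighbour-type inequality inside $G_{\aul_t}$ but with budget $\log(N_{\aul_t}(t))$ in place of $\log(t)$. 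On good events $\aul_t\!\in\!\cV_{a^\star}$, so $N_{\aul_t}(t)$ is itself controlled at scale $\log(t)$ by the neighbour bound; substituting this into the induced-configuration estimate replaces one logarithm by a logarithm-of-logarithm and yields the stated nested bound after minimising over $\aul\!\in\!\cV_{a^\star}$. The main obstacle throughout is the weak-index nature of the strategies: since $I_a(t)$ and $U_a(t)$ depend on $\muhat^\star(t)$ and on the randomly identified leader, the concentration must be controlled jointly for $a$ and for the leader; for \dIMEDUB one must additionally verify that the dichotomic choice of $\cS_t$ never excludes $\aul_t$ and does not inflate the second-order exploration beyond the $\log\log(T)$ scale.
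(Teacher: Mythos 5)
Your proposal reproduces the paper's overall architecture — strategy-based index inequalities, a good-event set on which the empirical leader is correct and means are $\epsilon$-accurate, concentration to bound the bad rounds, and the correct bootstrap for \dIMEDUB (second-order index budget $\log(N_{\aul_t}(t))$ with $N_{\aul_t}(t)=\cO(\log t)$, giving the $\log\log$ term). However, there is a genuine gap at the critical step, namely the claim that the rounds where $\ahat^\star_t \neq a^\star$ can be controlled because "two empirical means have crossed a gap", so that "the same deviation inequality bounds, for each arm, the expected number of such rounds by $C_\epsilon$". No per-arm deviation inequality can do this: deviation bounds control bad events per \emph{sample}, not per \emph{time step}. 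The arm responsible for a wrong leader — the true optimum $a^\star$, or more precisely the higher-mean neighbour of the wrong leader — may be neither pulled nor the leader at those times, hence receives no new samples, and its empirical mean can stay below $\mu_a - \epsilon$ for arbitrarily many consecutive rounds. Summing a concentration bound over its pulls therefore does not bound the number of bad time steps; this is exactly why a generic "good event + concentration" argument fails here and why the forced-exploration-free analysis is delicate.

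The paper closes this gap with a mechanism your proposal never invokes for this case: when $\ahat^\star_t \neq a^\star$ and all pulled/leader arms are well estimated, \emph{unimodality} supplies a neighbour $a \in \cV_{\ahat^\star_t}$ with $\mu_a > \mu_{\ahat^\star_t}$, which must then satisfy $\muhat_a(t) < \mu_a - \epsilon$; the strategy-based empirical lower bound (Lemma~\ref{unimodal empirical lower bounds}) converts this into $N_{a_{t+1}}(t) \leq N_a(t)\exp\!\left(N_a(t)\KLof{\muhat_a(t)}{\mu_a-\epsilon}\right)$ (this is the set $\Lambda_\epsilon$ and Lemma~\ref{lem:decomposition_T_epsilon}). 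Counting time steps through the pull count of the \emph{chosen} arm, the number of such rounds is dominated by $\sum_n n \exp\!\left(n\KLof{\muhat_a^n}{\mu_a-\epsilon}\right)\ind_{\Set{\muhat_a^n < \mu_a-\epsilon}}$, and it is this exponentially-weighted sum — not a count of deviation events — whose expectation Lemma~\ref{unimodal large deviation} bounds by $23\log(1/\epsilon)/\epsilon^6$. Your constant $C_\epsilon$ matches numerically, but the reasoning that makes the count finite is absent. A secondary, more minor gap: for \KLUCBUB the "very same controlling relation" does not follow immediately from $U_a(t)\geq U_{\ahat^\star_t}(t)$; the paper needs a case analysis, the KL-ratio monotonicity result (Lemma~\ref{ucb_unimodal KL}), and separate bookkeeping of the rounds where the indicator process $\gamma_t$ vanishes (Lemma~\ref{lem: ucb ce_and_lambda_are_finite}), all of which your unified treatment silently assumes away.
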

	In particular one can note that the arms in the neighbourhood of the optimal one are pulled $\cO\!\left(\log(T)\right)$ times while the other sub-optimal arms are pulled a finite number of times under \IMEDUB and \KLUCBUB, and $\cO\!\left(\log\!\log(T)\right)$ times under \dIMEDUB. This is coherent with the lower bound that only involves the neighbourhood of the best arm.
	More precisely, combining Theorem~\ref{th:upper bounds} and the chain rule~\eqref{eq:chain_rule} gives the asymptotic optimality of \IMEDUB and \wucbub with respect to the lower bound of Proposition~\ref{prop:LB_regret}.
	\begin{corollary}[Asymptotic optimality]With the same notations as in Theorem~\ref{th:upper bounds} , then under \IMEDUB and \KLUCBUB strategies
		\[
		\limsupT \dfrac{R(\nu,T)}{\log(T)} \leq c(\nu) = \sum\limits_{a \in \cV_{a^\star} } \dfrac{\Delta_a}{\KLof{\mu_a}{\mu^\star}} \,.
		\]
		If the considered graph is a tree, same result holds under \dIMEDUB strategy.
		\label{th:asymptotic_optimality}
	\end{corollary}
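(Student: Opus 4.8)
The plan is to combine the finite-time bounds of Theorem~\ref{th:upper bounds} with the chain-rule decomposition~\eqref{eq:chain_rule}, paying attention to the order in which the two limits are taken (first the horizon $T\!\to\!\infty$ for a fixed $\epsilon$, and only afterwards $\epsilon\!\to\!0$). Starting from $R(\nu,T) = \sum_{a \in \cA}\Delta_a\,\Esp_\nu[N_a(T)]$, I would partition the arms into three groups: the optimal arm $a^\star$, which contributes nothing since $\Delta_{a^\star} = 0$; the neighbours $a \in \cV_{a^\star}$; and the remaining sub-optimal arms $a \notin \Set{a^\star}\cup\cV_{a^\star}$.

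For the neighbours I would insert the first bound of the theorem, bounding their total contribution by $\sum_{a\in\cV_{a^\star}}\Delta_a\big((1+\alpha_\nu(\epsilon))\log(T)/\KLof{\mu_a}{\mu_{a^\star}} + d\abs{\cA}^2C_\epsilon + 1\big)$. For the far arms, under \IMEDUB and \KLUCBUB I would use the constant bound $\Esp_\nu[N_a(T)] \le d\abs{\cA}^2C_\epsilon + 1$, so their total contribution is a constant independent of $T$. Dividing by $\log(T)$ and letting $T\!\to\!\infty$ with $\epsilon$ fixed, every term that does not multiply $\log(T)$ vanishes, leaving $\limsupT R(\nu,T)/\log(T) \le (1+\alpha_\nu(\epsilon))\sum_{a\in\cV_{a^\star}}\Delta_a/\KLof{\mu_a}{\mu_{a^\star}}$. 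Since this holds for every $0 < \epsilon < \epsilon_\nu$ and $\alpha_\nu(\epsilon)\to 0$ as $\epsilon\to 0$, letting $\epsilon\to 0$ and using $\mu_{a^\star}=\mu^\star$ gives exactly $c(\nu)$.

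For the tree case and \dIMEDUB, the only modification is that the far arms are now controlled by an $\cO(\log\log(T))$ term rather than by a constant; but since $\log\log(T)/\log(T)\to 0$, these arms still contribute nothing to the $\limsup$, and the remaining computation applies verbatim, yielding the same bound $c(\nu)$.

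I expect no genuine obstacle here, as Theorem~\ref{th:upper bounds} already does all the work; the only point requiring care is the ordering of the limits. One must take the horizon limit first, for a fixed $\epsilon$, and only then send $\epsilon\to 0$, since $C_\epsilon = 34\log(1/\epsilon)\epsilon^{-6}$ blows up as $\epsilon\to 0$ and therefore cannot be controlled uniformly in $T$; attempting to optimise $\epsilon$ jointly with $T$ would break the argument.
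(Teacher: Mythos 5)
Your proposal is correct and follows exactly the route the paper intends: the paper derives the corollary by combining the chain rule~\eqref{eq:chain_rule} with Theorem~\ref{th:upper bounds}, dividing by $\log(T)$, taking $\limsup_{T\to\infty}$ for fixed $\epsilon$ (so the $C_\epsilon$ constants and the $\cO(\log\log T)$ term for \dIMEDUB vanish), and then letting $\epsilon\to 0$ using $\alpha_\nu(\epsilon)\to 0$. Your remark on the ordering of the limits — valid because the left-hand side does not depend on $\epsilon$ — is precisely the one point of care in this otherwise immediate deduction.
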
 
	See respectively Section~\ref{sec : imed_analysis} and Appendix~\ref{app: ucb_analysis} for a finite time analysis of \IMEDUB, \dIMEDUB and \KLUCBUB. 
	
	

	\section{\IMEDUB finite time analysis}
	\label{sec : imed_analysis}
	At a high level, the key interesting step of the proof is to realize that the considered strategies imply empirical lower and empirical upper bounds on the numbers of pulls (see Lemma~\ref{unimodal empirical lower bounds}, Lemma~\ref{unimodal empirical upper bounds} for \IMEDUB). Then, based on concentration lemmas (see Section~\ref{subsec : imed_concentration}), the strategy-based empirical lower bounds ensure the reliability of the estimators of interest (Lemma~\ref{unimodal reliability}). This makes use of more classical arguments based on concentration of measure. Then, combining the reliability of these estimators with the obtained strategy-base empirical upper bounds, we obtain upper bounds on the average numbers of pulls (Theorem~\ref{th:upper bounds}). 
	
	In this section, we only detail the finite time analysis of \IMEDUB algorithm and defer those of \dIMEDUB and \KLUCBUB to the appendix, as it follows essentially the same steps. Indeed,  we show that \KLUCB and \dIMEDUB strategies imply empirical bounds (Lemmas~\ref{ucb_unimodal empirical lower bounds},\ref{ucb_unimodal empirical upper bounds}, Lemmas~\ref{d-imedub unimodal empirical lower bounds},\ref{d-imed unimodal empirical upper bounds}) very similar to \IMEDUB strategy . This inequalities are the cornerstone of the analysis.  We believe that this general way of proceeding is of independent interest as it simplifies the proof steps.
	

	
	\subsection{\label{imed_unimodal notations} Notations}
	Let us consider $\nu \!\in\! \cD$ and let us denote by  $a^\star$ its best arm. We recall that for all $a \!\in\! \cA $,  $\cV_{a} \!=\! \Set{a' \in \cA:\ (a,a') \in E}$ is the neighbourhood of arm $a$ in graph $G\!=\!(\cA,E)$, and that 
	\[
	d = \max\limits_{a \in\cA}\abs{\cV_a},\ \epsilon_\nu = \min \Set{ 1 - \mu^\star,\ \min\limits_{a  \neq a'}\dfrac{\abs{\mu_a - \mu_{a'}}}{4}} \,.
	\]
	Then, there exists a function $\alpha_\nu(\cdot)$ such that for all $a \neq a' $, for all $0 \!<\!\epsilon\!<\!\epsilon_\nu$, 
	\[
	 \dfrac{\klof{\mu_a }{ \mu_{a'}}}{1 + \alpha_\nu(\epsilon)}\leq \klof{\mu_a + \epsilon}{ \mu_{a'} - \epsilon} \leq (1 + \alpha_\nu(\epsilon)) \klof{\mu_a}{ \mu_{a'}}
	\]
	and $\lim\limits_{\epsilon\downarrow0}\downarrow\alpha_\nu(\epsilon) = 0$.
	For all studied strategy, at each time step $t \!\geq\! 1$, $\ahat^\star_t$ is arbitrarily chosen in $\argmin\limits_{a \in \Ahat^\star(t)}N_a(t)$ where $\Ahat^\star(t)\!=\!\argmax\limits_{a \in \cA}\muhat_a(t)$.  
	
	For all arms $a \!\in\! \cA$ and $n\!\geq\! 1$, we introduce the stopping times $\tau_{a,n} \!=\! \inf{ \Set{t \!\geq\! 1\!: N_{a}(t) \!=\! n} } $ and define the empirical means corresponding to local times
	\[
	\muhat_a^n = \dfrac{1}{n}\sum\limits_{m = 1}^n X_{\tau_{a,m}} \,.
	\]
	For a subset of times $\cE \!\subset\! \Set{t \!\geq\! 1} $, we denote by $\cE^c$ its complementary in $\Set{t \geq 1}$.
	
	\subsection{Strategy-based empirical bounds} 
	\IMEDUB strategy implies inequalities between the indexes that can be rewritten as inequalities on the numbers of pulls. While  lower bounds involving $\log(t)$ may be expected in view of the asymptotic regret bounds, we show lower bounds on the numbers of pulls involving instead $\log\!\left(N_{a_{t+1}}(t)\right)$, the logarithm of the number of pulls of the current chosen arm. We also provide upper bounds on $N_{a_{t+1}}(t)$ involving $\log(t)$.

	We believe that establishing these empirical lower and upper bounds is a key element of our proof technique, that is of independent interest and not \textit{a priori} restricted to the unimodal structure.
	\begin{lemma}[Empirical lower bounds]\label{unimodal empirical lower bounds}Under \IMEDUB, at each step time $t \!\geq\! 1$, for all $a \!\in\!\cV_{\ahat_t^\star}$,
	\[\log\!\left(N_{a_{t+1}}(t)\right)  \leq N_{a}(t)\, \KLof{\muhat_{a}(t)}{\muhat^\star(t)} + \log\!\left(N_{a}(t)\right)
	\]
	and
	\[  N_{a_{t+1}}(t) \leq N_{\ahat^\star_t}(t)\,.
	\]
	\end{lemma}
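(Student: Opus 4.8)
The plan is to derive both inequalities directly from the defining property of $a_{t+1}$ as an index minimizer, using only two elementary facts: the transportation cost $N_a(t)\KLof{\muhat_a(t)}{\muhat^\star(t)}$ is always non-negative, and the leader $\ahat^\star_t$ has vanishing transportation cost since it is empirically optimal. No optimization or concentration is needed here; everything follows by unwinding the definition of $I_a(t)$.

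\textbf{Step 1 (selection inequality).} By definition of \IMEDUB, $a_{t+1} \in \argmin_{a \in \Set{\ahat^\star_t}\cup\cV_{\ahat^\star_t}} I_a(t)$, so for every $a \in \Set{\ahat^\star_t}\cup\cV_{\ahat^\star_t}$ we have $I_{a_{t+1}}(t) \leq I_a(t)$. In particular this holds for every $a \in \cV_{\ahat^\star_t}$ and for $a = \ahat^\star_t$. Since moreover $N_{a_{t+1}}(t)\KLof{\muhat_{a_{t+1}}(t)}{\muhat^\star(t)} \geq 0$, dropping this term gives the key auxiliary bound $\log(N_{a_{t+1}}(t)) \leq I_{a_{t+1}}(t)$.

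\textbf{Step 2 (the two bounds).} Chaining the auxiliary bound with Step 1 applied to an arbitrary $a \in \cV_{\ahat^\star_t}$ yields
\[
\log\!\left(N_{a_{t+1}}(t)\right) \leq I_a(t) = N_a(t)\KLof{\muhat_a(t)}{\muhat^\star(t)} + \log\!\left(N_a(t)\right),
\]
which is exactly the first claimed inequality. For the second, I specialize Step 1 to $a = \ahat^\star_t$: since $\ahat^\star_t \in \Ahat^\star(t)$ forces $\muhat_{\ahat^\star_t}(t) = \muhat^\star(t)$, the transportation cost of the leader vanishes and $I_{\ahat^\star_t}(t) = \log(N_{\ahat^\star_t}(t))$. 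Combining with the auxiliary bound gives $\log(N_{a_{t+1}}(t)) \leq I_{a_{t+1}}(t) \leq \log(N_{\ahat^\star_t}(t))$, and monotonicity of $\log$ yields $N_{a_{t+1}}(t) \leq N_{\ahat^\star_t}(t)$.

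The argument is essentially immediate, so there is no substantial obstacle; the only point requiring care is the degenerate case where some candidate arm has not yet been pulled. Under the conventions $0\times\infty = 0$ and $\log(0) = -\infty$, such an arm carries index $-\infty$ and is therefore selected as $a_{t+1}$, so that the left-hand sides become $-\infty$ and both inequalities hold trivially. Thus the substance of the overall analysis lies not in this lemma but in the companion empirical \emph{upper} bounds and the subsequent concentration arguments that turn these strategy-based inequalities into bounds on $\Esp_\nu[N_a(T)]$.
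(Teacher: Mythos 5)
Your proposal is correct and follows essentially the same route as the paper's proof: both exploit non-negativity of the transportation cost to get $\log(N_{a_{t+1}}(t)) \leq I_{a_{t+1}}(t)$, then use minimality of the selected index over $\Set{\ahat^\star_t}\cup\cV_{\ahat^\star_t}$ together with the fact that $I_{\ahat^\star_t}(t) = \log(N_{\ahat^\star_t}(t))$, and conclude the second bound by monotonicity of $\exp$. Your additional remark on the degenerate unpulled-arm case is a harmless refinement the paper leaves implicit.
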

	
	\begin{proof} For $a \!\in\! \cA$, by definition, we have $I_a(t) \!=\! N_{a}(t) \KLof{\muhat_{a}(t)}{\muhat^\star(t)} \!+\! \log\!\left(N_{a}(t)\right) $, hence
		\[
		\log\!\left(N_a(t)\right) \leq I_a(t) \,.
		\]
		This implies, since the arm with minimum index is pulled, $  \log\!\left(N_{a_{t+1}}(t)\right) \!\leq\! I_{a_{t+1}}(t) \!=\! \min\limits_{a' \in \Set{\ahat^\star_t}\!\cup\!\cV_{\ahat^\star_t}} I_{a'}(t) \!\leq\! I_{\ahat^\star_t}(t) \!=\! \log\!\left(N_{\ahat^\star_t}(t)\right)$. By taking the $\exp(\cdot)$, the last inequality allows us to conclude.
	\end{proof}
	\begin{lemma}[Empirical upper bounds]\label{unimodal empirical upper bounds}
		Under \IMEDUB at each step time $t \!\geq\! 1$,
		\[
		N_{a_{t+1}}(t) \,\KLof{\muhat_{a_{t+1}}(t)}{\muhat^\star(t)} \leq \log(t) \,. 
		\]
	\end{lemma}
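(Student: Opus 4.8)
The plan is to read the \IMED index inequality in the opposite direction to the one used for Lemma~\ref{unimodal empirical lower bounds}: there the exploration term $\log(N_a(t))$ was dropped from below to expose a lower bound, whereas here I want to drop it to expose the transportation cost. The quantity to be bounded is exactly the transportation part of $I_{a_{t+1}}(t)$, so the first step is to write
\[
N_{a_{t+1}}(t)\,\KLof{\muhat_{a_{t+1}}(t)}{\muhat^\star(t)} = I_{a_{t+1}}(t) - \log\!\left(N_{a_{t+1}}(t)\right).
\]
When $N_{a_{t+1}}(t)\geq 1$ the exploration term is nonnegative, so the left-hand side is at most $I_{a_{t+1}}(t)$.

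Next I would use that $a_{t+1}$ attains the minimum index over $\Set{\ahat^\star_t}\cup\cV_{\ahat^\star_t}$, and in particular $I_{a_{t+1}}(t)\leq I_{\ahat^\star_t}(t)$ since the leader $\ahat^\star_t$ belongs to that set. The crucial simplification is that $\ahat^\star_t \in \Ahat^\star(t)=\argmax_{a}\muhat_a(t)$ has empirical mean equal to the current best mean $\muhat^\star(t)$, so $\KLof{\muhat_{\ahat^\star_t}(t)}{\muhat^\star(t)}=0$ and its index collapses to the pure exploration term, $I_{\ahat^\star_t}(t)=\log\!\left(N_{\ahat^\star_t}(t)\right)$. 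Chaining these inequalities bounds the transportation cost of $a_{t+1}$ by $\log\!\left(N_{\ahat^\star_t}(t)\right)$.

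Finally, since the pull counts sum to $t$, we have $N_{\ahat^\star_t}(t)\leq t$ and hence $\log\!\left(N_{\ahat^\star_t}(t)\right)\leq\log(t)$, which closes the argument. There is essentially no hard analytic step here; the only point requiring care is the degenerate case $N_{a_{t+1}}(t)=0$, where the convention $0\times\infty=0$ makes the left-hand side vanish while $\log(t)\geq 0$ for $t\geq 1$, so the inequality holds trivially. I therefore expect the main (and only minor) obstacle to be the bookkeeping around this convention, together with correctly identifying $\log\!\left(N_{\ahat^\star_t}(t)\right)$ as the intermediate right-hand side before applying the coarse bound $N_{\ahat^\star_t}(t)\leq t$.
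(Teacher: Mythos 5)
Your proposal is correct and follows essentially the same route as the paper's own proof: bound the transportation cost by the full index $I_{a_{t+1}}(t)$ (dropping the nonnegative exploration term), use the minimality $I_{a_{t+1}}(t)\leq I_{\ahat^\star_t}(t)$, observe that the leader's index collapses to $\log\!\left(N_{\ahat^\star_t}(t)\right)$, and conclude with $N_{\ahat^\star_t}(t)\leq t$. The only difference is your explicit handling of the degenerate case $N_{a_{t+1}}(t)=0$, which the paper leaves implicit.
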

	
	\begin{proof} As above, by construction we have
		\[
		I_{a_{t+1}}(t) \leq I_{\ahat^\star_t}(t) \,.  
		\]
		It remains, to conclude, to note that
		\[
		N_{a_{t+1}}(t) \KLof{\muhat_{a_{t+1}}(t)}{\muhat^\star(t)}  
		\leq I_{a_{t+1}}(t)\,, 
		\]
		and
		\[I_{\ahat^\star_t}(t) =  \log(N_{\ahat^\star_t}(t)) \leq \log(t) \,.
		\]
	\end{proof}
	
	\subsection{Reliable current best arm and  means}
	
	In this subsection, we consider the subset $\cT_\epsilon$ of times where everything is well behaved: The current best arm corresponds to the true one and the empirical means of the best arm and the current chosen arm are $\epsilon$-accurate for $0\!<\!\epsilon\!<\!\epsilon_\nu$, that is
	\[
	\cT_\epsilon \coloneqq \left\{\begin{array}{l}
		    \hspace{-2mm} t \geq 1:\  \Ahat^\star(t) = \Set{a^\star}  \\
		     \hspace{10mm}\forall a \in \Set{ a^\star,a_{t+1} },\ \abs{\muhat_a(t) - \mu_a } < \epsilon 
		\end{array} \! \right\}\,. 
		\]
	 We will show that its complementary set is finite on average. In order to prove this we decompose the set $\cT_\epsilon$ in the following way. Let $\cE_\epsilon$ be the set of times where the means are well estimated,
	 \[
		\cE_\epsilon \coloneqq \Set{t \geq 1:\ \forall a\! \in\! \Ahat^\star\!(t)\!\cup\!\Set{a_{t+1} },\ \abs{\muhat_a(t) - \mu_a } < \epsilon }\,, 
	  \]
	 and $\Lambda_\epsilon$ the set of times where an arm that is not the current optimal neither pulled is underestimated
	 {\small 
			\[
			\Lambda_\epsilon \!\coloneqq\! \!\left\{\begin{array}{l}
			\hspace{-2mm}t \!\geq\! 1 \!: \exists a \in \cV_{\ahat^\star_t}\!\setminus\!\Set{a_{t+1},\ahat^\star_t} \textnormal{ s.t. }  \muhat_a(t) \!<\! \mu_a \!-\! \epsilon \textnormal{ and}      \\
			 \hspace{-2mm}    \log(N_{a_{t+1}}(t)) \!\leq\!  N_a(t) \KL(\muhat_a(t)| \mu_a \!-\! \epsilon)  \!+\! \log\!\left(N_{a}(t)\right) 
			\end{array} \hspace{-2mm} \right\}\!.
			\]
		}
~\\Then we prove below the following inclusion.		
\begin{lemma}[\label{lem:decomposition_T_epsilon} Relations between the subsets of times]
For $ 0 \!<\! \epsilon \!<\! \epsilon_\nu$,
\begin{equation}
    \cT_\epsilon^c\setminus\cE_\epsilon^c  \subset \Lambda_\epsilon  \,.
\label{eq:decomp_T_epsilon}
\end{equation}
\end{lemma}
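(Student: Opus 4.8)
The plan is to rewrite the left-hand side as $\cT_\epsilon^c \setminus \cE_\epsilon^c = \cT_\epsilon^c \cap \cE_\epsilon$ and to fix a time $t$ belonging to it; thus the means of every arm in $\Ahat^\star(t) \cup \Set{a_{t+1}}$ are $\epsilon$-accurate (because $t \in \cE_\epsilon$), while the conjunction defining $\cT_\epsilon$ fails. Recall from $\epsilon < \epsilon_\nu$ that any two distinct means are separated by at least $4\epsilon$. The first step I would carry out is to show that the leader is necessarily \emph{wrong}, i.e. $\ahat^\star_t \neq a^\star$. Indeed, if $\ahat^\star_t = a^\star$, then $a^\star \in \Ahat^\star(t)$, so $\cE_\epsilon$ makes $\muhat_{a^\star}(t)$ $\epsilon$-accurate; combined with the $4\epsilon$ separation this forces $\Ahat^\star(t) = \Set{a^\star}$ (any other empirical maximiser would have true mean within $2\epsilon$ of $\mu^\star$, a contradiction), and since $a_{t+1}$ is also $\epsilon$-accurate we would get $t \in \cT_\epsilon$, a contradiction. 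Hence $\ahat^\star_t \neq a^\star$.

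Next I would exhibit the witness arm required by $\Lambda_\epsilon$. Since $\ahat^\star_t$ is sub-optimal, the unimodal structure provides an increasing path from $\ahat^\star_t$ to $a^\star$, whose second vertex is a neighbour $a \in \cV_{\ahat^\star_t}$ with $\mu_a > \mu_{\ahat^\star_t}$, hence $\mu_a \geq \mu_{\ahat^\star_t} + 4\epsilon$. Because $\ahat^\star_t$ is an empirical maximiser, $\muhat_a(t) \leq \muhat^\star(t) = \muhat_{\ahat^\star_t}(t)$, and since $\ahat^\star_t \in \Ahat^\star(t)$ is $\epsilon$-accurate, $\muhat^\star(t) < \mu_{\ahat^\star_t} + \epsilon \leq \mu_a - 3\epsilon$. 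This yields at once $\muhat_a(t) \leq \muhat^\star(t) < \mu_a - \epsilon$, so $a$ is underestimated; it also shows $a \neq \ahat^\star_t$ and, comparing with the $\epsilon$-accuracy of $a_{t+1}$ under $\cE_\epsilon$, that $a \neq a_{t+1}$. Thus $a \in \cV_{\ahat^\star_t} \setminus \Set{a_{t+1}, \ahat^\star_t}$ with $\muhat_a(t) < \mu_a - \epsilon$, which is the first clause of $\Lambda_\epsilon$.

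It remains to verify the index inequality of $\Lambda_\epsilon$. Applying the empirical lower bound of Lemma~\ref{unimodal empirical lower bounds} to $a \in \cV_{\ahat^\star_t}$ gives $\log(N_{a_{t+1}}(t)) \leq N_a(t)\,\KLof{\muhat_a(t)}{\muhat^\star(t)} + \log(N_a(t))$. Since $\muhat_a(t) \leq \muhat^\star(t) < \mu_a - \epsilon$ and $\KLof{\muhat_a(t)}{\cdot}$ is nondecreasing in its second argument above $\muhat_a(t)$, I can replace $\muhat^\star(t)$ by the larger value $\mu_a - \epsilon$ to obtain $\log(N_{a_{t+1}}(t)) \leq N_a(t)\,\KLof{\muhat_a(t)}{\mu_a - \epsilon} + \log(N_a(t))$, which is exactly the second clause; therefore $t \in \Lambda_\epsilon$.

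The main obstacle I anticipate is the bookkeeping around the witness arm rather than any deep estimate: one must check simultaneously that $a$ differs from both $\ahat^\star_t$ and $a_{t+1}$, and that the chain $\muhat_a(t) \leq \muhat^\star(t) < \mu_a - \epsilon$ holds, because this single chain is what both certifies underestimation and licenses the monotonicity step on the Kullback-Leibler divergence. The only place the unimodal hypothesis truly enters is in producing a neighbour of a wrong leader with strictly larger mean; everything else is driven by the $4\epsilon$ separation coming from $\epsilon < \epsilon_\nu$ and by Lemma~\ref{unimodal empirical lower bounds}.
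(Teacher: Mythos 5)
Your proof is correct and takes essentially the same route as the paper's: rule out $\ahat^\star_t = a^\star$ using the $\epsilon$-accuracy on $\cE_\epsilon$ and the $4\epsilon$-separation from $\epsilon < \epsilon_\nu$, take as witness a neighbour of the wrong leader with strictly larger mean provided by unimodality, establish the chain $\muhat_a(t) \leq \muhat^\star(t) < \mu_a - \epsilon$, and combine Lemma~\ref{unimodal empirical lower bounds} with the monotonicity of the Kullback--Leibler divergence. The only difference is organizational: you show the leader is wrong by a direct contradiction, whereas the paper first proves $\Ahat^\star(t) = \Set{\ahat^\star_t}$ via the triangle inequality and then deduces $\ahat^\star_t \neq a^\star$ from $t \notin \cT_\epsilon$; the underlying estimates are identical.
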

\begin{proof}
Let us consider $ t  \!\in\! \cT_\epsilon^c\!\setminus\!\cE_\epsilon^c$. Since $ t\!\in\! \cE_\epsilon$ and $\epsilon \!<\! \epsilon_\nu$ we have
\[
\forall a \in \Ahat^\star(t)\cup\Set{a_{t+1}},\quad \abs{\muhat_{a}(t) - \mu_a} < \epsilon \,.
\]
By triangle inequality this implies, for all $ \ahat^\star \!\in\! \Ahat^\star(t)$,
\beqan
\abs{\mu_{\ahat^\star_t}\! \!-\! \mu_{\ahat^\star}\!}\!\!-\! 2\epsilon \hspace{-3mm}&\leq& \hspace{-3mm}\abs{\mu_{\ahat^\star_t}\! \!-\! \mu_{\ahat^\star}\!}\! \!-\! \abs{\mu_{\ahat^\star_t}\! \!-\! \muhat_{\ahat^\star_t}\!(t)\!}\! \!-\! \abs{\muhat_{\ahat^\star}\!(t) \!-\! \mu_{\ahat^\star}\!} \\
&\leq&  \hspace{-3mm}\abs{\muhat_{\ahat^\star_t}\!(t)\! - \!\muhat_{\ahat^\star_t}(t)\!} \!\!=\! 0
\eeqan
and
\[
 \Ahat^\star(t) = \Set{\ahat^\star_t}  \,.
\]
Thus, since $t \!\notin\!\cT_\epsilon$, we have $ \ahat^\star_t \!\neq\! a^\star$. In particular, since $(\mu_a)_{a \in \cA}$ is unimodal, there exists $ a \in \cV_{\ahat^\star_t} $ such that $\mu_a \!>\! \mu_{\ahat^\star_t} $.  From Lemma~\ref{unimodal empirical lower bounds} we have the following empirical lower bound
\[
\log\!\left(N_{a_{t+1}}(t)\right)  \leq N_{a}(t) \KLof{\muhat_{a}(t)}{\muhat^\star(t)} + \log\!\left(N_{a}(t)\right)\,.
\]
Furthermore, since $ t \!\in\! \cE_\epsilon$ and $\epsilon \!<\! \epsilon_\nu$, we have
\[
\muhat_{a}(t)  \leq \muhat^\star(t) = \muhat_{\ahat^\star_t}(t) < \mu_{\ahat^\star_t} + \epsilon < \mu_{a} - \epsilon \,.
\]
Since $\abs{\muhat_{a_{t+1}}(t) - \mu_{a_{t+1}}} \!<\! \epsilon$, it indicates in particular that $a \!\in\! \cV_{\ahat^\star_t}\!\setminus\!\Set{a_{t+1}, \ahat^\star_t} $. In addition, the monotony of the $\KL(\cdot|\cdot)$ implies 
\[
\KLof{\muhat_{a}(t)}{\muhat^\star(t)} \leq \KLof{\muhat_{a}(t)}{  \mu_{a} - \epsilon} \,.
\]
Therefore for such $t$  we have $\muhat_{a}(t)  \!<\! \mu_a \!-\! \epsilon$ and
\[\log\!\left(N_{a_{t+1}}(t)\right)  \leq N_{a}(t) \KLof{\muhat_{a}(t)}{ \mu_a - \epsilon} + \log\!\left(N_{a}(t)\right) \,,
\]
which concludes the proof.
\end{proof}
We can now resort to classical concentration arguments in order to control the size of these sets, which 
yields the following upper bounds. We defer the proof to Appendix~\ref{app:proof_ce_and_lambda_finite} as they follow standard arguments.
\begin{lemma}[Bounded subsets of times]For $ 0 \!<\! \epsilon \!<\! \epsilon_\nu$, 
 \[\Esp_\nu[\abs{\cE_\epsilon^c}] \leq \dfrac{10\abs{\cA}^2}{\epsilon^4} \hspace{5mm}		\Esp_\nu[\abs{\Lambda_\epsilon}] \leq 23d^2\abs{\cA}\dfrac{\log(1/\epsilon)}{\epsilon^6} \,,\]
 where $d$ is the maximum degree of nodes in $G$.
 \label{lem:ce_and_lambda_are_finite}
\end{lemma}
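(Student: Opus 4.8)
The plan is to bound the two quantities $\Esp_\nu[\abs{\cE_\epsilon^c}]$ and $\Esp_\nu[\abs{\Lambda_\epsilon}]$ separately by summing over local times and invoking scalar concentration for the one-dimensional exponential family $\cP$. The basic reduction is standard: for any fixed arm $a$, the empirical mean $\muhat_a(t)$ at the global time $t$ equals $\muhat_a^{N_a(t)}$, the local-time average defined in Section~\ref{imed_unimodal notations}, so a deviation event at global time $t$ can be re-indexed by the number of pulls $n=N_a(t)$. Since $N_a(t)$ increases by at most one per step, each value $n$ is attained for at most one global time, which lets me replace a sum over $t$ by a sum over $(a,n)$ and thereby kill the dependence on $T$ altogether.

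\medskip
First I would handle $\cE_\epsilon^c$. By a union bound over arms, $\abs{\cE_\epsilon^c}$ is at most $\sum_{a\in\cA}\abs{\{t\ge 1:\ a\in\Ahat^\star(t)\cup\{a_{t+1}\},\ \abs{\muhat_a(t)-\mu_a}\ge\epsilon\}}$; dropping the restriction on membership only enlarges the set, so each term is bounded by $\sum_{n\ge1}\ind_{\{\abs{\muhat_a^n-\mu_a}\ge\epsilon\}}$. Taking expectations and applying the sub-Gaussian (Gaussian case) or Chernoff--Hoeffding/$\kl$ (Bernoulli case) tail bound, $\Prob_\nu(\abs{\muhat_a^n-\mu_a}\ge\epsilon)\le 2e^{-cn\epsilon^2}$ for a universal constant, I would sum the geometric series $\sum_{n\ge1}2e^{-cn\epsilon^2}=\cO(\epsilon^{-2})$. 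Summing over the $\abs{\cA}$ arms and tracking constants carefully gives the claimed $10\abs{\cA}^2/\epsilon^4$ (the extra $\abs{\cA}$ and $\epsilon^{-2}$ slack absorbs the looseness of the tail constant and the two-sided bound).

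\medskip
The harder bound is on $\Lambda_\epsilon$, and this is where I expect the main obstacle. The event defining $\Lambda_\epsilon$ couples two quantities: an \emph{underestimation} $\muhat_a(t)<\mu_a-\epsilon$ of some arm $a\in\cV_{\ahat^\star_t}\setminus\{a_{t+1},\ahat^\star_t\}$, together with the strategy-induced inequality $\log(N_{a_{t+1}}(t))\le N_a(t)\,\KLof{\muhat_a(t)}{\mu_a-\epsilon}+\log(N_a(t))$. The difficulty is that this is not a single scalar deviation but a \emph{self-normalized} statement tying the running index of arm $a$ to the number of pulls of the chosen arm. The natural route is, for each candidate arm $a$, to reindex by $n=N_a(t)$ and to bound, uniformly over $n$, the probability that $\muhat_a^n<\mu_a-\epsilon$ with $n\,\KLof{\muhat_a^n}{\mu_a-\epsilon}$ small; the key analytic input is a deviation bound of Chernoff type controlling $n\,\KLof{\muhat_a^n}{\mu_a-\epsilon}$ from below when the empirical mean has crossed $\mu_a-\epsilon$. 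One then sums the resulting tail over $n\ge1$ and over the at most $d$ neighbours times $\abs{\cA}$ possible leaders, which produces the factor $d^2\abs{\cA}$; the $\log(1/\epsilon)\,\epsilon^{-6}$ behaviour arises because controlling the coupled index/count event costs an extra $\epsilon^{-2}$ and a logarithmic factor beyond the plain mean-deviation bound. I would finish by verifying that the constant $23$ dominates the accumulated universal constants in both the Gaussian and Bernoulli cases, reducing the Bernoulli estimate to the Gaussian one via Pinsker's inequality $\kl(\mu\,|\,\mu')\ge 2(\mu-\mu')^2$ where convenient.
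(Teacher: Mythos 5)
Your reduction rests on a false counting claim, and this is a genuine gap rather than a matter of constants. You assert that ``each value $n$ is attained for at most one global time,'' so that a sum over global times $t$ can be replaced by a sum over local counts $n=N_a(t)$. This holds only for the arm actually pulled at time $t+1$: for any other arm, $N_a(t)$ is constant over arbitrarily long stretches, so one local count $n$ is attained at arbitrarily many global times. In $\cE_\epsilon^c$ the deviating arm is typically a current best arm that is \emph{not} pulled, and in $\Lambda_\epsilon$ the underestimated arm $a\in\cV_{\ahat^\star_t}\!\setminus\!\Set{a_{t+1},\ahat^\star_t}$ is by definition never the pulled arm. Worse, once you ``drop the restriction on membership,'' the set $\Set{t\geq 1:\ \abs{\muhat_a(t)-\mu_a}\geq\epsilon}$ can be infinite with positive probability (an arm whose empirical mean has deviated and which is never pulled again contributes every subsequent $t$), so it is \emph{not} bounded by $\sum_{n\geq1}\ind_{\Set{\abs{\muhat_a^n-\mu_a}\geq\epsilon}}$ --- the inequality you need goes in the wrong direction. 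The membership restriction is precisely what makes the lemma true, and it must be combined with the strategy-based inequality of Lemma~\ref{unimodal empirical lower bounds}, $N_{a_{t+1}}(t)\leq N_{\ahat^\star}(t)$ for every $\ahat^\star\in\Ahat^\star(t)$: this is what lets the paper index bad times by the pulled arm (whose local counts are attained at most once) while the deviating arm $a'$ satisfies $N_{a'}(t)\geq N_{a_{t+1}}(t)$. The residual multiplicity --- for a fixed deviation count $m$ of arm $a'$ there are up to $m+1$ admissible local counts $n$ of the pulled arm --- is exactly why the correct per-pair bound (Lemma~\ref{unimodal concentration}) is $\sum_{m\geq1} 2m\,\e^{-m\epsilon^2/2}=\cO(\epsilon^{-4})$ and not your claimed $\cO(\epsilon^{-2})$ per arm; the ``slack absorbs the constants'' remark papers over a quantity of a genuinely different order.

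The same gap is fatal to your treatment of $\Lambda_\epsilon$. Since the underestimated arm $a$ is not pulled, reindexing by $n=N_a(t)$ controls nothing by itself; you need a device bounding how many global times share the same $n$. The paper's mechanism is to rewrite the defining inequality of $\Lambda_\epsilon$ as $N_{a_{t+1}}(t)\leq N_a(t)\exp\!\left(N_a(t)\KLof{\muhat_a(t)}{\mu_a-\epsilon}\right)$ and to observe that, for a fixed pulled arm $a'$ and fixed $n$, the times with $a_{t+1}=a'$ and $N_{a'}(t)\leq n\exp\!\left(n\KLof{\muhat_a^n}{\mu_a-\epsilon}\right)$ number at most $n\exp\!\left(n\KLof{\muhat_a^n}{\mu_a-\epsilon}\right)$, because $N_{a'}$ increments at each such time. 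The object to control is therefore not a probability ``uniformly over $n$'' but the expectation of the exponentially weighted random variable $\sum_{n\geq1}\ind_{\Set{\muhat_a^n<\mu_a-\epsilon}}\,n\exp\!\left(n\KLof{\muhat_a^n}{\mu_a-\epsilon}\right)$, which Lemma~\ref{unimodal large deviation} bounds by $23\log(1/\epsilon)/\epsilon^6$ via an inverse-Chernoff computation: the weight $\e^{n\KL}$ is nearly cancelled by the probability $\e^{-n\KL}$ of the underestimation, leaving a summable polynomial tail. Your sketch contains no substitute for this counting step, and without it the sum over recurring global times is simply unbounded.
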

Thus combining them with \eqref{eq:decomp_T_epsilon} we obtain
\beqan
\Esp_\nu[\abs{\cT_\epsilon^c}] 
&\leq& \Esp_\nu[\abs{\cE_\epsilon^c}] + \Esp_\nu[\abs{\Lambda_\epsilon}] \\
&\leq& \dfrac{10\abs{\cA}^2}{\epsilon^4} + 23d^2\abs{\cA}\dfrac{\log(1/\epsilon)}{\epsilon^6} \\
&\leq& 33d\abs{\cA}^2\dfrac{\log(1/\epsilon)}{\epsilon^6} \,.
\eeqan
Hence, we just proved the following lemma.
\begin{lemma}[Reliable estimators] \label{unimodal reliability}For $ 0 \!<\! \epsilon \!<\! \epsilon_\nu$,
\[
\Esp_\nu[\abs{\cT_\epsilon^{ c}}] \leq 33d\abs{\cA}^2\dfrac{\log(1/\epsilon)}{\epsilon^6} \,,
\]
where $d$ is the maximum degree of nodes in $G$.
\end{lemma}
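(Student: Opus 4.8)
The plan is to derive Lemma~\ref{unimodal reliability} as an immediate consequence of the two facts already in hand: the set inclusion of Lemma~\ref{lem:decomposition_T_epsilon} and the expected-cardinality bounds of Lemma~\ref{lem:ce_and_lambda_are_finite}. First I would split the bad set according to whether the means are well estimated, writing $\cT_\epsilon^c = (\cT_\epsilon^c \cap \cE_\epsilon^c) \cup (\cT_\epsilon^c \setminus \cE_\epsilon^c)$. The first piece is trivially contained in $\cE_\epsilon^c$, while the second piece is contained in $\Lambda_\epsilon$ by the inclusion~\eqref{eq:decomp_T_epsilon}. Since the cardinality of a union is at most the sum of the cardinalities, this gives the pointwise bound $\abs{\cT_\epsilon^c} \leq \abs{\cE_\epsilon^c} + \abs{\Lambda_\epsilon}$.

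Next I would take expectations under $\nu$ and invoke Lemma~\ref{lem:ce_and_lambda_are_finite} to replace each term by its explicit estimate, obtaining
\[
\Esp_\nu[\abs{\cT_\epsilon^c}] \leq \dfrac{10\abs{\cA}^2}{\epsilon^4} + 23d^2\abs{\cA}\dfrac{\log(1/\epsilon)}{\epsilon^6}\,.
\]
It then remains only to consolidate these two heterogeneous terms into the single clean expression $33d\abs{\cA}^2\log(1/\epsilon)/\epsilon^6$. This is where a little care is required. I would use that the graph $G$ is connected so that $d \geq 1$, that the maximum degree satisfies $d \leq \abs{\cA}$, and that the constraint $\epsilon < \epsilon_\nu$ keeps $\epsilon$ bounded away from $1$ (recall $\epsilon_\nu$ is at most a quarter of the smallest mean gap), so that $\epsilon^2 \leq \log(1/\epsilon)$. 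The first observation upgrades the coefficient $10\abs{\cA}^2$ to $10d\abs{\cA}^2$; the second upgrades $23d^2\abs{\cA}$ to $23d\abs{\cA}^2$; and the third lets me absorb the $1/\epsilon^4$ term into a $\log(1/\epsilon)/\epsilon^6$ term. Summing $10+23 = 33$ then closes the bound.

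The genuinely substantive work lives in the two lemmas I am assuming rather than in this assembly: the combinatorial argument behind Lemma~\ref{lem:decomposition_T_epsilon}, which uses the unimodal structure together with the \IMEDUB empirical lower bound to force a neighbour of the misidentified leader to be underestimated, and the concentration estimates of Lemma~\ref{lem:ce_and_lambda_are_finite} controlling $\Esp_\nu[\abs{\cE_\epsilon^c}]$ and $\Esp_\nu[\abs{\Lambda_\epsilon}]$. Granting those, the present statement is essentially bookkeeping; the only point at which one could stumble is the final numerical collapse, where one must verify that the inequality $\epsilon^2 \leq \log(1/\epsilon)$ is actually licensed by $\epsilon < \epsilon_\nu$ rather than silently assumed.
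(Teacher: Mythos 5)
Your proposal is correct and follows exactly the same route as the paper: the inclusion $\cT_\epsilon^c \subset \cE_\epsilon^c \cup \Lambda_\epsilon$ from Lemma~\ref{lem:decomposition_T_epsilon}, expectations bounded via Lemma~\ref{lem:ce_and_lambda_are_finite}, and the numerical consolidation of $\frac{10\abs{\cA}^2}{\epsilon^4} + 23d^2\abs{\cA}\frac{\log(1/\epsilon)}{\epsilon^6}$ into $33d\abs{\cA}^2\frac{\log(1/\epsilon)}{\epsilon^6}$. If anything, you justify the final collapse more explicitly than the paper does (via $d\geq 1$, $d\leq\abs{\cA}$, and $\epsilon^2\leq\log(1/\epsilon)$, the last of which indeed needs $\epsilon$ bounded away from $1$), whereas the paper asserts that last inequality without comment.
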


	\subsection{\label{subsec: proof theorem}Upper bounds on the numbers of pulls of sub-optimal arms}
	In this section, we now combine the different results of the previous sections to prove Theorem~\ref{th:upper bounds}.

	\begin{proof}[Proof of Theorem~\ref{th:upper bounds}.] From Lemma~\ref{unimodal reliability}, considering the following subset of times
	\[
		\cT_\epsilon \coloneqq \left\{\begin{array}{l}
		    \hspace{-2mm} t \geq 1:\  \Ahat^\star(t) = \Set{a^\star}  \\
		     \hspace{10mm}\forall a \in \Set{ a^\star,a_{t+1} },\ \abs{\muhat_a(t) - \mu_a } < \epsilon 
		\end{array} \! \right\}\,. 
		\]
		we have 
		\[
		\Esp_\nu[\abs{\cT_\epsilon^{ c}}] \leq 33d\abs{\cA}^2\dfrac{\log(1/\epsilon)}{\epsilon^6} \,,
		\]
		$ \abs{\cA} \!=\! 11  $ $ \abs{\cA} \!=\! 10^2  $ $ \abs{\cA} \!=\! 10^3  $ $ \abs{\cA} \!=\! 10^4  $ where $d$ is the maximum degree of nodes in $G$.
		Then, let us consider $ a \!\neq\! a^\star$ and a time step $t \!\in\! \cT_\epsilon$ such that $a_{t+1} \!=\! a$. From Lemma~\ref{unimodal empirical upper bounds} we get
		\[
		N_a(t) \KLof{\muhat_a(t)}{\muhat^\star(t)} \leq \log(t) \leq \log(T) \,.
		\]
		Furthermore, since $ t \!\in\! \cT_\epsilon $, we have
		\[
		\ahat^\star_t = a^\star \ad \abs{\muhat_a(t) - \mu_a}, \abs{\muhat_{a^\star}(t) - \mu_{a^\star}} < \epsilon \,.
		\]
		According to the strategy $a \!=\! a_{t+1} \!\in\! \cV_{a^\star} $ and by construction of $\alpha_\nu(\cdot)$ (see Section~\ref{imed_unimodal notations} \!Notations)
		\[
		\KLof{\muhat_a(t)}{\muhat^\star(t)} = \KLof{\muhat_a(t)}{\muhat_{a^\star}(t)} \geq \dfrac{\KLof{\mu_a}{\mu_{a^\star}}}{1 + \alpha_\nu(\epsilon)} \] and \[N_{a}(t) \leq \dfrac{1 + \alpha_\nu(\epsilon)}{\KLof{\mu_a}{\mu_{a^\star}}} \log(T)  \,.
		\]
		
		~\\ Thus, we have shown that for $a \!\neq\! a^\star$,
		\[
		\forall t \in \cT_\epsilon \textnormal{ s.t. } a_{t+1} = a:\ a \in \cV_{a^\star} \] and \[  N_{a}(t) \leq \dfrac{1 + \alpha_\nu(\epsilon)}{\KLof{\mu_a}{\mu_{a^\star}}} \log(T) \,.
		\]
		This implies:
		\[
		N_{a}(T) \!\leq\! \left\{\hspace{-2mm}\begin{array}{ll}
		\dfrac{1 \!+\! \alpha_\nu(\epsilon)}{\KLof{\mu_a}{\mu_{a^\star}}} \log(T) \!+\! \abs{\cT_\epsilon^c} \!+\! 1 &\hspace{-3mm}, \textnormal{ if } a \in \cV_{a^\star} \\
		\abs{\cT_\epsilon^c} \!+\! 1 &\hspace{-3mm}, \textnormal{ otherwise}.
		\end{array}\right. 
		\]
		Averaging these inequalities allows us to conclude.
	\end{proof}

	\section{Numerical experiments}
	In this section, we consider Gaussian distributions with variance $\sigma^2 \!=\!1$ and compare empirically the following 
	strategies introduced beforehand:\OSUB described in 
	Algorithm~\ref{alg:osub}, \IMEDUB, \dIMEDUB described in 
	Algorithms~\ref{alg:imedub},\ref{alg:dimedub}, \KLUCBUB 
	described in Algorithm~\ref{alg:wucbub} as well as the 
	baseline \IMED by \citet{honda2011asymptotically} that does 
	not exploit the structure and finally the generic \OSSB  
	strategy by \citet{combes2017minimal} that adapts to several 
	structures. We compare these strategies on two setups.

\paragraph{Fixed configuration}(Figure~\ref{fig:fixed configurations}). For the first experiments  we consider a small number of arms $\abs{\cA}\!=\!11$ and investigate these strategies over $500$ runs on  \emph{fixed} Gaussian configuration $\nu^0 \!\in\! \cD$ with means $\left(\mu^0_a\right)_{a \in \cA} \!=\! \left(0, 0.2, 0.4, 0.6, 0.8, 1, 0.8, 0.6, 0.4, 0.2, 0\right)$.

\paragraph{ Random configurations }(Figure~\ref{fig:random configurations} ). In this experiment we consider larger numbers of arms 
$\abs{\cA} \!\in\!\Set{10^2,10^3, 10^4}$
and average regrets over $500$ random Gaussian configurations uniformly sampled in $\Set{\nu \!\in\! \cD\!: (\mu_a)_{a \in\cA}\in[0,1]^\cA}$. 

It seems that for a small number of arms \IMEDUB and \KLUCBUB perform better than the baseline \IMED whereas \OSSB performs very poorly for unimodal structure (this may be the price its genericity). Both \IMEDUB and \KLUCBUB outperform \OSUB significantly. When the set of arms becomes larger, only \dIMEDUB benefits from the unimodal structure and outperforms the baseline \IMED.
\begin{figure}[H]
    \centering
    \includegraphics[width=0.45\textwidth]{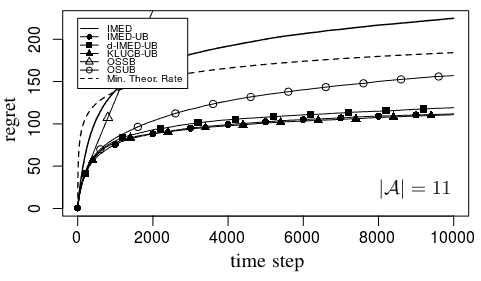}
  \caption{Regret approximated over $500$ runs for $\nu_0\!\in\!\cD$.}
 \label{fig:fixed configurations}
\end{figure}
\begin{remark}
  It is generally observed in bandit problems that theoretical asymptotic lower bounds on the regret are larger than the actual regret in finite horizon, as is it in Figure~\ref{fig:fixed configurations}.
\end{remark}
\begin{figure}[H]
    \centering
    \includegraphics[width=0.45\textwidth]{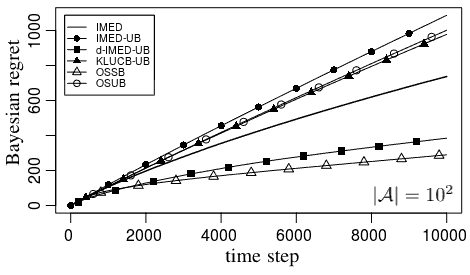}\\
      \includegraphics[width=0.45\textwidth]{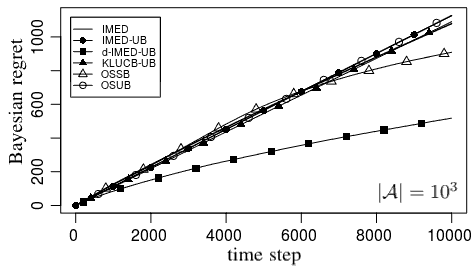}\\
      \includegraphics[width=0.45\textwidth]{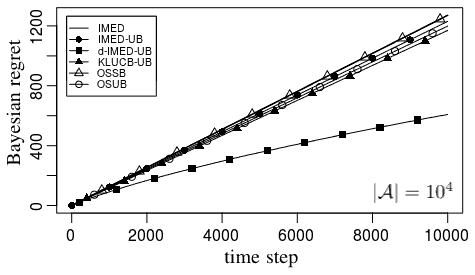}
  \caption{Regret averaged over $500$ random configurations in $\cD$.}
 \label{fig:random configurations}
\end{figure}
	

	\section*{Conclusion}
	
	In this paper, we have revisited the setup of unimodal multi-armed bandits: We introduced three novel variants, two based on the \IMED strategy and a second one using a \KLUCB type index but modified using tools similar to \IMED. These strategies do not require forcing to play specific arms (unlike for instance \OSUB) on top of the naturally introduced score. Remarkably, the \IMEDUB and \dIMEDUB strategies  do not require any optimization procedure, which can be interesting for practitioners. We also provided a novel proof strategy (inspired from \IMED), in which we make explicit empirical lower and upper bounds, before tackling the handling of bad events by more standard concentration tools. This proof technique greatly simplifies and shorten the analysis of \IMEDUB (compared to that of \OSUB), and is also employed to analyze  \KLUCBUB and \dIMEDUB, in a somewhat unified way. 
	Last, we provided numerical experiments that show the practical advantages of the novel approach over the \OSUB strategy.



\newpage

\appendix

\section{\IMEDUB finite time analysis}
	\label{app: imed_analysis}
	We regroup in this section, for completeness, the proofs of the remaining lemmas used in the analysis of \IMEDUB in Section~\ref{sec : imed_analysis}.
	
	\subsection{Concentration lemmas}
	We state two concentration lemmas that do not depend on the followed strategy. Lemma~\ref{unimodal concentration} comes from Lemma B.1 in \citet{combes2014unimodal} and Lemma~\ref{unimodal large deviation} comes from Lemma 14 in \citet{honda2015imed}. Proofs are provided in Appendix~\ref{app: concentration_lemmas}.
	\label{subsec : imed_concentration}
	\begin{lemma}[Concentration inequalities]\label{unimodal concentration} Independently of the considered strategy, for all set of Gaussian or Bernoulli distributions $\nu \!\in\! \cD $, for all $  0 \!<\! \epsilon \!\leq\! 1/2 $, for all $a, a' \!\in\! \cA$, we have
		\[
		\Esp_\nu\!\left[\sum\limits_{t \geq 1}\ind_{\Set{a_{t+1}=a,\ N_{a'}(t) \geq   N_{a}(t),\  \abs{\muhat_{a'}(t) - \mu_{a'}} \geq \epsilon }} \right] \leq \dfrac{10}{\epsilon^4} \,.
		\]
	\end{lemma}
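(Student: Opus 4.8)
The plan is to pass from the sum over time steps to a sum over the \emph{local times} (successive pulls) of arm $a$, to exploit the constraint $N_{a'}(t) \geq N_a(t)$ in order to control how many times each fixed local-time empirical mean $\muhat_{a'}^m$ can be charged, and then to close with a standard deviation inequality for i.i.d.\ averages. First I would reindex: the event $\Set{a_{t+1} = a}$ occurs precisely at the times $t = \tau_{a,n} - 1$, $n \geq 1$, where $N_a(t) = n-1$. Writing $t_n := \tau_{a,n} - 1$ and $m_n := N_{a'}(t_n)$, the constraint $N_{a'}(t_n) \geq N_a(t_n)$ reads $m_n \geq n - 1$, i.e.\ $n \leq m_n + 1$, while $\muhat_{a'}(t_n) = \muhat_{a'}^{m_n}$. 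Hence the random sum in the statement equals $\sum_{n \geq 1}\ind\Set{n \leq m_n + 1,\ \abs{\muhat_{a'}^{m_n} - \mu_{a'}} \geq \epsilon}$.

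The crux is then an elementary counting argument. Expanding $m_n$ and grouping the terms according to the common value $m := m_n = N_{a'}(t_n)$, only indices with $n \leq m + 1$ contribute to a given $m$, so at most $m+1$ of them do; since the event $\Set{\abs{\muhat_{a'}^m - \mu_{a'}} \geq \epsilon}$ depends on $m$ alone, this yields the pathwise bound
\[
\sum_{t \geq 1}\ind\Set{a_{t+1}=a,\ N_{a'}(t) \geq N_a(t),\ \abs{\muhat_{a'}(t) - \mu_{a'}} \geq \epsilon} \leq \sum_{m \geq 0}(m+1)\,\ind\Set{\abs{\muhat_{a'}^m - \mu_{a'}} \geq \epsilon}\,.
\]
It is exactly the hypothesis $N_{a'}(t) \geq N_a(t)$ that caps the multiplicity of each bad empirical mean at $m+1$; this extra linear factor is what turns the usual $\epsilon^{-2}$ rate into the $\epsilon^{-4}$ rate appearing in the statement.

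It remains to take expectations. Viewing the rewards through the standard per-arm i.i.d.\ reward tape, $\muhat_{a'}^m$ is the average of $m$ i.i.d.\ samples from $\nu_{a'}$, so a Chernoff/Hoeffding bound gives $\mathbb{P}_\nu\big(\abs{\muhat_{a'}^m - \mu_{a'}} \geq \epsilon\big) \leq 2 e^{-c m \epsilon^2}$, with $c = 1/2$ for unit-variance Gaussians and $c = 2$ for Bernoulli. Summing, $\sum_{m \geq 0}(m+1)\,2 e^{-c m \epsilon^2} = 2(1 - e^{-c\epsilon^2})^{-2}$, and the elementary estimate $1 - e^{-c\epsilon^2} \geq \tfrac{15}{16}c\epsilon^2$, valid for $0 < \epsilon \leq 1/2$, bounds the right-hand side by $10/\epsilon^4$, the Gaussian case $c = 1/2$ being the binding one.

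There is no deep obstacle here; the real work is the bookkeeping in the first two steps. The point that must be stated with care is that the random bound $N_{a'}(t) \geq N_a(t)$ translates into the deterministic cap $n \leq m_n + 1$ on the admissible pull-counts of $a$, so that each local-time mean $\muhat_{a'}^m$ is charged at most $m+1$ times \emph{regardless of the strategy}; this is precisely what makes the bound strategy-independent, as required. Once this reindexing is secured, the concentration step and the geometric-type series estimate are routine, and the worst-case constant computation above shows that $10/\epsilon^4$ suffices uniformly over both the Gaussian and Bernoulli families.
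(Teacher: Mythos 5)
Your proposal is correct and follows essentially the same route as the paper's proof: reindex the sum via the stopping times $\tau_{a,n}$ of arm $a$, use the constraint $N_{a'}(t) \geq N_{a}(t)$ to charge each local-time mean $\muhat_{a'}^m$ at most $m+1$ times, then conclude with Hoeffding's inequality and a geometric-type series, which is exactly the paper's argument (the paper counts $m$ rather than $m+1$ per value of $m$ and bounds both families at once via $\max(2e^{-2m\epsilon^2}, 2e^{-m\epsilon^2/2}) = 2e^{-m\epsilon^2/2}$). One harmless quibble: your estimate $1 - e^{-c\epsilon^2} \geq \tfrac{15}{16}c\epsilon^2$ actually fails for the Bernoulli constant $c=2$ near $\epsilon = 1/2$, but since the Gaussian case $c = 1/2$ is, as you say, the binding one and the Bernoulli case goes through with a weaker constant (e.g.\ $\tfrac{3}{4}$), the final bound $10/\epsilon^4$ is unaffected.
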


	\begin{lemma}[Large deviation probabilities]\label{unimodal large deviation} Let us consider a set of Gaussian or Bernoulli distributions $\nu \!\in\! \cD $. Let $  0 \!<\! \epsilon \!\leq\! \min(1\!-\!\mu^\star,1/2) $ and $a \!\in\! \cA$. Let $\lambda \!=\! \mu_a \!-\! \epsilon$. Then, independently of the considered strategy, we have
		\[ 
		\Esp_\nu\!\left[\sum\limits_{n \geq 1}\ind_{\Set{ \muhat_{a}^{n}  < \lambda}}n\exp\!\left(n \KLof{\muhat_{a}^{n}}{\lambda}\right)\right] \leq \dfrac{23\log(1/\epsilon)}{\epsilon^6}  \,.
		\]
	\end{lemma}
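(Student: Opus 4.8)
The plan is to exploit the fact that, since the sum is indexed by the \emph{local} time $n$ through the empirical means $\muhat_a^n = \frac1n\sum_{m=1}^n X_{\tau_{a,m}}$, the samples $(X_{\tau_{a,m}})_{m\geq1}$ are i.i.d.\ with law $\nu_a=p(\mu_a)$ regardless of how the strategy interleaves pulls of arm $a$; this is precisely why the bound holds "independently of the considered strategy". Consequently $\muhat_a^n$ is the average of $n$ i.i.d.\ draws from $p(\mu_a)$, and since every summand is non-negative, Tonelli lets me exchange expectation and sum, so it suffices to bound each term $b_n \coloneqq \Esp_\nu\!\big[\ind_{\Set{\muhat_a^n < \lambda}}\, n\, e^{n\KLof{\muhat_a^n}{\lambda}}\big]$ and then control $\sum_{n\geq1} b_n$.

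The heart of the argument is a per-term estimate coming from an exponential-family identity. Writing $\theta_m$ for the natural parameter associated to mean $m$ and using the Bregman form of the divergence, one obtains for every admissible value $x$ the exact identity
\[
\KLof{x}{\lambda} - \KLof{x}{\mu_a} = -\,\KLof{\lambda}{\mu_a} + (\theta_{\mu_a} - \theta_\lambda)\,(x - \lambda)\,.
\]
Because $\lambda = \mu_a - \epsilon < \mu_a$ forces $\theta_{\mu_a} - \theta_\lambda > 0$, on the event $\Set{\muhat_a^n < \lambda}$ this difference is at most $-\KLof{\lambda}{\mu_a}$ and, crucially, decays geometrically in $\lambda - \muhat_a^n$. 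I would then combine this with the standard upper bound on the probability that the empirical mean is near $x$, namely of order $e^{-n\KLof{x}{\mu_a}}$: for Gaussians this is an exact Gaussian integral, while for Bernoulli the elementary inequality $\binom{n}{k}(k/n)^k(1-k/n)^{n-k}\leq1$ gives $\Prob_{\mu_a}(\muhat_a^n = k/n)\leq e^{-n\klof{k/n}{\mu_a}}$. The factor $e^{n\KLof{\muhat_a^n}{\lambda}}$ thus cancels the dominant $e^{-n\KLof{\muhat_a^n}{\mu_a}}$, and summing the resulting geometric series over the grid $\Set{\muhat_a^n < \lambda}$ yields a bound of the shape $b_n \leq C(\epsilon)\, n\, e^{-n\KLof{\lambda}{\mu_a}}$, with $C(\epsilon)$ of order $1/(1-e^{-(\theta_{\mu_a}-\theta_\lambda)})$.

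Summing the geometric-type series $\sum_n n\, e^{-n\KLof{\lambda}{\mu_a}}$ produces a negative power of $\KLof{\lambda}{\mu_a}$. In the Gaussian case $\KLof{\lambda}{\mu_a}=\epsilon^2/2$ exactly and every step is explicit, giving a clean $O(\epsilon^{-4})$ bound. For Bernoulli I would lower-bound $\KLof{\lambda}{\mu_a}=\klof{\mu_a-\epsilon}{\mu_a}$ and the gap $\theta_{\mu_a}-\theta_\lambda$ in terms of $\epsilon$ and the local variance $\mu_a(1-\mu_a)$, using $\epsilon\leq\min(1-\mu^\star,1/2)$ to keep $\lambda$ in the interior of $(0,1)$, and collect the resulting powers of $1/\epsilon$ to reach $23\log(1/\epsilon)/\epsilon^6$, uniformly over $a$ and over admissible $\mu_a$.

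I expect the main obstacle to be exactly the Bernoulli case \emph{uniformly} in $\mu_a$. The divergence $\klof{\mu_a-\epsilon}{\mu_a}$ behaves like $\epsilon^2/(2\mu_a(1-\mu_a))$ in the interior but only like $\epsilon$ near the boundary, where the variance degenerates, so the local-CLT/Stirling constants, the natural-parameter gap, and the divergence must all be tracked simultaneously across the two regimes; it is this uniform bounding (together with the discreteness of the binomial and the contribution of very small values of $\muhat_a^n$) that forces the crude extra factor $\epsilon^{-6}$ and the logarithmic factor $\log(1/\epsilon)$ rather than the sharper power one reads off from the Gaussian computation. Carrying the explicit constant $23$ through these estimates is the delicate bookkeeping step; otherwise the argument follows that of Lemma~14 in \citet{honda2015imed}.
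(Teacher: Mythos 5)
Your proposal is correct, and it takes a genuinely different route from the paper's proof. The paper bounds each term via the tail-integral representation $\Esp_\nu[Y]=\int_0^\infty\Pr_\nu\left(Y>x\right)\mathrm{d}x$ applied to $Y=\ind_{\Set{\muhat_a^n<\lambda}}\,n\,\e^{n\KLof{\muhat_a^n}{\lambda}}$, followed by a change of variables, Chernoff bounds on the resulting tail probabilities, and, in the Bernoulli case, an auxiliary comparison lemma ($\klof{v}{\mu_a}-\klof{v}{\lambda}\geq(\mu_a-\lambda)^2/2$, proved via convexity and Pinsker). You instead work pointwise with the exact tilting identity $\KLof{x}{\lambda}-\KLof{x}{\mu_a}=-\KLof{\lambda}{\mu_a}+(\theta_{\mu_a}-\theta_\lambda)(x-\lambda)$, which is indeed valid for one-parameter exponential families, combined with the exact Gaussian density, respectively the method-of-types bound $\Pr_\nu\!\left(\muhat_a^n=k/n\right)\leq\e^{-n\klof{k/n}{\mu_a}}$, and two geometric summations (over the grid, then over $n$). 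This route works and is in fact tighter than the paper's: for Bernoulli the only uniform estimates you need are Pinsker, $\KLof{\lambda}{\mu_a}\geq 2\epsilon^2$, and the logit-derivative bound $\theta_{\mu_a}-\theta_\lambda=\int_\lambda^{\mu_a}\frac{\mathrm{d}u}{u(1-u)}\geq 4\epsilon$; together with $\sum_{n\geq1}n\e^{-nr}\leq r^{-2}$ and $1-\e^{-s}\geq s/(1+s)$ they yield roughly $\frac{3}{16}\epsilon^{-5}$ (and about $\frac{4}{\sqrt{2\pi}}\epsilon^{-5}$ in the Gaussian case), which is below $23\log(1/\epsilon)\epsilon^{-6}$ for every $\epsilon\leq 1/2$. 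Note that your anticipated ``main obstacle'' largely dissolves: no Stirling or local-CLT constant is needed since the pmf bound above is exact, and the boundary regime helps rather than hurts, because both $\KLof{\lambda}{\mu_a}$ and $\theta_{\mu_a}-\theta_\lambda$ only increase as $\mu_a$ approaches the boundary (the constraint $\epsilon\leq 1-\mu^\star$ keeps $\lambda<\mu_a\leq 1-\epsilon$, and if $\lambda\leq 0$ the sum is empty). The $\log(1/\epsilon)\epsilon^{-6}$ shape of the stated bound is an artifact of the paper's route --- there it arises from the factor $\klof{0}{\lambda}\leq\log(1/\epsilon)$ and a $\sum_{n}n^2\e^{-n\epsilon^2/2}$ summation --- so your argument need not reproduce it; it suffices that your constant $C$ satisfies $C\epsilon\leq 23\log(1/\epsilon)$, which it does comfortably. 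What the paper's approach buys in exchange is that it only ever manipulates tail probabilities, never the discrete binomial grid or the natural parametrization, which makes it marginally more portable to settings where pointwise density or pmf bounds are awkward.
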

	
\subsection{Proof of Lemma~\ref{lem:ce_and_lambda_are_finite} (Bounded subsets of times)}
\label{app:proof_ce_and_lambda_finite}
Using Lemma~\ref{unimodal empirical lower bounds} we have
		\[
		\forall t \geq 1, \quad N_{a_{t+1}}(t) \leq N_{\ahat^\star_t}(t) \,.
		\]
		Since $ \ahat^\star_t \!\in\!\argmin\limits_{\ahat^\star \in \Ahat^\star(t)}N_{\ahat^\star}(t)$, this implies
		\[
			\forall t \geq 1,\forall \ahat^\star \in \Ahat^\star(t),  \quad N_{a_{t+1}}(t) \leq N_{\ahat^\star_t}(t) \leq N_{\ahat^\star}(t) \,.
		\]
		Then, based on the concentration inequalities from Lemma~\ref{unimodal concentration},  we obtain
		\beqan 
		\Esp_\nu[\abs{\cE_\epsilon^c}] 
		&\leq& \sum\limits_{a, a' \in \cA} \Esp_\nu\!\left[\sum\limits_{t\geq1}{\ind_{\Set{a_{t+1}=a,\ N_{a'}(t) \geq  N_{a}(t),\  \abs{\muhat_{a'}(t) - \mu_{a'}} \geq \epsilon }}}\right] \\
		&\leq & \sum\limits_{a, a' \in \cA} \dfrac{10}{\epsilon^4} \\
		&\leq&  \dfrac{10\abs{\cA}^2}{\epsilon^4} \,.
		\eeqan
Furthermore, for $t \!\geq\! 1$ and $a \!\in\! \cA$, we have
		{\small
			\[
			\log\!\left(N_{a_{t+1}}(t)\right) \leq  N_a(t) \KLof{\muhat_a(t)}{\lambda_a}  + \log\!\left(N_{a}(t)\right) \quad \Leftrightarrow \quad  N_{a_{t+1}}(t)  \leq   N_a(t) \exp\!\left(N_a(t) \KLof{\muhat_a(t)}{\lambda_a}\right) \,,
			\]
		}
		~\\where $\lambda_a \!=\! \mu_a \!-\! \epsilon $ for all arm $a \!\in\! \cA$.
		~\\Thus, we have
		{\footnotesize
			\beqan  \abs{\Lambda_\epsilon}
			&\leq&\sum\limits_{t \geq 1}\sum\limits_{ a \in \cV_{\ahat^\star_t}\!\setminus\!\Set{a_{t+1},\ahat^\star_t}}  \ind_{\Set{  \muhat_a(t) < \lambda_a \textnormal{ and } N_{a_{t+1}}(t) \leq  N_a(t) \exp\left(N_a(t) \KL(\muhat_a(t)|\lambda_a)\right)  }} \\
			&=& \sum\limits_{t \geq 1}\sum\limits_{\ahat^\star\in\cA}\sum\limits_{ a' \in \Set{\ahat^\star}\cup\cV_{\ahat^\star}}\sum\limits_{ a \in \cV_{\ahat^\star}\!\setminus\!\Set{a',\ahat^\star}}\sum\limits_{n \geq 1} \ind_{\Set{ \ahat^\star_t = \ahat^\star, a_{t+1} = a', N_a(t) = n }} \ind_{\Set{  \muhat_a^n < \lambda_a,\ N_{a'}(t) \leq n \exp\left(n \KL(\muhat_a^n|\lambda_a)\right)  }} \\
			&\leq& \sum\limits_{t \geq 1}\sum\limits_{\ahat^\star\in\cA}\sum\limits_{ a' \in \Set{\ahat^\star}\cup\cV_{\ahat^\star}}\sum\limits_{ a \in \cV_{\ahat^\star}\!\setminus\!\Set{a',\ahat^\star}}\sum\limits_{n \geq 1} \ind_{\Set{ a_{t+1} = a' }} \ind_{\Set{  \muhat_a^n < \lambda_a }} \ind_{\Set{ N_{a'}(t) \leq n \exp\left(n \KL(\muhat_a^n|\lambda_a)\right)  }} \\
			&=& \sum\limits_{\ahat^\star\in\cA}\sum\limits_{ a' \in \Set{\ahat^\star}\cup\cV_{\ahat^\star}}\sum\limits_{ a \in \cV_{\ahat^\star}\!\setminus\!\Set{a',\ahat^\star}}\sum\limits_{n \geq 1}\ind_{\Set{  \muhat_a^n < \lambda_a }} \sum\limits_{t \geq 1}   \ind_{\Set{ a_{t+1} = a' \textnormal{ and } N_{a'}(t) \leq n \exp\left(n \KL(\muhat_a^n|\lambda_a)\right)  }} \\
			&\leq& \sum\limits_{\ahat^\star\in\cA}\sum\limits_{ a' \in \Set{\ahat^\star}\cup\cV_{\ahat^\star}}\sum\limits_{ a \in \cV_{\ahat^\star}\!\setminus\!\Set{a',\ahat^\star}}\sum\limits_{n \geq 1}\ind_{\Set{  \muhat_a^n < \lambda_a }} n \exp\!\left(n \KLof{\muhat_a^n}{\lambda_a}\right) \\
			&\leq&  \sum\limits_{\ahat^\star\in\cA}\sum\limits_{ a' \in \Set{\ahat^\star}\cup\cV_{\ahat^\star}}\sum\limits_{ a \in \cV_{\ahat^\star}\!\setminus\!\Set{a',\ahat^\star}}\sum\limits_{n \geq 1}\ind_{\Set{  \muhat_a^n < \lambda_a }} n \exp\!\left(n \KLof{\muhat_a^n}{\lambda_a}\right) 
			\eeqan
		}
		and
		\[
		\Esp_\nu[\abs{\Lambda_\epsilon}] \leq  \sum\limits_{\ahat^\star\in\cA}\sum\limits_{ a' \in \Set{\ahat^\star}\cup\cV_{\ahat^\star}}\sum\limits_{ a \in \cV_{\ahat^\star}\!\setminus\!\Set{a',\ahat^\star}} \Esp_\nu\!\left[ \sum\limits_{n \geq 1}  \ind_{\Set{  \muhat_a^n < \lambda_a }}  n \exp\!\left(n \KLof{\muhat_a^n}{\lambda_a}\right) \right] \,.
		\]
		Then, by applying Lemma~\ref{unimodal large deviation} based on large deviation probabilities, we have
		\[
		\forall a \in \cA,\quad \Esp_\nu\!\left[ \sum\limits_{n \geq 1}  \ind_{\Set{  \muhat_a^n < \lambda_a }}  n \exp\left(n \KL(\muhat_a^n|\lambda_a)\right) \right] \leq \dfrac{23\log(1/\epsilon)}{\epsilon^6} \,.
		\]
		It comes:
		\[
		\Esp_\nu[\abs{\Lambda_\epsilon}] \leq  \sum\limits_{\ahat^\star\in\cA}\sum\limits_{ a' \in \Set{\ahat^\star}\cup\cV_{\ahat^\star}}\sum\limits_{ a \in \cV_{\ahat^\star}\!\setminus\!\Set{a',\ahat^\star}} \dfrac{23\log(1/\epsilon)}{\epsilon^6}  \leq 23d^2\abs{\cA}\dfrac{\log(1/\epsilon)}{\epsilon^6}\,.
		\]

	\section{Concentration lemmas}
	\label{app: concentration_lemmas}
	{\bf Lemma} Independently of the considered strategy, for all set of Gaussian or Bernoulli distributions $\nu \!\in\! \cD $, for all $  0 \!<\! \epsilon \!\leq\! 1/2 $, for all $a, a' \!\in\! \cA$, we have
	\[
	\Esp_\nu\left[\sum\limits_{t \geq 1}\ind_{\Set{a_{t+1}=a,\ N_{a'}(t) \geq   N_{a}(t),\  \abs{\muhat_{a'}(t) - \mu_{a'}} \geq \epsilon }} \right] \leq \dfrac{10}{\epsilon^4} \,.
	\]
	\begin{proof} Considering the stopping times $ \tau_{a,n} \!=\! \inf{ \Set{t \!\geq\! 1, N_{a}(t) \!=\! n} }$  we will rewrite the sum  
		~\\$ \sum\limits_{t \geq 1}\ind_{\Set{a_{t+1}=a,\ N_{a'}(t) \geq  N_{a}(t),\  \abs{\muhat_{a'}(t) - \mu_{a'}} \geq \epsilon }} $ and use an Hoeffding's type argument for distributions with support included in $[0,1]$.
		\beqan
		&& \sum\limits_{t \geq 1}\ind_{\Set{a_{t+1}=a,\ N_{a'}(t) \geq  N_{a}(t),\  \abs{\muhat_{a'}(t) - \mu_{a'}} \geq \epsilon }} \\
		&=& \sum\limits_{t \geq 1}\sum\limits_{n \geq 2, m \geq 1}\ind_{\Set{ \tau_{a,n} = t +1, N_{a'}(t) = m }} \ind_{\Set{ m \geq  n-1 ,\  \abs{\muhat_{a'}^m - \mu_{a'}} \geq \epsilon }} \\
		&=& \sum\limits_{ m \geq 1} \sum\limits_{n \geq 2}\ind_{\Set{ m \geq n-1,\  \abs{\muhat_{a'}^m - \mu_{a'}} \geq \epsilon }}\sum\limits_{t \geq 1}\ind_{\Set{ \tau_{a,n} = t +1, N_{a'}(t) = m }}  \\
		&\leq& \sum\limits_{ m \geq 1} \sum\limits_{n \geq 2}\ind_{\Set{ m \geq n-1,\  \abs{\muhat_{a'}^m - \mu_{a'}} \geq \epsilon }}\sum\limits_{t \geq 1}\ind_{\Set{ \tau_{a,n} = t +1}}  \\
		&\leq& \sum\limits_{ m \geq 1} \sum\limits_{n \geq 2}\ind_{\Set{ m \geq  n-1,\  \abs{\muhat_{a'}^m - \mu_{a'}} \geq \epsilon }} \\
		\eeqan 
		Taking the expectation , it comes
		\beqan
		&&\Esp_\nu\left[\sum\limits_{t \geq 1}\ind_{\Set{a_{t+1}=a,\ N_{a'}(t) \geq   N_{a}(t),\  \abs{\muhat_{a'}(t) - \mu_{a'}} \geq \epsilon }}\right]  \\
		&\leq& \sum\limits_{ m \geq 1} \sum\limits_{n \geq 2}\ind_{\Set{ m \geq n-1 }} \Pr_\nu\left(  \abs{\muhat_{a'}^m - \mu_{a'}} \geq \epsilon  \right) \\
		&\leq& \sum\limits_{ m \geq 1} \sum\limits_{n \geq 2}\ind_{\Set{ m \geq n-1 }} \max\!\left(2\e^{-2m\epsilon^2},\ 2\e^{-m\epsilon^2/2} \right) \qquad \textnormal{(Hoeffding's inequality)}\\
		&=& \sum\limits_{ m \geq 1} \sum\limits_{n \geq 2}\ind_{\Set{ m \geq n-1 }} 2\e^{-m\epsilon^2/2} \\
		&=& 2\sum\limits_{ m \geq 1} m \e^{-m\epsilon^2/2} \\
		&=&  \dfrac{2\e^{-\epsilon^2/2}}{(1 - \e^{-\epsilon^2/2})^2} =  \dfrac{2\e^{\epsilon^2/2}}{(\e^{\epsilon^2/2} - 1)^2} \leq \dfrac{8\e^{1/8}}{ \epsilon^4} \leq \dfrac{10}{\epsilon^4} \,. \qquad \textnormal{(}0 < \epsilon \leq 1/2\textnormal{)} 
		\eeqan
		
	\end{proof}
	
	~\\ {\bf Lemma} Let us consider a set of Gaussian or Bernoulli distributions $\nu \!\in\! \cD $. Let $  0 \!<\! \epsilon \!\leq\! \min(1\!-\!\mu^\star,1/2) $ and $a \!\in\! \cA$. Let $\lambda \!=\! \mu_a \!-\! \epsilon$. Then, independently of the considered strategy, we have
	\[ 
	\Esp_\nu\left[\sum\limits_{n \geq 1}\ind_{\Set{ \muhat_{a}^{n}  < \lambda}}n\exp(n \KLof{\muhat_{a}^{n}}{\lambda})\right]  \leq \dfrac{23\log(1/\epsilon)}{\epsilon^6} \,.
	\]
	We provide two proofs, one for Gaussian distributions and another for Bernoulli distributions, that can be read separately. 
	\begin{proof}[For Gaussian distributions] The proof is based on a Chernoff type inequality and a calculation by measurement change.
	
	Since $ \nu_a \sim \cN(\mu_a,1) $ we have for all $ \epsilon > 0$,
		\[	\forall n \geq 1,\quad  \Pr_\nu(\muhat_a^n - \mu_a \leq - \epsilon) \leq \e^{-n \epsilon^2/2} \,. 
		\]
		In addition, $\forall \mu,\mu' \!\in\! \Real,\quad \KL(\mu|\mu') \!=\! \dfrac{(\mu \!-\! \mu')^2}{2}$.
		Let $ n \geq 1 $. We have:
		{\small
			\beqan
			&&\Esp_\nu\left[\ind_{\Set{\muhat_a^n \leq \lambda}} n \e^{n \KL(\muhat_a^n|\lambda)}\right] \\
			&=& \int_0^\infty \Pr_\nu\left( \ind_{\Set{\muhat_a^n \leq \lambda}} n \e^{n \KL(\muhat_a^n|\lambda)} > x \right) \mathrm{d} x \\
			&=& \int_0^\infty \Pr_\nu\left(  n \e^{n \KL(\muhat_a^n|\lambda)} > x,\ \muhat_a^n \leq \lambda \right) \mathrm{d}x \\
			&=& \int_{-\infty}^\infty n^2 \e^{n u} \Pr_\nu\left(   \KL(\muhat_a^n|\lambda) > u,\ \muhat_a^n \leq \lambda \right) \mathrm{d}u \qquad (\textnormal{variable change } x = n \e^{n u} ,\ \mathrm{d}x = n^2 \e^{n u} \mathrm{d}u) \\
			&=& \int_{-\infty}^{0} n^2 \e^{n u} \Pr_\nu\left(\KL(\muhat_a^n|\lambda) > u,\  \muhat_a^n \leq \lambda\right) \mathrm{d}u + \int_{0}^\infty n^2 \e^{n u} \Pr_\nu\left( \KL(\muhat_a^n|\lambda) > u,\ \muhat_a^n \leq \lambda \right) \mathrm{d}u \\
			&=&  n^2  \Pr_\nu\left(\muhat_a^n - \mu_a \leq - \epsilon \right) \int_{-\infty}^{0}\e^{n u} \mathrm{d}u + \int_{0}^\infty n^2 \e^{n u} \Pr_\nu\left( \muhat_a^n - \mu_a \leq - \epsilon - \sqrt{2u} \right) \mathrm{d}u \\
			&\leq&  n^2  \e^{-n\epsilon^2/2}
			\dfrac{1}{n} + \int_{0}^\infty n^2 \e^{n u} \e^{-n (\epsilon + \sqrt{2u})^2/2 } \mathrm{d}u \\
			&=&  n  \e^{-n\epsilon^2/2}
			+ n^2 \e^{-n\epsilon^2/2} \int_{0}^\infty   \e^{- n \epsilon \sqrt{2u} } \mathrm{d}u \\
			&=&  n  \e^{-n\epsilon^2/2}
			+ n^2 \e^{-n\epsilon^2/2} \int_{0}^\infty y  \e^{- n \epsilon y} \mathrm{d}y \qquad (\textnormal{variable change } u = \dfrac{y^2}{2} ,\ \mathrm{d}u = y \mathrm{d}y)\\
			&=&  n  \e^{-n\epsilon^2/2}
			+ n^2 \e^{-n\epsilon^2/2} \dfrac{1}{(n\epsilon)^2}\\
			&=& n \e^{-n\epsilon^2/2} + \dfrac{1}{\epsilon^2} \e^{-n\epsilon^2/2} 
			\eeqan
		}
		To ends the proof, we use the following equalities for $ r > 0 $
		\beqan
		&& \sum\limits_{n\geq 1}\e^{-n r} = \dfrac{\e^{-r}}{1-\e^{-r}}  \\
		&& \sum\limits_{n\geq 1}n\e^{-n r} = \dfrac{\e^{-r}}{(1-\e^{-r})^2} 
		\eeqan
	and obtain
	\[ 
	\Esp_\nu\left[\sum\limits_{n \geq 1}\ind_{\Set{ \muhat_{a}^{n}  < \lambda}}n\exp(n \KL(\muhat_{a}^{n}|\lambda))\right]  \leq \dfrac{1}{\epsilon^2}\dfrac{\e^{-\epsilon^2/2}}{1-\e^{-\epsilon^2/2}} + \dfrac{\e^{-\epsilon^2/2}}{(1-\e^{-\epsilon^2/2})^2} \leq \dfrac{10}{\epsilon^4} \,.
	\]	
	\end{proof}
	
	\begin{proof}[For Bernoulli distributions] The proof is based on a Chernoff type inequality and a calculation by measurement change.
	
	Since the support of $ \nu_a$ is included in $[0,1]$  we have by Chernoff's and Pinsker's inequalities
		\[
		\forall 0\leq v \leq \mu_a, \forall n \geq 1,  \quad  \Pr_\nu(\muhat_a^n \leq v) \leq \e^{-n\kl(v|\mu_a)} \leq \e^{-2n (\mu_a-v)^2} \,.
		\]
		Let $ n \geq 1 $. We have
		
			\beqan
			&&\Esp_\nu\left[\ind_{\Set{\muhat_a^n \leq \lambda}} n \e^{n \kl(\muhat_a^n|\lambda)}\right] \\
			&=& \int_0^\infty \Pr_\nu\left( \ind_{\Set{\muhat_a^n \leq \lambda}} n \e^{n \kl(\muhat_a^n|\lambda)} > x \right) \mathrm{d}x \\
			&=& \int_0^\infty \Pr_\nu\left(  n \e^{n \kl(\muhat_a^n|\lambda)} > x,\ \muhat_a^n \leq \lambda \right) \mathrm{d}x \\
			&=& \int_{-\infty}^\infty n^2 \e^{n u} \Pr_\nu\left(   \klof{\muhat_a^n}{\lambda} > u,\ \muhat_a^n \leq \lambda \right) \mathrm{d}u \qquad (\textnormal{variable change } x = n \e^{ n u} ,\ \mathrm{d}x = n^2 \e^{n u} \mathrm{d}u) \\
			&=& \int_{-\infty}^{0} n^2 \e^{n u} \Pr_\nu\left(\klof{\muhat_a^n}{\lambda} > u,\  \muhat_a^n \leq \lambda\right) \mathrm{d}u + \int_{0}^{\kl(0|\lambda)} n^2 \e^{n u} \Pr_\nu\left( \klof{\muhat_a^n}{\lambda} > u,\ \muhat_a^n \leq \lambda \right) \mathrm{d}u \,.
			\eeqan
			On the one hand 
			\beqan
			&& \int_{-\infty}^{0} n^2 \e^{n u} \Pr_\nu\left(\klof{\muhat_a^n}{\lambda} > u,\  \muhat_a^n \leq \lambda\right) \mathrm{d}u \\
			&=&  n^2  \Pr_\nu\left(\muhat_a^n  \leq \lambda \right) \int_{-\infty}^{0}\e^{n u} \mathrm{d}u \\ 
			&\leq&  n^2  \e^{-2n(\mu_a -\lambda)^2}
			\dfrac{1}{n}  \\
			&=& n \e^{-2n\epsilon^2} \,.
			\eeqan
			On the other hand, using variable change $u \!=\! \kl(v|\lambda)$ and Lemma~\ref{kl_inequality}, it comes
			\beqan
			&& \int_{0}^{\kl(0|\lambda)} n^2 \e^{n u} \Pr_\nu\left( \klof{\muhat_a^n}{\lambda} > u,\ \muhat_a^n \leq \lambda \right) \mathrm{d}u \\
			&=&  \int_{0}^{\lambda} n^2 \e^{n \kl(v|\lambda)} \Pr_\nu\left( \klof{\muhat_a^n}{\lambda} > \klof{v}{\lambda},\ \muhat_a^n \leq \lambda \right)  \left[-\dfrac{\partial\kl}{\partial p}(v|\lambda) \right]\mathrm{d}v  \\
			&=&  \int_{0}^{\lambda} n^2 \e^{n \kl(v|\lambda)} \Pr_\nu\left( \muhat_a^n< v \right)  \left[-\dfrac{\partial\kl}{\partial p}(v|\lambda) \right]\mathrm{d}v  \\
			&\leq&  \int_{0}^{\lambda} n^2 \e^{n \kl(v|\lambda)} \e^{- n \kl(v|\mu_a)} \left[-\dfrac{\partial\kl}{\partial p}(v|\lambda) \right]\mathrm{d}v  \\
			&=&  \int_{0}^{\lambda} n^2 \e^{- n \left(\kl(v|\mu_a) - \kl(v|\lambda)\right)} \left[-\dfrac{\partial\kl}{\partial p}(v|\lambda) \right]\mathrm{d}v  \\
			&\leq&  \int_{0}^{\lambda} n^2 \e^{- n \frac{(\mu_a - \lambda)^2}{2}} \left[-\dfrac{\partial\kl}{\partial p}(v|\lambda) \right]\mathrm{d}v  \qquad \textnormal{( Lemma~\ref{kl_inequality})} \\
			&=&   \kl(0|\lambda)n^2 \e^{- n\epsilon^2/2}  \\
			&\leq&  \left(-\log(1-\mu^\star)\right) n^2 \e^{- n\epsilon^2/2}  \\
			&\leq&  \log(1/\epsilon) n^2 \e^{- n\epsilon^2/2} \,,
			\eeqan
		where $\dfrac{\partial\kl}{\partial p}$ corresponds to the derivative of the $\kl(\cdot|\cdot)$ according to the first variable.
		Thus we have
		\[
		\Esp_\nu\left[\ind_{\Set{\muhat_a^n \leq \lambda}} n \e^{n \kl(\muhat_a^n|\lambda)}\right] \leq n \e^{-2n\epsilon^2} +  \log(1/\epsilon) n^2 \e^{- n\epsilon^2/2} \,. 
		\]
		To ends the proof, we use the following equalities for $ r > 0 $
		\beqan
		&& \sum\limits_{n\geq 1}\e^{-n r} = \dfrac{\e^{-r}}{1 - \e^{-r}} = \dfrac{1}{\e^r - 1} \\
		&& \sum\limits_{n\geq 1}n\e^{-n r} = \dfrac{\e^r}{(\e^r - 1)^2} = \dfrac{1}{\e^r - 1} +\dfrac{1}{(\e^r - 1)^2}\\
		&& \sum\limits_{n\geq 1}n^2\e^{-n r} = \dfrac{\e^r}{(\e^r - 1)^2} + \dfrac{2\e^r}{(\e^r - 1)^3} = \dfrac{\e^{2r} + \e^r}{(\e^r - 1)^3} 
		\eeqan
		and obtain 
			\[ 
	\Esp_\nu\left[\sum\limits_{n \geq 1}\ind_{\Set{ \muhat_{a}^{n}  < \lambda}}n\exp(n \klof{\muhat_{a}^{n}}{\lambda})\right]  \leq \dfrac{\e^{2\epsilon^2}}{(\e^{2\epsilon^2} - 1)^2} +  \dfrac{\log(1/\epsilon)\e^{\epsilon^2} + \e^{\epsilon^2/2}}{(\e^{\epsilon^2/2} - 1)^3} \leq \dfrac{23\log(1/\epsilon)}{\epsilon^6} \,.
	\] 
	\end{proof}
	\begin{lemma}\label{kl_inequality}For all $0\!\leq\!v\!\leq\lambda\!<\!\mu \!<\! 1$ we have 
	\[
	\klof{v}{\mu} - \klof{v}{\lambda} \geq \dfrac{(\mu - \lambda)^2 }{2} \,.
	\]
	\end{lemma}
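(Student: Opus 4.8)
The plan is to exploit a structural simplification: although each of $\klof{v}{\mu}$ and $\klof{v}{\lambda}$ is a nonlinear function of $v$, their \emph{difference} is affine in $v$, because the terms $v\log v$ and $(1-v)\log(1-v)$ are identical in both divergences and cancel. Concretely, I would first expand both binary divergences and subtract, obtaining
\[
\klof{v}{\mu} - \klof{v}{\lambda} = v\log\!\left(\frac{\lambda}{\mu}\right) + (1-v)\log\!\left(\frac{1-\lambda}{1-\mu}\right).
\]
This is affine in $v$, with slope $\log\!\frac{\lambda(1-\mu)}{\mu(1-\lambda)}$. Since $\lambda < \mu$ implies $\lambda(1-\mu) < \mu(1-\lambda)$, this slope is strictly negative, so the right-hand side is a decreasing function of $v$ on the interval $[0,\lambda]$.

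Given monotonicity, the minimum over the admissible range $0 \leq v \leq \lambda$ is attained at the endpoint $v = \lambda$. Evaluating there, I would observe that the affine expression collapses exactly to
\[
\lambda\log\!\left(\frac{\lambda}{\mu}\right) + (1-\lambda)\log\!\left(\frac{1-\lambda}{1-\mu}\right) = \klof{\lambda}{\mu},
\]
so that for every $v \in [0,\lambda]$ one has $\klof{v}{\mu} - \klof{v}{\lambda} \geq \klof{\lambda}{\mu}$. It then remains to lower bound $\klof{\lambda}{\mu}$. For this I would invoke Pinsker's inequality for Bernoulli distributions, which gives $\klof{\lambda}{\mu} \geq 2(\mu-\lambda)^2$, and in particular $\klof{\lambda}{\mu} \geq (\mu-\lambda)^2/2$, yielding the claim.

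I do not expect a serious obstacle here: the whole argument hinges on the single observation that the difference of the two divergences is affine in $v$, which immediately reduces the problem to the boundary case $v=\lambda$. The only points requiring mild care are checking the sign of the slope (to know the minimum sits at $v=\lambda$ rather than $v=0$) and the boundary convention $0\log 0 = 0$ when $v=0$; both are routine. Note moreover that Pinsker supplies the much stronger constant $2$, so the stated bound $(\mu-\lambda)^2/2$ holds with considerable slack, and one could even dispense with Pinsker in favour of a direct second-order Taylor estimate of $\klof{\lambda}{\mu}$ in $\lambda$ around $\mu$ if a self-contained argument were preferred.
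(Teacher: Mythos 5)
Your proof is correct, but it takes a genuinely different route from the paper's. The paper proceeds via the midpoint $\frac{\mu+\lambda}{2}$: it lower bounds $\klof{v}{\mu} - \klof{v}{\lambda}$ by $\klof{v}{\mu} - \kl\!\left(v\left|\frac{\mu+\lambda}{2}\right.\right)$ using monotonicity, then uses convexity of $\kl$ in its second argument to compare the resulting difference quotient with the partial derivative $\frac{\partial\kl}{\partial q}\!\left(\lambda\left|\frac{\mu+\lambda}{2}\right.\right)$, invokes Lemma B.4 of \citet{combes2014unimodal} to convert that derivative bound back into $\kl\!\left(\lambda\left|\frac{\mu+\lambda}{2}\right.\right)$, and finishes with Pinsker applied to the half-gap, which is exactly where the constant $\frac{1}{2}$ comes from. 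You instead exploit the algebraic identity $\klof{v}{\mu} - \klof{v}{\lambda} = v\log\frac{\lambda}{\mu} + (1-v)\log\frac{1-\lambda}{1-\mu}$ (the entropy terms in $v$ cancel), note the slope $\log\frac{\lambda(1-\mu)}{\mu(1-\lambda)}$ is negative when $\lambda<\mu$, conclude the minimum over $v\in[0,\lambda]$ sits at $v=\lambda$ where the expression collapses to $\klof{\lambda}{\mu}$, and apply Pinsker once. Your argument buys three things: it is self-contained (no appeal to the external Lemma B.4, and no convexity/derivative manipulations), it is shorter, and it yields the stronger constant $2(\mu-\lambda)^2$, a factor $4$ better than the stated bound. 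The paper's argument, by contrast, leans only on generic properties of $\kl$ (monotonicity, convexity, Pinsker, and the cited lemma) rather than the explicit Bernoulli formula; that said, your affine-cancellation observation is not really less general, since the difference of divergences with a common first argument is affine in the mean parameter for any one-parameter exponential family, so your approach would cover the Gaussian case as well. Your handling of the edge cases (sign of the slope, the convention $0\log 0 = 0$ at $v=0$) is the right level of care, and nothing further is needed.
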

	\begin{proof}
	Using monotony of the $\kl(\cdot|\cdot)$ we get 
	\[
	\klof{v}{\mu} - \klof{v}{\lambda} \geq \klof{v}{\mu} -  \kl\!\left(v\left|\frac{\mu+\lambda}{2}\right. \right) \,.
	\]
	Using convexity of the $\kl(\cdot|\cdot)$ we get
	\[
	\dfrac{\klof{v}{\mu} -  \kl\!\left(v\left|\frac{\mu+\lambda}{2}\right.\right) }{ \mu - \frac{\mu+\lambda}{2}} \geq \dfrac{\partial\kl}{\partial q}\!\left(v\left|\frac{\mu+\lambda}{2}\right. \right) \geq \dfrac{\partial\kl}{\partial q}\!\left(\lambda \left|\frac{\mu+\lambda}{2}\right. \right) \,,
	\]
	where $\dfrac{\partial\kl}{\partial q}$ corresponds to the derivative of the $\kl(\cdot|\cdot)$ according to the second variable. From Lemma B.4 in \citet{combes2014unimodal} we have 
	\[ \left(\frac{\mu+\lambda}{2} - \lambda \right)
	\dfrac{\partial\kl}{\partial q}\!\left(\lambda \left|\frac{\mu+\lambda}{2}\right. \right) \geq \kl\!\left(\lambda\left|\frac{\mu+\lambda}{2}\right. \right) \,.
	\]
	 Then Pinsker's inequality implies 
	\[
	\left(\frac{\mu+\lambda}{2} - \lambda \right)
	\dfrac{\partial\kl}{\partial q}\!\left(\lambda \left|\frac{\mu+\lambda}{2}\right. \right) \geq 2 \left(\frac{\mu+\lambda}{2} - \lambda \right)^2 = \dfrac{(\mu - \lambda)^2 }{2} \,,
	\]
	which ends the proof.
	\end{proof}
	
	\section{\KLUCBUB  finite time analysis}
	\label{app: ucb_analysis}
	\KLUCBUB strategy implies similar lower bounds and empirical upper bounds on the numbers of pulls as \IMEDUB strategy.  An additional random process $(\gamma_t)_{t \geq 1} \!\in\! \Set{0,1}$ appears in the empirical lower bounds induced by  \KLUCBUB strategy (Lemma~\ref{ucb_unimodal empirical lower bounds}). When $\gamma_t \!=\! 1$, the empirical bounds are the same of the ones induced by \IMEDUB strategy. And we show that the process $(\gamma_t)_{t\geq 1}$ reaches zero only a finite number of times for which a use of the empirical bounds is needed.
	Then, similar reasoning as the one developed in Section~\ref{sec : imed_analysis} can be re-used and gives similar finite time analysis.
	
	
	\subsection{Notations}
	Please, refer to Section~\ref{imed_unimodal notations}.
	
	\subsection{Strategy-based empirical bounds}
	In this subsection, we provide empirical bounds very similar to the ones induced by \IMEDUB strategy. We first establish preliminary results on the indexes. 
	
	~\\It is noticeable that  for all time step $ t \!\geq \!1$,
	\beq \label{U_astar} \forall \ahat^\star \in \Ahat^\star(t),\quad  U_{\ahat^\star}(t) = \muhat_{\ahat^\star}(t) = \muhat^\star(t)\,. \eeq
	In addition for $a \!\notin\! \Ahat^\star(t)$,
	\beqa
	&& \label{ucb_unimodal U_a = muhat_a} \textnormal{if } N_a(t) \geq N_{\ahat^\star_t}(t)\,,\quad U_a(t) = \muhat_a(t) \ad   U_a(t) < U_{\ahat^\star_t}(t)\,,  \\
	&& \label{ucb_unimodal U_a > muhat_a} \textnormal{if } N_a(t) < N_{\ahat^\star_t}(t)\,, \quad N_a(t) \KLof{\muhat_a(t)}{U_a(t)} + \log\!\left(N_a(t)\right) = \log\!\left(N_{\ahat^\star_t}(t)\right)\,.
	\eeqa
	In particular we have
	\beqa 
	&& \label{ucb_unimodal Nat+1 < Na*}  N_{a_{t+1}}(t) \leq N_{\ahat^\star_t}(t) \\ 
	&\ad& \label{ucb_unimodal index Uat+1} N_{a_{t+1}}(t) \KLof{\muhat_{a_{t+1}}(t)}{U_{a_{t+1}}(t)} + \log\!\left(N_{a_{t+1}}(t)\right) = \log\!\left(N_{\ahat^\star_t}(t)\right) \,.  
	\eeqa

	\begin{lemma}[Empirical lower bounds]\label{ucb_unimodal empirical lower bounds}Under \KLUCBUB, at each step time $t \!\geq \!1$, 
		\beqan 
		&1.& \forall a \in \cV_{\ahat^\star_t},\quad  \gamma_t\log\!\left(N_{a_{t+1}}(t)\right)  \leq N_{a}(t) \KLof{\muhat_{a}(t)}{\muhat^\star(t)} + \log\!\left((N_{a}(t)\right) \\
		&2.&  N_{a_{t+1}}(t) \leq N_{\ahat^\star_t}(t)\,,
		\eeqan 
		where $\gamma_t \!=\! \ind_{\Set{a_{t+1} \in \Ahat^\star(t) }} \!+\! \ind_{\Set{a_{t+1} \notin \Ahat^\star(t) \ad  \log(N_{a_{t+1}}(t)) \leq N_{a_{t+1}}(t) \KL(\muhat_{a_{t+1}}(t)|\muhat^\star(t))  }} \!\in\! \Set{0,1}$.
	\end{lemma}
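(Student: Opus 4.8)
The plan is to reduce the statement to the preliminary identities \eqref{U_astar}--\eqref{ucb_unimodal index Uat+1} already recorded for \KLUCBUB. Part~2 is nothing but \eqref{ucb_unimodal Nat+1 < Na*}, so only part~1 requires work. The key observation is that its right-hand side is exactly the \IMED index $I_a(t) = N_a(t)\KLof{\muhat_a(t)}{\muhat^\star(t)} + \log(N_a(t))$, so it suffices to prove $\gamma_t\log(N_{a_{t+1}}(t)) \leq I_a(t)$ for every $a \in \cV_{\ahat^\star_t}$. I would use throughout that $u \mapsto g_a(u) := N_a(t)\KLof{\muhat_a(t)}{u} + \log(N_a(t))$ is nondecreasing on $[\muhat_a(t),\infty)$, that $I_a(t) = g_a(\muhat^\star(t))$, and that $\log(N_{a_{t+1}}(t)) \leq \log(N_{\ahat^\star_t}(t))$ by part~2.

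The argument splits according to the position of $U_a(t)$ relative to $\muhat^\star(t)$, which through the index identities controls the size of $I_a(t)$. In the easy regime I would show $I_a(t) \geq \log(N_{\ahat^\star_t}(t))$, which immediately yields $\gamma_t\log(N_{a_{t+1}}(t)) \leq \log(N_{a_{t+1}}(t)) \leq \log(N_{\ahat^\star_t}(t)) \leq I_a(t)$ irrespective of $\gamma_t$. This covers three cases: if $a \in \Ahat^\star(t)$ then $\muhat_a(t) = \muhat^\star(t)$ and $I_a(t) = \log(N_a(t)) \geq \log(N_{\ahat^\star_t}(t))$ since $\ahat^\star_t$ minimises the pull count over $\Ahat^\star(t)$; if $a \notin \Ahat^\star(t)$ with $N_a(t) \geq N_{\ahat^\star_t}(t)$ then $I_a(t) \geq \log(N_a(t)) \geq \log(N_{\ahat^\star_t}(t))$ using \eqref{ucb_unimodal U_a = muhat_a}; and if $a \notin \Ahat^\star(t)$ with $N_a(t) < N_{\ahat^\star_t}(t)$ but $U_a(t) \leq \muhat^\star(t)$, then by \eqref{ucb_unimodal U_a > muhat_a} we have $g_a(U_a(t)) = \log(N_{\ahat^\star_t}(t))$, so monotonicity of $g_a$ gives $I_a(t) = g_a(\muhat^\star(t)) \geq g_a(U_a(t)) = \log(N_{\ahat^\star_t}(t))$.

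The only remaining regime is $a \notin \Ahat^\star(t)$, $N_a(t) < N_{\ahat^\star_t}(t)$ and $U_a(t) > \muhat^\star(t)$, equivalently $I_a(t) < \log(N_{\ahat^\star_t}(t))$. Here maximality of $a_{t+1}$ gives $U_{a_{t+1}}(t) \geq U_a(t) > \muhat^\star(t)$, which forces $a_{t+1} \notin \Ahat^\star(t)$ (otherwise $U_{a_{t+1}}(t) = \muhat^\star(t)$ by \eqref{U_astar}); if $\gamma_t = 0$ the claim is trivial, so I may assume $\gamma_t = 1$, which in this situation is exactly the condition $\log(N_{a_{t+1}}(t)) \leq N_{a_{t+1}}(t)\KLof{\muhat_{a_{t+1}}(t)}{\muhat^\star(t)}$. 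The plan is then to combine three ingredients: the index ordering, which through $g_a(U_a(t)) = \log(N_{\ahat^\star_t}(t))$ from \eqref{ucb_unimodal U_a > muhat_a}, $g_{a_{t+1}}(U_{a_{t+1}}(t)) = \log(N_{\ahat^\star_t}(t))$ from \eqref{ucb_unimodal index Uat+1}, and $U_a(t) \leq U_{a_{t+1}}(t)$, yields $g_{a_{t+1}}(U_a(t)) \leq g_a(U_a(t))$; the monotonicity inequality $\KLof{\muhat_{a_{t+1}}(t)}{U_a(t)} \geq \KLof{\muhat_{a_{t+1}}(t)}{\muhat^\star(t)}$ coming from $U_a(t) > \muhat^\star(t) \geq \muhat_{a_{t+1}}(t)$; and the $\gamma_t = 1$ condition, which supplies the slack needed to absorb the $\log(N_{a_{t+1}}(t))$ terms and pass from the divergence evaluated at the confidence boundary $U_a(t)$ down to the divergence at $\muhat^\star(t)$ appearing in $I_a(t)$.

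I expect this last regime to be the main obstacle. Every easy case reduces to monotonicity of $g_a$ together with part~2, but here $I_a(t)$ can be strictly smaller than $\log(N_{\ahat^\star_t}(t))$, so one cannot simply dominate $\log(N_{a_{t+1}}(t))$ by $\log(N_{\ahat^\star_t}(t))$. The crux is that the index identities produce inequalities at the level $U_a(t)$, whereas the target $I_a(t)$ lives at the level $\muhat^\star(t)$, and bridging this gap costs the Kullback--Leibler increment over $[\muhat^\star(t), U_a(t)]$; the indicator $\gamma_t$ is designed to certify, through its defining condition, that this increment is controlled (equivalently, that $N_{a_{t+1}}(t)$ is small enough relative to $N_{\ahat^\star_t}(t)$), which is precisely what makes the inequality close. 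The complementary fact that $\gamma_t = 0$ occurs only finitely often is not needed for this lemma but explains why restricting to $\gamma_t = 1$ is ultimately harmless in the regret analysis.
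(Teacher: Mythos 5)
Your treatment of part 2 and of the three ``easy'' regimes is correct, and your split according to the position of $U_a(t)$ relative to $\muhat^\star(t)$ (equivalently, according to whether $I_a(t)\ge\log(N_{\ahat^\star_t}(t))$) is a tidy reorganization of the paper's case analysis. The gap is in the last regime, the one you yourself call the crux. From your three ingredients one can only obtain inequalities whose right-hand side sits at the confidence level $U_a(t)$: combining $g_{a_{t+1}}(U_a(t)) \le g_a(U_a(t))$ with $\KLof{\muhat_{a_{t+1}}(t)}{U_a(t)} \ge \KLof{\muhat_{a_{t+1}}(t)}{\muhat^\star(t)}$ and with $\gamma_t=1$ gives
\[
N_{a_{t+1}}(t)\KLof{\muhat_{a_{t+1}}(t)}{\muhat^\star(t)} + \log\!\left(N_{a_{t+1}}(t)\right) \;\le\; N_a(t)\KLof{\muhat_a(t)}{U_a(t)} + \log\!\left(N_a(t)\right) \;=\; \log\!\left(N_{\ahat^\star_t}(t)\right),
\]
but the target $I_a(t) = N_a(t)\KLof{\muhat_a(t)}{\muhat^\star(t)} + \log(N_a(t))$ is \emph{strictly below} $\log(N_{\ahat^\star_t}(t))$ in this regime, by the amount $N_a(t)\bigl[\KLof{\muhat_a(t)}{U_a(t)} - \KLof{\muhat_a(t)}{\muhat^\star(t)}\bigr]$. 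Monotonicity of the divergence gives no control on this amount, and neither does $\gamma_t = 1$: that condition only bounds $\log(N_{a_{t+1}}(t))$ by $N_{a_{t+1}}(t)\KLof{\muhat_{a_{t+1}}(t)}{\muhat^\star(t)}$, i.e.\ it concerns arm $a_{t+1}$ at level $\muhat^\star(t)$, and says nothing about the increment of arm $a$'s divergence over $[\muhat^\star(t),U_a(t)]$. So your assertion that $\gamma_t$ ``supplies the slack needed to pass from $U_a(t)$ down to $\muhat^\star(t)$'' is exactly the unproven step, not a consequence of what precedes it.

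What closes this case in the paper is an additional property of the divergence that does not follow from monotonicity: Lemma~\ref{ucb_unimodal KL}, which states that for $\mu\le\mu'\le\mu''\le u$ one has $\KLof{\mu}{u}/\KLof{\mu'}{u} \le \KLof{\mu}{\mu''}/\KLof{\mu'}{\mu''}$. Concretely, the paper first deduces $\muhat_a(t) < \muhat_{a_{t+1}}(t)$, then from the two index identities and $U_a(t)\le U_{a_{t+1}}(t)$ obtains $N_{a_{t+1}}(t)\KLof{\muhat_{a_{t+1}}(t)}{U_{a_{t+1}}(t)} \le N_a(t)\KLof{\muhat_a(t)}{U_{a_{t+1}}(t)} + \log(N_a(t))$, and finally applies the ratio lemma with $u=U_{a_{t+1}}(t)$ and $\mu''=\muhat^\star(t)$ (together with monotonicity for the $\log(N_a(t))$ term) to convert this into $N_{a_{t+1}}(t)\KLof{\muhat_{a_{t+1}}(t)}{\muhat^\star(t)} \le N_a(t)\KLof{\muhat_a(t)}{\muhat^\star(t)} + \log(N_a(t))$; only at that point does $\gamma_t=1$ finish the proof. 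That ratio lemma needs its own proof (a calculus argument in the Bernoulli case), so what is missing from your sketch is a substantive ingredient rather than a routine verification.
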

	
	\begin{proof}  Let $t \!\geq \!1$. We have already seen that  $ N_{a_{t+1}}(t) \!\leq\! N_{\ahat^\star_t}(t)$ in (\ref{ucb_unimodal Nat+1 < Na*}). This corresponds to point 2. In the following, we prove point 1.
		
		~\\ For $a \!\in\! \cV_{\ahat^\star_t}$ such that $a \!=\! a_{t+1}$ or $N_a(t) \!\geq\! N_{a_{t+1}}(t)$, point 1. is naturally satisfied.
		
		~\\Let $a \!\in\! \cV_{\ahat^\star_t}$ such that $a \!\neq\! a_{t+1}$ and $N_a(t) \!<\! N_{a_{t+1}}(t)$. 
		
		~\\ \underline{\textbf{Case 1}} : $a_{t+1} \!=\! \ahat^\star_t$  
		~\\ Then $N_a(t) \!<\! N_{\ahat^\star_t}(t) \leq N_{a_{t+1}}(t)$ and from equations~(\ref{ucb_unimodal U_a > muhat_a})~and~(\ref{ucb_unimodal Nat+1 < Na*}) it comes:
		\[
		\log\!\left(N_{a_{t+1}}(t)\right) = \log\!\left(N_{\ahat^\star_t}(t)\right) \qquad
		N_a(t) \KLof{\muhat_a(t)}{U_a(t)} + \log\!\left(N_a(t)\right) = \log\!\left(N_{a_{t+1}}(t)\right)\,.
		\]
		According to the followed strategy and equation~\ref{U_astar}
		\[
		\muhat_a(t) \leq U_a(t) \leq U_{a_{t+1}}(t) \ad U_{a_{t+1}}(t) =  \muhat^\star(t) \,.
		\]
		Since $  a_{t+1} = \ahat^\star_t$, this implies 
		\[
		\muhat_a(t) \leq U_a(t) \leq \muhat^\star(t)  \,.
		\]
		Then the monotony of the $\KL$ implies 
		\[
		\KLof{\muhat_a(t)}{U_a(t)} \leq \KLof{\muhat_a(t)}{\muhat^\star(t)}
		\]
		and
		\beqan \log\!\left(N_{a_{t+1}}(t)\right)  &=&  N_a(t) \KLof{\muhat_a(t)}{U_a(t)} + \log\!\left(N_a(t)\right) \\
		&\leq&  N_a(t) \KLof{\muhat_a(t)}{\muhat^\star(t)} + \log\!\left(N_a(t)\right) \,.
		\eeqan

		~\\ \underline{\textbf{Case 2} } : $ a_{t+1} \!\neq\! \ahat^\star(t) $
		~\\ From equations~(\ref{ucb_unimodal U_a > muhat_a})  and  (\ref{ucb_unimodal index Uat+1}), we get
		\beq \label{ucb_unimodal indexes Uat+1 and Ua}N_{a_{t+1}}(t)\KLof{\muhat_{a_{t+1}}(t)}{U_{a_{t+1}}(t)} + \log\!\left(N_{a_{t+1}}(t)\right) = N_a(t) \KLof{\muhat_a(t)}{U_a(t)} + \log\!\left(N_a(t)\right) \,. \eeq
		Since $N_a(t) \!<\! N_{a_{t+1}}(t)$, this implies 
		\[
		\KLof{\muhat_{a_{t+1}}(t)}{U_{a_{t+1}}(t)} <  \KLof{\muhat_a(t)}{U_a(t)} \,.
		\]
		~\\ According to the followed strategy we have $ \muhat_a(t) \!\leq\! U_a(t) \!\leq\! U_{a_{t+1}}(t) $. 
		Then, the monotony of the $\KL$ implies 
		\[
		\KLof{\muhat_a(t)}{U_a(t)} \leq \KLof{\muhat_a(t)}{U_{a_{t+1}}(t)}   \,.
		\]
		Thus, we have:
		\[
		\KLof{\muhat_{a_{t+1}}(t)}{U_{a_{t+1}}(t)} < \KLof{\muhat_a(t)}{U_{a_{t+1}}(t)} \ad \muhat_a(t) \leq U_{a_{t+1}}(t)\,. 
		\]
		Since $\muhat_{a_{t+1}}(t) \!\leq\! U_{a_{t+1}}(t) $, the monotony of the $\KL$ implies 
		\[
		\muhat_a(t) < \muhat_{a_{t+1}}(t) \,.
		\]
		Then from equation~(\ref{ucb_unimodal indexes Uat+1 and Ua}) we deduce 
		\[
		N_{a_{t+1}}(t)\KLof{\muhat_{a_{t+1}}(t)}{U_{a_{t+1}}(t)} \leq N_a(t) \KLof{\muhat_a(t)}{U_{a_{t+1}}(t)} + \log\!\left(N_a(t)\right)
		\]
		and 
		\[
		N_{a_{t+1}}(t) \leq \dfrac{\KLof{\muhat_a(t)}{U_{a_{t+1}}(t)}}{\KLof{\muhat_{a_{t+1}}(t)}{U_{a_{t+1}}(t)}}  N_a(t)+ \dfrac{\log\!\left(N_a(t)\right)}{\KLof{\muhat_{a_{t+1}}(t)}{U_{a_{t+1}}(t)}}  \,.
		\]
		Similarly, since $\muhat_{a_{t+1}}(t)\!\leq\! \muhat^\star(t) \!=\! U_{\ahat^\star_t}(t)\!\leq\! U_{a_{t+1}}(t) $, the monotony of the $\KL$ implies 
		\[
		\KLof{\muhat_{a_{t+1}}(t)}{U_{a_{t+1}}(t)} \geq \KLof{\muhat_{a_{t+1}}(t)}{\muhat^\star(t)} \,.
		\]
		This implies 
		\[
		N_{a_{t+1}}(t) \leq \dfrac{\KLof{\muhat_a(t)}{U_{a_{t+1}}(t)}}{\KLof{\muhat_{a_{t+1}}(t)}{U_{a_{t+1}}(t)}}  N_a(t)+ \dfrac{\log\!\left(N_a(t)\right)}{\KLof{\muhat_{a_{t+1}}(t)}{\muhat^\star(t)}} \,.
		\]
		Since $\muhat_a(t) \!\leq\! \muhat_{a_{t+1}}(t) \!\leq\! \muhat^\star(t) \!\leq\! U_{a_{t+1}}(t) $, we have from Lemma~\ref{ucb_unimodal KL}:
		\[ \dfrac{\KLof{\muhat_a(t)}{U_{a_{t+1}}(t)}}{\KLof{\muhat_{a_{t+1}}(t)}{U_{a_{t+1}}(t)}} \leq  \dfrac{\KLof{\muhat_a(t)}{\muhat^\star(t)}}{\KLof{\muhat_{a_{t+1}}(t)}{\muhat^\star(t)}} \,.
		\]
		This implies
		\[  N_{a_{t+1}}(t) \KLof{\muhat_{a_{t+1}}(t)}{\muhat^\star(t)} \leq   N_a(t) \KLof{\muhat_a(t)}{\muhat^\star(t)}+ \log\!\left(N_a(t)\right)\,.
		\]

	\end{proof}

	\begin{lemma} [Empirical upper bounds]\label{ucb_unimodal empirical upper bounds}
		Under \KLUCBUB at each step time $t \!\geq \!1$,
		\[
		N_{a_{t+1}}(t) \KLof{\muhat_{a_{t+1}}(t)}{\muhat^\star(t)} \leq \log(t) \,.
		\]
		
	\end{lemma}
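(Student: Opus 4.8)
The plan is to mirror the short argument used for \IMEDUB in Lemma~\ref{unimodal empirical upper bounds}, but now exploiting the exact index identity that the \KLUCBUB construction produces for the pulled arm. The starting point is the observation that, because $a_{t+1}$ is chosen to maximize the index over $\Set{\ahat^\star_t}\cup\cV_{\ahat^\star_t}$ and $\ahat^\star_t$ itself belongs to this set with $U_{\ahat^\star_t}(t)=\muhat^\star(t)$ by \eqref{U_astar}, we must have $U_{a_{t+1}}(t)\geq U_{\ahat^\star_t}(t)=\muhat^\star(t)$. Thus the confidence bound of the pulled arm always dominates the current best empirical mean, which is exactly what will let us replace $U_{a_{t+1}}(t)$ by the smaller $\muhat^\star(t)$ inside the divergence at the cost of only decreasing it.

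First I would invoke the identity \eqref{ucb_unimodal index Uat+1}, namely
\[
N_{a_{t+1}}(t)\,\KLof{\muhat_{a_{t+1}}(t)}{U_{a_{t+1}}(t)}+\log\!\left(N_{a_{t+1}}(t)\right)=\log\!\left(N_{\ahat^\star_t}(t)\right),
\]
which holds for the pulled arm regardless of whether $a_{t+1}\in\Ahat^\star(t)$ or not (in the former case both sides reduce to $\log N_{a_{t+1}}(t)=\log N_{\ahat^\star_t}(t)$ since the divergence vanishes and $N_{a_{t+1}}(t)=N_{\ahat^\star_t}(t)$). Next, since $\muhat_{a_{t+1}}(t)\leq\muhat^\star(t)\leq U_{a_{t+1}}(t)$, the monotonicity of $\KLof{\cdot}{\cdot}$ in its second argument over the region above the first argument gives $\KLof{\muhat_{a_{t+1}}(t)}{\muhat^\star(t)}\leq\KLof{\muhat_{a_{t+1}}(t)}{U_{a_{t+1}}(t)}$. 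Multiplying by $N_{a_{t+1}}(t)$ and substituting the identity yields
\[
N_{a_{t+1}}(t)\,\KLof{\muhat_{a_{t+1}}(t)}{\muhat^\star(t)}\leq\log\!\left(N_{\ahat^\star_t}(t)\right)-\log\!\left(N_{a_{t+1}}(t)\right)\leq\log\!\left(N_{\ahat^\star_t}(t)\right),
\]
and the proof closes by the trivial bound $N_{\ahat^\star_t}(t)\leq t$, hence $\log N_{\ahat^\star_t}(t)\leq\log(t)$.

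I expect the only genuine friction to be bookkeeping around degenerate quantities rather than any real difficulty. Specifically, when $N_{a_{t+1}}(t)=0$ the left-hand side is $0$ by the convention $0\times\infty=0$ and the claim is immediate, so the identity argument is only needed for $N_{a_{t+1}}(t)\geq 1$, which also guarantees $\log N_{a_{t+1}}(t)\geq 0$ in the discard step above. One should also be careful that the monotonicity is applied in the correct direction: it is crucial that $U_{a_{t+1}}(t)$ sits above $\muhat^\star(t)$ (established via the argmax together with \eqref{U_astar}), since otherwise replacing $U_{a_{t+1}}(t)$ by $\muhat^\star(t)$ would \emph{increase} the divergence and break the inequality. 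Apart from these checks, the statement is a direct consequence of the \KLUCBUB index equation and is structurally identical to its \IMEDUB counterpart.
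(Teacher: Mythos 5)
Your proof is correct and takes essentially the same route as the paper's: both rest on the index identity \eqref{ucb_unimodal index Uat+1} for the pulled arm, the fact that the argmax rule together with \eqref{U_astar} forces $\muhat_{a_{t+1}}(t) \leq \muhat^\star(t) = U_{\ahat^\star_t}(t) \leq U_{a_{t+1}}(t)$, the monotony of the $\KL$ in its second argument, and finally $N_{\ahat^\star_t}(t) \leq t$. The only (immaterial) difference is the order of operations — you apply monotonicity before discarding the nonnegative $\log\!\left(N_{a_{t+1}}(t)\right)$ term, while the paper discards it first.
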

	
	\begin{proof} From equation~(\ref{ucb_unimodal index Uat+1}) we deduce 
		\beqan N_{a_{t+1}}(t)\KLof{\muhat_{a_{t+1}}(t)}{U_{a_{t+1}}(t)} &\leq&  N_{a_{t+1}}(t) \KLof{\muhat_{a_{t+1}}(t)}{U_{a_{t+1}}(t)} + \log\!\left(N_{a_{t+1}}(t)\right) \\
		&=& \log\!\left(N_{\ahat^\star_t}(t)\right)\\
		&\leq& \log(t) \,.
		\eeqan
		Furthermore, according to the followed strategy, we have
		\[
		\muhat_{a_{t+1}}(t) \leq \muhat^\star(t) =  U_{\ahat^\star_t}(t) \leq U_{a_{t+1}}(t) \,.
		\]
		Then the monotony of the $\KL$ implies 
		\[
		\KLof{\muhat_{a_{t+1}}(t)}{\muhat^\star(t)} \leq \KLof{\muhat_{a_{t+1}}(t)}{U_{a_{t+1}}(t)}
		\]
		and
		\[
		N_{a_{t+1}}(t) \KLof{\muhat_{a_{t+1}}(t)}{\muhat^\star(t)}\leq  N_{a_{t+1}}(t) \KLof{\muhat_{a_{t+1}}(t)}{U_{a_{t+1}}(t)} \leq  \log\!\left(N_{\ahat^\star_t}(t)\right)\leq \log(t) \,.
		\]
	\end{proof}
	\begin{lemma}\label{ucb_unimodal KL} Let $0\!\leq\!\mu \!\leq\! \mu' \!\leq\! \mu''\!\leq\!1 $. We have:
		\[
		\forall u \in [\mu'',1], \quad    \dfrac{\KLof{\mu}{u}}{\KLof{\mu'}{u}} \leq \dfrac{\KLof{\mu}{\mu''}}{\KLof{\mu'}{\mu''}} \,.
		\]
			
		\end{lemma}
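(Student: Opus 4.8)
The plan is to prove the sharper statement that, for fixed $\mu \le \mu'$, the map $u \mapsto \KLof{\mu}{u}/\KLof{\mu'}{u}$ is non-increasing in $u$ on the range $u > \mu'$. Since both $u$ and $\mu''$ lie in that range and $u \ge \mu''$, this immediately gives $\KLof{\mu}{u}/\KLof{\mu'}{u} \le \KLof{\mu}{\mu''}/\KLof{\mu'}{\mu''}$, which is the claim. The degenerate subcases (e.g. $\mu'' = \mu'$, where the right-hand side is $+\infty$, or $\mu = \mu'$) are handled directly by the convention $0\times\infty = 0$, so I may assume $\mu < \mu' \le \mu'' \le u$ with all divergences positive.

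To establish monotonicity, write $A(u) = \KLof{\mu}{u}$ and $B(u) = \KLof{\mu'}{u}$, both positive and differentiable. Since $(A/B)' \le 0$ is equivalent to $A'/A \le B'/B$, it suffices to compare logarithmic derivatives. Using the exponential-family identity $\partial_u \KLof{p}{u} = (u-p)/V(u)$, where $V(u)$ is the variance ($V(u)=1$ for Gaussian, $V(u)=u(1-u)$ for Bernoulli), the common factor $1/V(u)$ cancels and the desired inequality reduces to $\tfrac{u-\mu}{\KLof{\mu}{u}} \le \tfrac{u-\mu'}{\KLof{\mu'}{u}}$, i.e. to the statement that $\phi(p) = (u-p)/\KLof{p}{u}$ is non-decreasing in $p$ on $[0,u)$.

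Computing $\phi'$ and invoking the second exponential-family identity $\partial_p \KLof{p}{u} = \theta(p) - \theta(u)$, with $\theta$ the increasing natural parameter, the sign of $\phi'(p)$ is that of $(u-p)\bigl(\theta(u)-\theta(p)\bigr) - \KLof{p}{u}$. The step I expect to be the crux is the symmetrized-divergence identity $\KLof{p}{u} + \KLof{u}{p} = (u-p)\bigl(\theta(u)-\theta(p)\bigr)$, valid for any one-parameter exponential family (a short computation confirms it for both the Gaussian kernel, where $\theta(p)=p$, and the Bernoulli kernel, where $\theta(p)=\log\tfrac{p}{1-p}$). This identity shows the displayed quantity equals $\KLof{u}{p} \ge 0$, whence $\phi' \ge 0$ and the monotonicity of $\phi$, and therefore of $A/B$, follows. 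The remaining work is purely bookkeeping: justifying the two differentiation identities and the boundary behaviour as $p\to u$ and at $\mu = 0$. I note that the Gaussian case can alternatively be dispatched in one line, since there the inequality is equivalent, after taking square roots, to $\tfrac{u-\mu}{u-\mu'} \le \tfrac{\mu''-\mu}{\mu''-\mu'}$, i.e. to monotonicity of $u \mapsto (u-\mu)/(u-\mu') = 1 + (\mu'-\mu)/(u-\mu')$.
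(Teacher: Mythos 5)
Your proposal is correct, and it reaches the result by a genuinely different organization than the paper, even though the two arguments share the same analytic crux. The paper (which only writes out the Bernoulli case, asserting the Gaussian one is simpler) fixes $\mu''$ and studies the cross-product difference $f(u) = \klof{\mu'}{\mu''}\,\klof{\mu}{u} - \klof{\mu}{\mu''}\,\klof{\mu'}{u}$, showing $f(\mu'')=0$ and $f'\leq 0$; since $f'$ is a difference of terms linear in the derivatives, this forces an intermediate linear function $g(u) = \klof{\mu'}{\mu''}(u-\mu) - \klof{\mu}{\mu''}(u-\mu')$, whose sign is in turn controlled by the monotonicity in $p$ of $h(p) = \klof{p}{\mu''}/(\mu''-p)$. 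You instead differentiate the ratio itself and compare logarithmic derivatives, which collapses the argument to a single monotonicity claim: that $\phi(p) = (u-p)/\KLof{p}{u}$ is non-decreasing in $p$ — and $\phi$ is exactly $1/h$ with the reference point $u$ in place of $\mu''$, so you bypass the auxiliary function $g$ entirely. The key identity is the same in both proofs: the paper's computation $h'(p) = -\klof{\mu''}{p}/(\mu''-p)^2$ is precisely your symmetrized-divergence identity $\KLof{p}{u}+\KLof{u}{p} = (u-p)\bigl(\theta(u)-\theta(p)\bigr)$ in disguise. What your route buys is generality and brevity: the variance factor $V(u)$ cancels in the log-derivative comparison, so Gaussian and Bernoulli (indeed any one-parameter exponential family) are handled in one stroke, whereas the paper's write-up is Bernoulli-specific; your one-line Gaussian argument via $(u-\mu)/(u-\mu')$ also supplies the "simpler" proof the paper alludes to but omits. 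What the paper's route buys is that it stays entirely within elementary Bernoulli calculus, with no exponential-family formalism. One shared caveat, not a gap relative to the paper: both arguments really operate on the open interval $u<1$, since for Bernoulli the ratio at $u=1$ is $\infty/\infty$; this degenerate endpoint is implicitly excluded in the paper's proof as well.
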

		We prove this lemma only for Bernoulli distributions when $\KL(\cdot|\cdot) \!=\! \kl(\cdot|\cdot)$. The proof is simpler for Gaussian distributions.
		\begin{proof}[For Bernoulli distributions] We denote by $\frac{\partial\kl}{\partial p}(\cdot|\cdot)$ and $\frac{\partial\kl}{\partial q}(\cdot|\cdot)$ the derivatives of $\kl(\cdot|\cdot)$ respectively  according to the first and second variables. Let us consider $0\!\leq\!\mu \!\leq\! \mu' \!\leq\! \mu''\!<\!1 $ and $f\!:u \!\in\!(\mu'',1) \!\mapsto\! \klof{\mu'}{\mu''}  \klof{\mu}{u} \!-\! \klof{\mu}{\mu''} \klof{\mu'}{u}$. $f$ is a C-1 function and for $u \!\in\!(\mu'',1)$,
		\[
		f'(u) = \klof{\mu'}{\mu''}\frac{\partial\kl}{\partial q}(\mu|u) - \klof{\mu}{\mu''}\frac{\partial\kl}{\partial q}(\mu'|u) = \dfrac{\klof{\mu'}{\mu''}(u-\mu) - \klof{\mu}{\mu''}(u - \mu') }{u(1-u)}\,.
		\]
		Let us introduce $g\!:u \!\in\!(\mu'',1) \!\mapsto\! \klof{\mu'}{\mu''}(u\!-\!\mu) \!-\! \klof{\mu}{\mu''} (u \!-\! \mu')$. $g$ is a C-1 function and for $u \!\in\!(\mu'',1)$,
		\[
		g'(u) = \klof{\mu'}{\mu''}-\klof{\mu}{\mu''}\,.
		\]
		Since $\mu \!\leq\! \mu' \!\leq\! \mu''$, the monotony of the $\kl$ implies $g'(u) \!\leq\! 0$. Then $g$ is a non-increasing function. In addition 
		\[
		g(\mu'') = \klof{\mu'}{\mu''}(\mu'' - \mu) - \klof{\mu}{\mu''}(\mu'' - \mu') = (\mu''-\mu) \times \left(\klof{\mu'}{\mu''} - \dfrac{\klof{\mu}{\mu''}}{(\mu'' - \mu)} (\mu'' - \mu') \right) \,.
		\]
		Lastly, let us consider $h\!:p \!\in\![0,\mu'') \!\mapsto\! \frac{\kl(p|\mu'')}{\mu'' - p}$. $h$ is a C-1 function or $p \!\in\![0,\mu'')$,
		\[
		h'(p) =\dfrac{(\mu'' - p) \frac{\partial\kl}{\partial p}(p|\mu'') + \kl(p|\mu'')}{(\mu'' - p)^2} = \dfrac{ - \kl(\mu''|p)}{(\mu'' - p)^2} \leq 0 \,.
		\]
		Then $h$ is a non-increasing function. In particular, since $\mu\!\leq\!\mu'$,
		\[
		\dfrac{\klof{\mu}{\mu''}}{(\mu'' - \mu)} \geq \dfrac{\klof{\mu'}{\mu''}}{(\mu'' - \mu')} \,.
		\]
		This implies $g(\mu'')\!\leq\!0$ and $g\!\leq\!0$, since $g$ is a non-increasing function. Then $f' \!\leq\!0$ and $f$ is a non-increasing function. Since $f(\mu'')\!=\!0$, this implies $f \!\leq\!0$, which ends the proof.
		\end{proof}
	
	\subsection{Reliable current best arm and  means}
	
	As in \IMEDUB analysis, we consider the subset $\cT_\epsilon$ of times where everything is well behaved, that is: the current best arm corresponds to the true one and the empirical means of the best arm and the current chosen arm are $\epsilon$-accurate for $0<\epsilon<\epsilon_\nu$, i.e.
	\[
		\cT_\epsilon \coloneqq \Set{t \geq 1:\ \Ahat^\star(t) = \Set{a^\star} \ad  \forall a \in \Set{ a^\star,a_{t+1} },\ \abs{\muhat_a(t) - \mu_a } < \epsilon }\,.
		\]
	 We will show that its complementary set is finite on average. In order to prove this we decompose the set $\cT_\epsilon$ in the following way. Let $\cE_\epsilon$ be the set of times where the means are well estimated,
	 \[
		\cE_\epsilon \coloneqq \Set{t \geq 1:\ \forall a \in \Ahat^\star(t)\cup\Set{a_{t+1} },\ \abs{\muhat_a(t) - \mu_a } < \epsilon }\,,
	  \]
	 and $\Lambda_\epsilon$ the set of times where an arm that is not the current optimal neither pulled is underestimated
	 {\small 
			\[
			\Lambda_\epsilon \!\coloneqq \!\Set{\!t \geq 1 :\, \exists a \in \cV_{\ahat^\star_t}\!\setminus\!\Set{a_{t+1},\ahat^\star_t} \textnormal{\,s.t.\,} \left\{\begin{array}{l}
				\!\muhat_a(t) < \mu_a - \epsilon\\
				\!\log\!\left(N_{a_{t+1}}(t)\right) \leq  N_a(t) \KLof{\muhat_a(t)}{ \mu_a - \epsilon}  + \log\!\left((N_{a}(t)\right)   
				\end{array} \right.  \!\!}\,.
			\]
		}
~\\Then, in the same way as for Lemma~\ref{lem:decomposition_T_epsilon}, we can prove the following inclusion. 		
\begin{lemma}[Relations between the subsets of times]\label{lem:ucb decomposition_T_epsilon}
For $ 0 < \epsilon < \epsilon_\nu$,
\begin{equation}
    \cT_\epsilon^c\cap\Set{t \geq 1:\ \gamma_t = 1} \cap\cE_\epsilon  \subset \Lambda_\epsilon   \,.
\label{eq:ucb decomp_T_epsilon}
\end{equation}
\end{lemma}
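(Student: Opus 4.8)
The plan is to reproduce the argument of Lemma~\ref{lem:decomposition_T_epsilon} almost verbatim, the only new ingredient being that the extra factor $\gamma_t$ appearing in the \KLUCBUB empirical lower bound (Lemma~\ref{ucb_unimodal empirical lower bounds}) must be neutralised by intersecting with the event $\{\gamma_t = 1\}$. Concretely, I would fix $t$ in the left-hand set, so that $t \in \cT_\epsilon^c$, $t \in \cE_\epsilon$, and $\gamma_t = 1$. Because $\gamma_t = 1$, point~1 of Lemma~\ref{ucb_unimodal empirical lower bounds} collapses to
\[
\log\!\left(N_{a_{t+1}}(t)\right) \leq N_a(t)\,\KLof{\muhat_a(t)}{\muhat^\star(t)} + \log\!\left(N_a(t)\right)\quad \forall a \in \cV_{\ahat^\star_t},
\]
which is exactly the \IMEDUB empirical lower bound used in the earlier proof. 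From this point on the two arguments coincide.

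Next I would exploit $t \in \cE_\epsilon$ together with $\epsilon < \epsilon_\nu$: every arm in $\Ahat^\star(t) \cup \Set{a_{t+1}}$ has an $\epsilon$-accurate empirical mean, and the same triangle-inequality computation as in Lemma~\ref{lem:decomposition_T_epsilon} forces $\Ahat^\star(t) = \Set{\ahat^\star_t}$ to be a singleton. Since $t \notin \cT_\epsilon$ while the current best arm is now unique, I must have $\ahat^\star_t \neq a^\star$; unimodality of $(\mu_a)_{a \in \cA}$ then provides a neighbour $a \in \cV_{\ahat^\star_t}$ with $\mu_a > \mu_{\ahat^\star_t}$.

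For that specific $a$, the accuracy guaranteed by $\cE_\epsilon$ and the gap assumption $\epsilon < \epsilon_\nu$ give the chain
\[
\muhat_a(t) \leq \muhat^\star(t) = \muhat_{\ahat^\star_t}(t) < \mu_{\ahat^\star_t} + \epsilon < \mu_a - \epsilon,
\]
so $\muhat_a(t) < \mu_a - \epsilon$, and comparing with the $\epsilon$-accuracy of $a_{t+1}$ shows $a \in \cV_{\ahat^\star_t} \setminus \Set{a_{t+1}, \ahat^\star_t}$. Monotonicity of $\KL(\cdot|\cdot)$ in its second argument then upgrades the displayed bound to
\[
\log\!\left(N_{a_{t+1}}(t)\right) \leq N_a(t)\,\KLof{\muhat_a(t)}{\mu_a - \epsilon} + \log\!\left(N_a(t)\right),
\]
which is precisely the defining condition of $\Lambda_\epsilon$, so $t \in \Lambda_\epsilon$.

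The only genuinely new point, and thus the sole place requiring care, is the first step: verifying that the event $\gamma_t = 1$ is exactly what reduces the \KLUCBUB lower bound to its \IMEDUB counterpart. Once this is checked, no further obstacle arises, since every subsequent manipulation is independent of the sampling rule and was already carried out for \IMEDUB.
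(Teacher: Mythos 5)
Your proposal is correct and is exactly the paper's argument: the paper itself proves this lemma by noting that on the event $\{\gamma_t = 1\}$ the \KLUCBUB empirical lower bound of Lemma~\ref{ucb_unimodal empirical lower bounds} coincides with the \IMEDUB bound of Lemma~\ref{unimodal empirical lower bounds}, after which the proof of Lemma~\ref{lem:decomposition_T_epsilon} applies verbatim. Your write-up fills in those shared steps (singleton $\Ahat^\star(t)$ via the triangle inequality, $\ahat^\star_t \neq a^\star$, the unimodal neighbour with larger mean, the underestimation chain, and the \KL monotonicity upgrade) correctly and in the same order.
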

We can now resort to classical concentration arguments in order to control the size of these sets, which 
yields the following upper bounds. 
\begin{lemma}[Bounded subsets of times]For $ 0 < \epsilon < \epsilon_\nu$, 
 \[\Esp_\nu[\abs{\cE_\epsilon^c}] \leq \dfrac{10(d+1)\abs{\cA}}{\epsilon^4} \qquad		\Esp_\nu[\abs{\Lambda_\epsilon}] \leq 23d^2\abs{\cA}\dfrac{\log(1/\epsilon)}{\epsilon^6} \qquad \Esp_\nu[\abs{\Set{t \geq 1:\ \gamma_t = 0}\cap\cE_\epsilon}] \leq \dfrac{\abs{\cA}}{\epsilon^4} \,,
 \]
 where $d$ is the maximum degree of nodes in $G$.
 \label{lem: ucb ce_and_lambda_are_finite}
\end{lemma}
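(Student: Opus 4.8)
The plan is to treat the three quantities separately, leaning on the same two strategy-independent concentration results used for \IMEDUB, namely Lemma~\ref{unimodal concentration} and Lemma~\ref{unimodal large deviation}, and invoking \KLUCBUB only through the empirical bounds of Lemma~\ref{ucb_unimodal empirical lower bounds}. The first two bounds essentially reproduce the \IMEDUB computation of Lemma~\ref{lem:ce_and_lambda_are_finite}; the third, controlling $\Set{t\geq 1:\ \gamma_t=0}\cap\cE_\epsilon$, is the genuinely new ingredient and is where I expect the real work.

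For $\Esp_\nu[\abs{\cE_\epsilon^c}]$, I would first record from point~2 of Lemma~\ref{ucb_unimodal empirical lower bounds} that $N_{a_{t+1}}(t)\leq N_{\ahat^\star_t}(t)$, and since $\ahat^\star_t$ minimises $N_{\ahat^\star}(t)$ over $\Ahat^\star(t)$, that $N_{a_{t+1}}(t)\leq N_{\ahat^\star}(t)$ for every current best arm. Each $t\in\cE_\epsilon^c$ then carries a witness arm $a'\in\Set{a_{t+1}}\cup\Ahat^\star(t)$ with $\abs{\muhat_{a'}(t)-\mu_{a'}}\geq\epsilon$ and $N_{a'}(t)\geq N_{a_{t+1}}(t)$, so it is counted by an indicator of exactly the type appearing in Lemma~\ref{unimodal concentration}. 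The sharpening from the crude $\abs{\cA}^2$ to $(d+1)\abs{\cA}$ comes from the locality of the strategy: because $a_{t+1}\in\Set{\ahat^\star_t}\cup\cV_{\ahat^\star_t}$, the relevant (pulled arm, witness) pairs reduce to a pulled arm together with its at most $d+1$ candidate leaders in $\Set{a_{t+1}}\cup\cV_{a_{t+1}}$; summing the bound $10/\epsilon^4$ over these at most $(d+1)\abs{\cA}$ pairs gives the claim. Confirming that this reduction is legitimate, i.e.\ that the witness may be confined to the local candidates rather than to an arbitrary member of $\Ahat^\star(t)$, is the step to carry out with care.

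The bound $\Esp_\nu[\abs{\Lambda_\epsilon}]\leq 23d^2\abs{\cA}\log(1/\epsilon)/\epsilon^6$ I would obtain verbatim from the \IMEDUB argument in Appendix~\ref{app:proof_ce_and_lambda_finite}, since $\Lambda_\epsilon$ has the identical definition: rewrite the defining inequality $\log(N_{a_{t+1}}(t))\leq N_a(t)\KLof{\muhat_a(t)}{\mu_a-\epsilon}+\log(N_a(t))$ as $N_{a_{t+1}}(t)\leq N_a(t)\exp(N_a(t)\KLof{\muhat_a(t)}{\mu_a-\epsilon})$, peel off the sum over $t$ through the stopping-time reindexing, and apply the large-deviation estimate of Lemma~\ref{unimodal large deviation} to each of the at most $d^2\abs{\cA}$ triples $(\ahat^\star,a',a)$.

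The new term is $\Esp_\nu[\abs{\Set{t:\ \gamma_t=0}\cap\cE_\epsilon}]$, which I would bound deterministically on the event. On $\cE_\epsilon$ with $\epsilon<\epsilon_\nu$ the empirically best set is the singleton $\Set{\ahat^\star_t}$ (as in the proof of Lemma~\ref{lem:decomposition_T_epsilon}), and $\gamma_t=0$ forces $a_{t+1}\notin\Ahat^\star(t)$, hence $a_{t+1}\in\cV_{\ahat^\star_t}$ with $\muhat_{a_{t+1}}(t)<\muhat^\star(t)$; combining the $\epsilon$-accuracy of $a_{t+1}$ and $\ahat^\star_t$ with the separation $\min_{a\neq a'}\abs{\mu_a-\mu_{a'}}>4\epsilon$ yields $\muhat^\star(t)-\muhat_{a_{t+1}}(t)>2\epsilon$ and therefore $\KLof{\muhat_{a_{t+1}}(t)}{\muhat^\star(t)}\geq 2\epsilon^2$ for both the Gaussian and, via Pinsker, the Bernoulli case. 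But $\gamma_t=0$ also means $N_{a_{t+1}}(t)\KLof{\muhat_{a_{t+1}}(t)}{\muhat^\star(t)}<\log(N_{a_{t+1}}(t))$, so $2\epsilon^2 N_{a_{t+1}}(t)<\log(N_{a_{t+1}}(t))$, which caps $N_{a_{t+1}}(t)$ by a threshold $n_0(\epsilon)<1/\epsilon^4$. Since for each arm $a$ the value $N_a(t)$ takes each integer at most once along the trajectory, the number of such times is at most $\abs{\cA}\,n_0(\epsilon)\leq\abs{\cA}/\epsilon^4$, bounding the expectation trivially. The main obstacle is precisely this step: extracting a uniform positive lower bound on the transportation cost $\KLof{\muhat_{a_{t+1}}(t)}{\muhat^\star(t)}$ from $\cE_\epsilon$ and then converting $\gamma_t=0$ into a deterministic cap on $N_{a_{t+1}}(t)$; once this is in place the remaining bookkeeping is elementary.
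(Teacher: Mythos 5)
Your handling of the second and third bounds is correct and is essentially the paper's own argument. For $\Lambda_\epsilon$ the paper indeed just reuses the \IMEDUB computation of Lemma~\ref{lem:ce_and_lambda_are_finite} verbatim. For $\Set{t\geq 1:\ \gamma_t=0}\cap\cE_\epsilon$ your chain is exactly the one in the paper: $\gamma_t=0$ forces $a_{t+1}\notin\Ahat^\star(t)$ and $\log\!\left(N_{a_{t+1}}(t)\right)>N_{a_{t+1}}(t)\KLof{\muhat_{a_{t+1}}(t)}{\muhat^\star(t)}$; on $\cE_\epsilon$ the separation $\min_{a\neq a'}\abs{\mu_a-\mu_{a'}}>4\epsilon$ gives $\muhat^\star(t)-\muhat_{a_{t+1}}(t)>2\epsilon$, hence a KL lower bound of order $\epsilon^2$ (Gaussian directly, Bernoulli via Pinsker), which combined with $\log N\leq 2\sqrt{N}$ caps $N_{a_{t+1}}(t)<1/\epsilon^4$, and the per-arm counting over pull times gives $\abs{\cA}/\epsilon^4$. (Your constant $2\epsilon^2$ is in fact the correct evaluation of $(2\epsilon)^2/2$; the paper's ``$=4\epsilon^2$'' is a harmless slip, and the conclusion is unchanged.)

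The gap is in the first bound, precisely at the step you flagged. The localization you propose --- confining the badly estimated witness to $\Set{a_{t+1}}\cup\cV_{a_{t+1}}$ --- is not legitimate. The set $\cE_\epsilon$ quantifies over \emph{all} of $\Ahat^\star(t)\cup\Set{a_{t+1}}$, and membership in $\Ahat^\star(t)$ is decided by empirical means, not by the graph: an arm arbitrarily far from $a_{t+1}$ can tie for the empirical maximum (with positive probability for Bernoulli rewards, and systematically for unpulled arms, which share the conventional value $\muhat_a(t)=0$) while being badly estimated and having at least as many pulls as $a_{t+1}$. For such a time no witness lies in $\Set{a_{t+1}}\cup\cV_{a_{t+1}}$: the strategy only guarantees that the \emph{selected leader} $\ahat^\star_t$ is adjacent to (or equal to) $a_{t+1}$, not that every element of $\Ahat^\star(t)$ is. So the pair count in the application of Lemma~\ref{unimodal concentration} cannot be reduced from $\abs{\cA}^2$ to $(d+1)\abs{\cA}$ by this argument, and what you can actually prove is $\Esp_\nu[\abs{\cE_\epsilon^c}]\leq 10\abs{\cA}^2/\epsilon^4$. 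Note that the paper itself does not prove the constant $10(d+1)\abs{\cA}/\epsilon^4$ either: its proof says ``refer to Lemma~\ref{lem:ce_and_lambda_are_finite}, it is exactly the same proof'', and that proof yields $10\abs{\cA}^2/\epsilon^4$. The defensible statement is therefore the one with $10\abs{\cA}^2/\epsilon^4$; this weakening is harmless downstream, since summing the three terms (using $d^2\abs{\cA}\leq d\abs{\cA}^2$, as in the \IMEDUB combination) still gives a reliability bound of the form $\Esp_\nu[\abs{\cT_\epsilon^c}]\leq 34\, d\abs{\cA}^2\log(1/\epsilon)/\epsilon^6$, which is all that Lemma~\ref{ucb unimodal reliability} and Theorem~\ref{th:upper bounds} require.
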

\begin{proof}
Refer to Lemma~\ref{lem:ce_and_lambda_are_finite} to prove $\Esp_\nu[\abs{\cE_\epsilon^c}] \!\leq\! \frac{10(d+1)\abs{\cA}}{\epsilon^4},\,	\Esp_\nu[\abs{\Lambda_\epsilon}] \!\leq\! 23d^2\abs{\cA}\frac{\log(1/\epsilon)}{\epsilon^6}$. It is exactly the same proof.

~\\Let $t \!\in\! \Set{t \geq 1:\ \gamma_t = 0}\cap\cE_\epsilon $. Then $ \gamma_t \!=\! 0 $. This implies
			\[
			a_{t+1} \neq \ahat^\star_t \ad \log\!\left(N_{a_{t+1}}(t)\right) > N_{a_{t+1}}(t) \KLof{\muhat_{a_{t+1}}(t)}{\muhat^\star(t)} \,.
			\]
			Since $ t \!\in\! \cE_\epsilon$ and $\epsilon \!<\! \epsilon_\nu$, we have 
			\[
			\abs{\muhat_{\ahat^\star_t}(t) - \mu_{\ahat^\star_t}} < \epsilon \ad \muhat_{a}(t)  \leq \muhat^\star(t) = \muhat_{\ahat^\star_t}(t) < \mu_{\ahat^\star_t} + \epsilon < \mu_{a} - \epsilon \,.
			\]
			This implies by Pinsker's inequality  
			\[
			\KLof{\muhat_{a_{t+1}}(t)}{\muhat^\star(t)} \geq \min\!\left(2(\muhat^\star(t) - \muhat_{a_{t+1}}(t))^2, \dfrac{(\muhat^\star(t) - \muhat_{a_{t+1}}(t))^2}{2} \right) > (2\epsilon)^2/2= 4 \epsilon^2 \,.
			\]
			In addition, for all $N \!\geq\! 1$, $\log(N) \!\leq\! 2\sqrt{N}$. Thus we have 
			\[
			2 \sqrt{N_{a_{t+1}}(t))} > 4 N_{a_{t+1}}(t) \epsilon^2 \quad \textnormal{i.e.}\quad N_{a_{t+1}}(t) < \dfrac{1}{4\epsilon^4} < \dfrac{1}{\epsilon^4}\,.
			\]
			This implies 
			\beqan
			\abs{\Set{t \geq 1:\ \gamma_t = 0}\cap\cE_\epsilon} 
			&\leq& \sum\limits_{t \geq 1}\ind_{\Set{ N_{a_{t+1}}(t) < 1/\epsilon^4}} \\
			&=& \suma\sum\limits_{t \geq 1}\ind_{\Set{ a_{t+1} = a \ad N_{a}(t) < 1/\epsilon^4}} \\
			&\leq& \suma\dfrac{1}{\epsilon^4} = \dfrac{\abs{\cA}}{\epsilon^4} \,.
			\eeqan
			
\end{proof}
Thus combining them with \eqref{eq:ucb decomp_T_epsilon} we obtain 
\beqan
\Esp_\nu[\abs{\cT_\epsilon^c}] &\leq& \Esp_\nu[\abs{\cE_\epsilon^c}] + \Esp_\nu[\abs{\Lambda_\epsilon}] + \Esp_\nu[\abs{\Set{t \geq 1:\ \gamma_t 
= 0}\cap\cE_\epsilon}] \\
&\leq& \dfrac{10(d+1)\abs{\cA}}{\epsilon^4} + 23d^2\abs{\cA}\dfrac{\log(1/\epsilon)}{\epsilon^6} +  \dfrac{\abs{\cA}}{\epsilon^4} \\
&\leq& 34d^2\abs{\cA} \dfrac{\log(1/\epsilon)}{\epsilon^6}   \,.
\eeqan
Indeed, we have
$$ \cT_\epsilon^c \subset \left(\cT_\epsilon^c\cap\Set{t \geq 1:\ \gamma_t = 1} \cap\cE_\epsilon\right)\cup \left(\Set{t \geq 1:\ \gamma_t  = 0} \cap\cE_\epsilon\right)\cup \cE_\epsilon^c \,.$$
Hence, we just proved the following lemma.
\begin{lemma}[Reliable estimators] \label{ucb unimodal reliability}For $ 0 < \epsilon < \epsilon_\nu$,
\[
\Esp_\nu[\abs{\cT_\epsilon^{c}}] \leq 34 d^2 \abs{\cA}\dfrac{\log(1/\epsilon)}{\epsilon^6} \,,
\]
 where $d$ is the maximum degree of nodes in $G$.
\end{lemma}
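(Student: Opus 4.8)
The plan is to control the bad set $\cT_\epsilon^c$ by covering it with three sets whose expected sizes are already bounded, so that only a union bound and some constant bookkeeping remain. First I would record the elementary inclusion
\[
\cT_\epsilon^c \subset \bigl(\cT_\epsilon^c\cap\Set{t\geq 1:\gamma_t=1}\cap\cE_\epsilon\bigr)\cup\bigl(\Set{t\geq 1:\gamma_t=0}\cap\cE_\epsilon\bigr)\cup\cE_\epsilon^c\,,
\]
which requires no probability: for each $t\in\cT_\epsilon^c$ one either has badly estimated means ($t\in\cE_\epsilon^c$) or $t\in\cE_\epsilon$, and in the latter case the auxiliary indicator $\gamma_t\in\Set{0,1}$ places $t$ in the first or the second set. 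This is exactly the refinement of the time axis by the process $(\gamma_t)$ that is specific to the \KLUCBUB analysis.

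Next I would replace the first piece using Lemma~\ref{lem:ucb decomposition_T_epsilon}, which gives $\cT_\epsilon^c\cap\Set{t\geq 1:\gamma_t=1}\cap\cE_\epsilon\subset\Lambda_\epsilon$. Passing to cardinalities, using subadditivity over the union and then taking expectations yields
\[
\Esp_\nu[\abs{\cT_\epsilon^c}]\leq\Esp_\nu[\abs{\Lambda_\epsilon}]+\Esp_\nu\bigl[\abs{\Set{t\geq 1:\gamma_t=0}\cap\cE_\epsilon}\bigr]+\Esp_\nu[\abs{\cE_\epsilon^c}]\,.
\]
Each term is then supplied by Lemma~\ref{lem: ucb ce_and_lambda_are_finite}: the $\Lambda_\epsilon$ term by $23d^2\abs{\cA}\log(1/\epsilon)/\epsilon^6$, the $\gamma_t=0$ term by $\abs{\cA}/\epsilon^4$, and the $\cE_\epsilon^c$ term by $10(d+1)\abs{\cA}/\epsilon^4$.

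Finally I would absorb the two $1/\epsilon^4$ contributions into the dominant $1/\epsilon^6$ term. Since $\epsilon<\epsilon_\nu\leq 1/2$ one has $\log(1/\epsilon)/\epsilon^2\geq 2$, hence $1/\epsilon^4\leq\tfrac{1}{2}\log(1/\epsilon)/\epsilon^6$; substituting and checking that the resulting coefficient stays below $34d^2$ for every $d\geq 1$ gives the claimed bound $\Esp_\nu[\abs{\cT_\epsilon^c}]\leq 34d^2\abs{\cA}\log(1/\epsilon)/\epsilon^6$. I do not expect a real obstacle in this lemma: all the content sits in its two inputs, namely the inclusion Lemma~\ref{lem:ucb decomposition_T_epsilon} and the concentration-based Lemma~\ref{lem: ucb ce_and_lambda_are_finite}, whose genuinely new ingredient relative to \IMEDUB is the Pinsker-type estimate forcing the chosen arm to have been pulled fewer than $1/\epsilon^4$ times whenever $\gamma_t=0$ on $\cE_\epsilon$. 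The only mildly delicate point is the final constant arithmetic, where one must verify that the $1/\epsilon^4$ terms genuinely collapse into the leading term uniformly over the admissible range of $\epsilon$ and $d$.
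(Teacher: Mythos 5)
Your proposal is correct and takes essentially the same route as the paper: the identical three-set covering of $\cT_\epsilon^c$ via the indicator $(\gamma_t)$, the substitution of $\Lambda_\epsilon$ through Lemma~\ref{lem:ucb decomposition_T_epsilon}, the three bounds from Lemma~\ref{lem: ucb ce_and_lambda_are_finite}, and the final absorption of the $1/\epsilon^4$ terms into the dominant $\log(1/\epsilon)/\epsilon^6$ term. The only caveat, which your write-up shares with the paper itself, is that the closing arithmetic uses $\epsilon \leq 1/2$; this holds automatically for Bernoulli configurations but in the Gaussian case it comes from the range restriction already imposed by the concentration lemmas rather than from $\epsilon < \epsilon_\nu$ alone.
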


\subsection{Upper bounds on the numbers of pulls of sub-optimal arms}
	In this section, we now combine the different results of the previous sub-sections to prove Theorem~\ref{th:upper bounds}.
	
	\begin{proof}[Proof of Theorem~\ref{th:upper bounds}.] Please refer to Section~\ref{subsec: proof theorem}. It is exactly the same proof.
	\end{proof}

\section{\IMEDUB finite time analysis}
	\label{app : d-imed_analysis}
	In this section we assume that $G$ is a tree. \dIMEDUB behaves as \IMEDUB except during second order exploration phases. Thus, \dIMEDUB strategy  implies the same lower bounds and empirical upper bounds on the numbers of pulls as \IMEDUB strategy most of times. Then similar guaranties as those obtained under \IMEDUB can be established based on the same reasoning for \dIMEDUB. These guaranties involve the numbers of pulls of arms in $\cV_{a^\star}$ which are shown to be of order $ \cO\!\left(\log(T)\right)$, and the assumption that $G$ is a tree ensures the best arms $(\aul_t)_{t\geq1}$ of the sub-trees $\big(\hat G_{\aul_t}(t)\big)_{t\geq1}$
	belong to $\cV_{a^\star}$ most of times. Then, since $\cS_t$ is built as a sub-tree of $\hat G_{\aul_t}(t)$ that contains $\aul_t$ for all time step $t \!\geq\! 1$,  the \IMED type strategy followed during the second order exploration phases implies that exploration outside $\cV_{a^\star}$ is of order $\cO\!\left(\log\!\left(\cO\!\left(\log(T)\right)\right)\right)\!=\!\cO\!\left(\log\!\log(T)\right)$. 
	

	
	\subsection{ Notations}
		Please, refer to Section~\ref{imed_unimodal notations}.
	\subsection{Strategy-based empirical bounds} 
		In this subsection, we provide empirical bounds very similar to the ones induced by \IMEDUB strategy.
	\begin{lemma}[Empirical lower bounds]\label{d-imedub unimodal empirical lower bounds}Under \dIMEDUB, at each step time $t \!\geq\! 1$, 
	\[1.\quad\forall a \in \cV_{\ahat_t^\star},\ \log\!\left(N_{a_{t+1}}(t)\right)  \leq N_{a}(t)\, \KLof{\muhat_{a}(t)}{\muhat^\star(t)} + \log\!\left(N_{a}(t)\right)  
	\textnormal{ and }  
	N_{a_{t+1}}(t) \leq N_{\ahat^\star_t}(t)\,. 
	\]
	Furthermore, if $a_{t+1} \!\notin\! \Set{\ahat^\star_t}\!\cup\!\cV_{\ahat^\star_t}$, we have 
	\[2.\quad\forall a \in \cS_t,\ \log\!\left(N_{a_{t+1}}(t)\right)  \leq N_{a}(t)\, \KLpof{\muhat_{a}(t)}{\muhat_{\aul_t}(t)} + \log\!\left(N_{a}(t)\right)  
	\textnormal{ and }  
	N_{a_{t+1}}(t) \leq N_{\aul_t}(t) \leq N_{\ahat^\star_t}(t)\,. 
	\]
	\end{lemma}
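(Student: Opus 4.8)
The plan is to mirror the proof of Lemma~\ref{unimodal empirical lower bounds}, but split the argument according to whether the current best arm $\ahat^\star_t$ is itself the minimal-primary-index arm $\aul_t$. Two elementary facts will drive everything. First, since both $\KLof{\cdot}{\cdot}$ and $\KLpof{\cdot}{\cdot}$ are non-negative, for every arm $a$ one has $\log(N_a(t)) \leq I_a(t)$ and $\log(N_a(t)) \leq I^{(\aul_t)}_a(t)$. Second, the ``diagonal'' indices collapse to pure exploration terms: because $\muhat_{\ahat^\star_t}(t) = \muhat^\star(t)$ we get $I_{\ahat^\star_t}(t) = \log(N_{\ahat^\star_t}(t))$, and because $\KLpof{\muhat_{\aul_t}(t)}{\muhat_{\aul_t}(t)} = 0$ we get $I^{(\aul_t)}_{\aul_t}(t) = \log(N_{\aul_t}(t))$.

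First I would dispose of the case $\aul_t = \ahat^\star_t$, where the algorithm pulls $a_{t+1} = \ahat^\star_t$. Here minimality of $I_{\ahat^\star_t}(t)$ over $\Set{\ahat^\star_t}\cup\cV_{\ahat^\star_t}$ gives, for every $a \in \cV_{\ahat^\star_t}$, the chain $\log(N_{a_{t+1}}(t)) = I_{\ahat^\star_t}(t) \leq I_a(t)$, which is exactly the first inequality of Part~1, while $N_{a_{t+1}}(t) = N_{\ahat^\star_t}(t)$ gives the second. The case $\aul_t \neq \ahat^\star_t$ (so that $\aul_t \in \cV_{\ahat^\star_t}$ and $a_{t+1} \in \argmin_{a \in \cS_t} I^{(\aul_t)}_a(t)$) is the heart of the argument. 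Since $\aul_t \in \cS_t$ by construction, I would chain $\log(N_{a_{t+1}}(t)) \leq I^{(\aul_t)}_{a_{t+1}}(t) \leq I^{(\aul_t)}_{\aul_t}(t) = \log(N_{\aul_t}(t))$, yielding at once $N_{a_{t+1}}(t) \leq N_{\aul_t}(t)$. Then, using that $\aul_t$ minimises the primary index over $\Set{\ahat^\star_t}\cup\cV_{\ahat^\star_t}$ together with $\log(N_{\aul_t}(t)) \leq I_{\aul_t}(t)$ and $I_{\ahat^\star_t}(t) = \log(N_{\ahat^\star_t}(t))$, I obtain simultaneously $\log(N_{a_{t+1}}(t)) \leq \log(N_{\aul_t}(t)) \leq I_{\aul_t}(t) \leq I_a(t)$ for $a \in \cV_{\ahat^\star_t}$ (Part~1) and the count chain $N_{a_{t+1}}(t) \leq N_{\aul_t}(t) \leq N_{\ahat^\star_t}(t)$.

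For Part~2 I would note that $a_{t+1} \notin \Set{\ahat^\star_t}\cup\cV_{\ahat^\star_t}$ forces the second case (the first case pulls $\ahat^\star_t$), so $a_{t+1}$ minimises $I^{(\aul_t)}_{\cdot}(t)$ over $\cS_t$; then $\log(N_{a_{t+1}}(t)) \leq I^{(\aul_t)}_{a_{t+1}}(t) \leq I^{(\aul_t)}_a(t)$ for every $a \in \cS_t$, and the count chain is precisely the one just derived. The one point deserving care, and the main obstacle, is correctly threading the two distinct minimisations, namely the primary index over the neighbourhood and the secondary index over the subtree, through the shared pivot $\aul_t$. This relies on the structural guarantee that $\aul_t \in \cS_t \subset \hat G_{\aul_t}(t)$; without this containment the pivotal bound $\log(N_{a_{t+1}}(t)) \leq \log(N_{\aul_t}(t))$ would fail, and neither the transfer to $\cV_{\ahat^\star_t}$ nor the count chain would go through.
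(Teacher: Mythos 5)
Your proof is correct and follows essentially the same route as the paper's: both rest on the non-negativity of $\KL$ and $\KLp$ (so $\log(N_a(t))$ lower-bounds every index), the diagonal identities $I_{\ahat^\star_t}(t) = \log(N_{\ahat^\star_t}(t))$ and $I^{(\aul_t)}_{\aul_t}(t) = \log(N_{\aul_t}(t))$, and the chaining of the two minimisations through the pivot $\aul_t \in \cS_t$. The only (cosmetic) difference is that you split on the algorithm's branch $\aul_t = \ahat^\star_t$ versus $\aul_t \neq \ahat^\star_t$, whereas the paper splits on whether $a_{t+1} \in \Set{\ahat^\star_t}\cup\cV_{\ahat^\star_t}$ and reduces its first case to Lemma~\ref{unimodal empirical lower bounds}; the inequality chains are identical.
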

	
	\begin{proof} 
	~\\ \underline{\textbf{Case 1}} : $a_{t+1} \!\in\! \Set{\ahat^\star_t}\!\cup\!\cV_{\ahat^\star_t}$.
	~\\ This means there is no second order exploration at time $t$ and \dIMEDUB behaves as \IMEDUB. Then point 1. is satisfied according to Lemma~\ref{unimodal empirical lower bounds}.
	
	~\\ \underline{\textbf{Case 2}} : $a_{t+1} \!\notin\! \Set{\ahat^\star_t}\!\cup\!\cV_{\ahat^\star_t}$. 
	~\\ This means $\aul_t \neq \ahat^\star_t$ and according to \dIMEDUB strategy 
	\[
	\forall a \in \cS_t,\quad \log\!\left(N_{a_{t+1}}(t)\right) \leq I_{a_{t+1}}^{(\aul_t)}(t) \leq I_{a}^{(\aul_t)}(t) =  N_{a}(t)\, \KLpof{\muhat_{a}(t)}{\muhat_{\aul_t}(t)} + \log\!\left(N_{a}(t)\right)   \,. 
	\]
	Since $\aul_t \!\in\! \cS_t$ and $I_{\aul_t}^{(\aul_t)}(t) \!=\! \log\!\left(N_{\aul_t}(t)\right) $, by taking the $\exp(\cdot)$ we get $N_{a_{t+1}}(t) \!\leq\!N_{\aul_t}(t)$ and prove point 2.\,.
	Furthermore, still according to \dIMEDUB strategy, we have
	\[
	\forall a \in \Set{\ahat^\star_t}\cup\cV_{\ahat^\star_t},\quad \log\!\left(N_{a_{t+1}}(t)\right) \leq \log\!\left(N_{\aul_t}(t)\right) \leq I_{\aul_t}(t) \leq I_{a}(t) =  N_{a}(t)\, \KLof{\muhat_{a}(t)}{\muhat^\star(t)} + \log\!\left(N_{a}(t)\right)  \,.
	\]
	Since $I_{\ahat^\star_t}(t) \!=\! \log\!\left(N_{\ahat_t^star}(t)\right)$, by taking the $\exp(\cdot)$ we get in particular $N_{a_{t+1}}(t)\!\leq\! N_{\ahat^\star_t}(t) $ and prove point 1.\,.

	\end{proof}
	\begin{lemma}[Empirical upper bounds]\label{d-imed unimodal empirical upper bounds}
		Under \dIMEDUB at each step time $t \!\geq\! 1$,
		\[
	1. \quad	N_{\aul_t}(t) \,\KLof{\muhat_{\aul_t}(t)}{\muhat^\star(t)} \leq \log(t) \,. 
		\]
	Furthermore, if $a_{t+1} \!\notin\! \Set{\ahat^\star_t}\!\cup\!\cV_{\ahat^\star_t}$, we have 
	\[
	2. \quad	N_{a_{t+1}}(t) \,\KLpof{\muhat_{a_{t+1}}(t)}{\muhat_{\aul_t}(t)} \leq \log\!\left(\dfrac{\log(t)}{\KLof{\muhat_{\aul_t}(t)}{\muhat^\star(t)}}\right) \,.
		\]
	\end{lemma}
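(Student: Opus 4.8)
The plan is to prove both inequalities by the same index-comparison mechanism already used for \IMEDUB in Lemma~\ref{unimodal empirical upper bounds}, exploiting that $\aul_t$ is the minimizer of the first order index $I_\cdot(t)$ over $\Set{\ahat^\star_t}\cup\cV_{\ahat^\star_t}$, while $a_{t+1}$ is the minimizer of the second order index $I^{(\aul_t)}_\cdot(t)$ over the sub-tree $\cS_t$. For point 1, I would first note that by the choice $\aul_t \in \argmin_{a \in \Set{\ahat^\star_t}\cup\cV_{\ahat^\star_t}}I_a(t)$ and since $\ahat^\star_t$ belongs to that set, $I_{\aul_t}(t) \leq I_{\ahat^\star_t}(t)$. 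Dropping the nonnegative exploration term $\log(N_{\aul_t}(t))$ on the left, and using that the transportation cost of a current best arm vanishes, so that $\KLof{\muhat_{\ahat^\star_t}(t)}{\muhat^\star(t)}=0$ and $I_{\ahat^\star_t}(t) = \log(N_{\ahat^\star_t}(t))$, I obtain
\[
N_{\aul_t}(t)\,\KLof{\muhat_{\aul_t}(t)}{\muhat^\star(t)} \leq I_{\aul_t}(t) \leq I_{\ahat^\star_t}(t) = \log\!\left(N_{\ahat^\star_t}(t)\right) \leq \log(t),
\]
the last step because $N_{\ahat^\star_t}(t) \leq t$. This is exactly point 1, and it is a verbatim transcription of Lemma~\ref{unimodal empirical upper bounds} with $\aul_t$ in place of $a_{t+1}$.

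For point 2, I would place myself in an exploration step, i.e. $a_{t+1} \notin \Set{\ahat^\star_t}\cup\cV_{\ahat^\star_t}$, which by the description of \dIMEDUB forces $\aul_t \neq \ahat^\star_t$ and $a_{t+1} \in \argmin_{a \in \cS_t}I^{(\aul_t)}_a(t)$. Since $\cS_t$ is by construction a sub-tree of $\hat G_{\aul_t}(t)$ containing $\aul_t$, we have $\aul_t \in \cS_t$, hence $I^{(\aul_t)}_{a_{t+1}}(t) \leq I^{(\aul_t)}_{\aul_t}(t)$. As $\KLpof{\muhat_{\aul_t}(t)}{\muhat_{\aul_t}(t)} = 0$, the right-hand side collapses to $\log(N_{\aul_t}(t))$, and discarding the nonnegative term $\log(N_{a_{t+1}}(t))$ on the left yields
\[
N_{a_{t+1}}(t)\,\KLpof{\muhat_{a_{t+1}}(t)}{\muhat_{\aul_t}(t)} \leq \log\!\left(N_{\aul_t}(t)\right).
\]

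The final step chains this with point 1: inverting $N_{\aul_t}(t)\,\KLof{\muhat_{\aul_t}(t)}{\muhat^\star(t)} \leq \log(t)$ gives $N_{\aul_t}(t) \leq \log(t)/\KLof{\muhat_{\aul_t}(t)}{\muhat^\star(t)}$, and applying the increasing function $\log(\cdot)$ produces the claimed bound. I expect this chaining to be the only mildly delicate point, because it requires positivity of $\KLof{\muhat_{\aul_t}(t)}{\muhat^\star(t)}$ to invert and take the logarithm. I would handle the degenerate case cleanly: if that divergence is zero then the right-hand side is $+\infty$ under the division convention and point 2 holds trivially, whereas if it is strictly positive the inversion and the monotonicity of $\log$ are legitimate, so the bound holds in all cases. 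Everything else is a direct reuse of the \IMEDUB argument.
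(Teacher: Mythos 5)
Your proposal is correct and follows essentially the same route as the paper's proof: point 1 by comparing $I_{\aul_t}(t)\leq I_{\ahat^\star_t}(t)=\log\!\left(N_{\ahat^\star_t}(t)\right)\leq\log(t)$, and point 2 by comparing second-order indexes $I^{(\aul_t)}_{a_{t+1}}(t)\leq I^{(\aul_t)}_{\aul_t}(t)=\log\!\left(N_{\aul_t}(t)\right)$ and then chaining with point 1. Your explicit treatment of the degenerate case $\KLof{\muhat_{\aul_t}(t)}{\muhat^\star(t)}=0$ is a small additional care the paper leaves implicit, but it does not change the argument.
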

	
	\begin{proof} 1. According to the followed strategy, we have
		\[
		I_{\aul_t}(t) \leq I_{\ahat^\star_t}(t) \,.
		\]
		It remains, to conclude, to note that
		\[
		N_{\aul_t}(t) \KLof{\muhat_{\aul_t}(t)}{\muhat^\star(t)}  
		\leq I_{\aul_t}(t)\,, 
		\]
		and \[
		I_{\ahat^\star_t}(t) =  \log(N_{\ahat^\star_t}(t)) \leq \log(t) \,.
		\]
	2. We assume that $a_{t+1} \!\notin\! \Set{\ahat^\star_t}\!\cup\!\cV_{\ahat^\star_t}$. According to the followed strategy, we have
		\[
		I_{a_{t+1}}^{(\aul_t)}(t) \leq I_{\aul_t}^{(\aul_t)}(t) \,.
		\]
	Furthermore, by definition of the second order \IMED indexes we have
		\[
		N_{a_{t+1}}(t) \KLpof{\muhat_{a_{t+1}}(t)}{\muhat_{\aul_t}(t)}  
		\leq I_{a_{t+1}}^{(\aul_t)}(t)\,, \]
		and 
		\[
		I_{\aul_t}^{(\aul_t)}(t) =  \log(N_{\aul_t}(t)) \,.
		\]
	We conclude the proof using point 1. we  just proved.
	\end{proof}

\subsection{Reliable current best arm and  means}
	
	As in \IMEDUB analysis, we consider the subset $\cT_\epsilon$ of times where everything is well behaved, that is: the current best arm corresponds to the true one and the empirical means of the best arm, the arm with minimal current index and the current chosen arm are $\epsilon$-accurate for $0<\epsilon<\epsilon_\nu$, i.e.
	\[
		\cT_\epsilon \coloneqq \Set{t \geq 1:\ \Ahat^\star(t) = \Set{a^\star} \ad  \forall a \in \Set{ a^\star, \aul_t, a_{t+1} },\ \abs{\muhat_a(t) - \mu_a } < \epsilon }\,.
		\]
	 We will show that its complementary set is finite on average. In order to prove this we decompose the set $\cT_\epsilon$ in the following way. Let $\cE_\epsilon$ be the set of times where the means are well estimated,
	 \[
		\cE_\epsilon \coloneqq \Set{t \geq 1:\ \forall a \in \Ahat^\star(t)\cup\Set{\aul_t, a_{t+1} },\ \abs{\muhat_a(t) - \mu_a } < \epsilon }\,,
	  \]
	 and $\Lambda_\epsilon$ the set of times where an arm that is not the current optimal neither pulled is underestimated
	 {\small 
			\[
			\Lambda_\epsilon \!\coloneqq \!\Set{\!t \geq 1 :\, \exists a \in \cV_{\ahat^\star_t}\!\setminus\!\Set{a_{t+1},\ahat^\star_t} \textnormal{\,s.t.\,} \left\{\begin{array}{l}
				\!\muhat_a(t) < \mu_a - \epsilon\\
				\!\log\!\left(N_{a_{t+1}}(t)\right) \leq  N_a(t) \KLof{\muhat_a(t)}{ \mu_a - \epsilon}  + \log\!\left((N_{a}(t)\right)   
				\end{array} \right.  \!\!}\,.
			\]
		}
Then we get the same relation between these sets as for \IMEDUB strategy.	
\begin{lemma}[Relations between the subsets of times]\label{lem: d-imed decomposition_T_epsilon}
For $ 0 < \epsilon < \epsilon_\nu$,
\begin{equation}
    \cT_\epsilon^c\setminus\cE_\epsilon^c  \subset \Lambda_\epsilon   \,.
\label{eq: d-imed decomp_T_epsilon}
\end{equation}
\end{lemma}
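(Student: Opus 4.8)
The plan is to reproduce, essentially verbatim, the argument of Lemma~\ref{lem:decomposition_T_epsilon}, the only new ingredient being that \dIMEDUB enjoys the \emph{same} first-order empirical lower bound as \IMEDUB, namely point~1 of Lemma~\ref{d-imedub unimodal empirical lower bounds}, which holds at every time step irrespective of whether a second-order exploration occurs. Concretely, I would fix $t \in \cT_\epsilon^c\setminus\cE_\epsilon^c$, so that $t \in \cE_\epsilon$ and $\abs{\muhat_a(t)-\mu_a}<\epsilon$ for every $a \in \Ahat^\star(t)\cup\Set{\aul_t,a_{t+1}}$. Since $\epsilon<\epsilon_\nu$, applying the triangle inequality to any two arms of $\Ahat^\star(t)$ (whose empirical means both coincide with $\muhat^\star(t)$) forces their true means to lie within $2\epsilon<\min_{a\neq a'}\abs{\mu_a-\mu_{a'}}$ of each other, which is impossible unless the two arms are equal; hence $\Ahat^\star(t)=\Set{\ahat^\star_t}$ is a singleton.

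Next I would observe that, because $\Ahat^\star(t)=\Set{\ahat^\star_t}$, the defining conditions of $\cT_\epsilon$ and $\cE_\epsilon$ coincide on such $t$, so $t\in\cE_\epsilon\setminus\cT_\epsilon$ can only happen if $\ahat^\star_t\neq a^\star$. Unimodality of $(\mu_a)_{a\in\cA}$ then yields a neighbour $a\in\cV_{\ahat^\star_t}$ with $\mu_a>\mu_{\ahat^\star_t}$ (the first step of the increasing path towards $a^\star$). For this $a$, the $\epsilon$-accuracy granted by $\cE_\epsilon$ together with $\epsilon<\epsilon_\nu$ gives the chain $\muhat_a(t)\leq\muhat^\star(t)=\muhat_{\ahat^\star_t}(t)<\mu_{\ahat^\star_t}+\epsilon<\mu_a-\epsilon$, where the last inequality uses $\mu_a-\mu_{\ahat^\star_t}\geq 4\epsilon_\nu>2\epsilon$. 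In particular $\muhat_a(t)<\mu_a-\epsilon$, which simultaneously certifies $a\neq a_{t+1}$ (since $\abs{\muhat_{a_{t+1}}(t)-\mu_{a_{t+1}}}<\epsilon$) and $a\neq\ahat^\star_t$, so that $a\in\cV_{\ahat^\star_t}\setminus\Set{a_{t+1},\ahat^\star_t}$ as required by the definition of $\Lambda_\epsilon$.

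Finally I would invoke point~1 of Lemma~\ref{d-imedub unimodal empirical lower bounds}, which provides $\log(N_{a_{t+1}}(t))\leq N_a(t)\KLof{\muhat_a(t)}{\muhat^\star(t)}+\log(N_a(t))$, and upgrade the divergence term by monotonicity of $\KL(\cdot|\cdot)$ in its second argument: since $\muhat^\star(t)<\mu_a-\epsilon$, one has $\KLof{\muhat_a(t)}{\muhat^\star(t)}\leq\KLof{\muhat_a(t)}{\mu_a-\epsilon}$. Combining the two displays delivers precisely the event defining $\Lambda_\epsilon$ at this $a$, hence $t\in\Lambda_\epsilon$. The only point deserving care — and the reason the inclusion holds verbatim despite the extra dichotomic exploration of \dIMEDUB — is that the first-order empirical lower bound of Lemma~\ref{d-imedub unimodal empirical lower bounds} is asserted for \emph{all} $a\in\cV_{\ahat^\star_t}$ at \emph{every} step, including the second-order exploration steps where $a_{t+1}\notin\Set{\ahat^\star_t}\cup\cV_{\ahat^\star_t}$. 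Consequently no separate treatment of the exploration branch is needed in this lemma, and the entire \IMEDUB argument transfers unchanged.
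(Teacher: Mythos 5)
Your proposal is correct and follows exactly the route the paper intends: the paper's own proof of this lemma is literally the statement that it is ``exactly the same'' as the proof of Lemma~\ref{lem:decomposition_T_epsilon}, and you have spelled out that transfer, correctly identifying that the only point requiring verification is that point~1 of Lemma~\ref{d-imedub unimodal empirical lower bounds} holds at \emph{every} time step of \dIMEDUB, including second-order exploration steps, so the \IMEDUB argument carries over verbatim.
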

\begin{proof} The proof is exactly the same as for Lemma~\ref{lem:decomposition_T_epsilon}.
\end{proof}
We can now resort to classical concentration arguments in order to control the size of these sets, which 
yields the following upper bounds. 
\begin{lemma}[Bounded subsets of times]For $ 0 < \epsilon < \epsilon_\nu$, 
 \[\Esp_\nu[\abs{\cE_\epsilon^c}] \leq \dfrac{10\abs{\cA}^2}{\epsilon^4} \qquad		\Esp_\nu[\abs{\Lambda_\epsilon}] \leq 23d^2\abs{\cA}\dfrac{\log(1/\epsilon)}{\epsilon^6} \,,
 \]
 where $d$ is the maximum degree of nodes in $G$.
 \label{lem: d-imed ce_and_lambda_are_finite}
\end{lemma}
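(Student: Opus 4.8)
The plan is to mirror the proof of Lemma~\ref{lem:ce_and_lambda_are_finite} almost verbatim, exploiting the fact that \dIMEDUB shares with \IMEDUB the very same empirical lower bounds on the arms of $\cV_{\ahat^\star_t}$. Indeed, point~1 of Lemma~\ref{d-imedub unimodal empirical lower bounds} gives, for every $a \in \cV_{\ahat^\star_t}$, both $\log(N_{a_{t+1}}(t)) \leq N_a(t)\KLof{\muhat_a(t)}{\muhat^\star(t)} + \log(N_a(t))$ and $N_{a_{t+1}}(t) \leq N_{\ahat^\star_t}(t)$, which are exactly the two inequalities driving the \IMEDUB argument.

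For $\Esp_\nu[\abs{\cE_\epsilon^c}]$, I would first record that point~1 together with the choice $\ahat^\star_t \in \argmin_{\ahat^\star \in \Ahat^\star(t)} N_{\ahat^\star}(t)$ yields $N_{a_{t+1}}(t) \leq N_{\ahat^\star}(t)$ for every $\ahat^\star \in \Ahat^\star(t)$, while point~2 (and, when $a_{t+1} \in \Set{\ahat^\star_t} \cup \cV_{\ahat^\star_t}$, the trivial identity $a_{t+1} = \aul_t$ forced by the tree structure of $G$) yields $N_{a_{t+1}}(t) \leq N_{\aul_t}(t)$. Hence every arm appearing in the defining set $\Ahat^\star(t) \cup \Set{\aul_t, a_{t+1}}$ of $\cE_\epsilon$ has been pulled at least as often as $a_{t+1}$, which is precisely the hypothesis $N_{a'}(t) \geq N_a(t)$ needed to invoke the concentration Lemma~\ref{unimodal concentration}. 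Summing the resulting $10/\epsilon^4$ bound over all ordered pairs $(a, a') \in \cA^2$ gives $\Esp_\nu[\abs{\cE_\epsilon^c}] \leq 10\abs{\cA}^2/\epsilon^4$.

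For $\Esp_\nu[\abs{\Lambda_\epsilon}]$, the set $\Lambda_\epsilon$ is defined identically to the \IMEDUB case and involves only arms $a \in \cV_{\ahat^\star_t}$, for which the \dIMEDUB lower bound of point~1 coincides with the \IMEDUB one. I would therefore replay that argument unchanged: rewrite the defining condition through the equivalence $\log(N_{a_{t+1}}(t)) \leq N_a(t)\KLof{\muhat_a(t)}{\lambda_a} + \log(N_a(t)) \Leftrightarrow N_{a_{t+1}}(t) \leq N_a(t)\exp(N_a(t)\KLof{\muhat_a(t)}{\lambda_a})$ with $\lambda_a = \mu_a - \epsilon$, decouple the sum over $t$ via the stopping times $\tau_{a,n}$ and the local empirical means $\muhat_a^n$, and bound each inner expectation through the large-deviation Lemma~\ref{unimodal large deviation}, i.e.\ by $23\log(1/\epsilon)/\epsilon^6$. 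Summing successively over $\ahat^\star \in \cA$, then $a' \in \Set{\ahat^\star} \cup \cV_{\ahat^\star}$, then $a \in \cV_{\ahat^\star} \setminus \Set{a', \ahat^\star}$ produces the factor $d^2\abs{\cA}$, giving $\Esp_\nu[\abs{\Lambda_\epsilon}] \leq 23d^2\abs{\cA}\log(1/\epsilon)/\epsilon^6$.

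The only genuinely new point, and the step I would check most carefully, is that the extra arm $\aul_t$ now appearing in the definition of $\cE_\epsilon$ (absent in the \IMEDUB analysis) is dominated by $a_{t+1}$ in the number-of-pulls ordering, so that it is covered by the concentration lemma without enlarging the constant. This is exactly what $N_{a_{t+1}}(t) \leq N_{\aul_t}(t)$ from point~2 of Lemma~\ref{d-imedub unimodal empirical lower bounds} guarantees. Once this is in place, nothing in the argument is sensitive to the second-order exploration mechanism, and both bounds follow as stated.
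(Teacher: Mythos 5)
Your proof of the $\Esp_\nu[\abs{\cE_\epsilon^c}]$ bound follows the paper's own argument: the paper likewise invokes Lemma~\ref{d-imedub unimodal empirical lower bounds} to get $N_{a_{t+1}}(t) \leq N_{\aul_t}(t) \leq N_{\ahat^\star_t}(t) \leq N_{\ahat^\star}(t)$ for every $\ahat^\star \in \Ahat^\star(t)$, and then sums the bound of Lemma~\ref{unimodal concentration} over ordered pairs of arms. Your handling of the exploitation case (where $a_{t+1} \in \Set{\ahat^\star_t}\cup\cV_{\ahat^\star_t}$ forces $a_{t+1}=\aul_t$ by the tree structure) is in fact spelled out more carefully than in the paper, which asserts the chain of inequalities without this case distinction.

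For $\Esp_\nu[\abs{\Lambda_\epsilon}]$, however, ``replaying the \IMEDUB argument unchanged'' --- which is also exactly what the paper does (``it is exactly the same proof'') --- hides a genuine \dIMEDUB-specific issue, and it is not the one you flagged as the point to check. The \IMEDUB computation rewrites $\sum_{t\geq1}\sum_{a}$ as a sum over $\ahat^\star\in\cA$, $a' \in \Set{\ahat^\star}\cup\cV_{\ahat^\star}$, $n\geq1$ of indicators $\ind_{\Set{\ahat^\star_t=\ahat^\star,\,a_{t+1}=a',\,N_a(t)=n}}$; this step is an identity only because \IMEDUB always pulls inside the leader's neighbourhood. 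Under \dIMEDUB, a second-order exploration step can have $a_{t+1}\in\cS_t$ outside $\Set{\ahat^\star_t}\cup\cV_{\ahat^\star_t}$, and such a time can perfectly well belong to $\Lambda_\epsilon$, since its defining condition only asks for some underestimated $a\in\cV_{\ahat^\star_t}\!\setminus\!\Set{a_{t+1},\ahat^\star_t}$; these times are silently dropped by the decomposition, so the key ``equality'' fails. The repair is to let $a'$ range over all of $\cA$ --- the inner time-sum bound $\sum_t \ind_{\Set{a_{t+1}=a',\,N_{a'}(t)\leq n\exp(n\KLof{\muhat_a^n}{\lambda_a})}} \leq n\exp(n\KLof{\muhat_a^n}{\lambda_a})$ and Lemma~\ref{unimodal large deviation} are insensitive to where $a'$ lies --- but the count then becomes $23d\abs{\cA}^2\log(1/\epsilon)/\epsilon^6$ rather than the stated $23d^2\abs{\cA}\log(1/\epsilon)/\epsilon^6$. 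Since $d\leq\abs{\cA}$ this is weaker, yet it still suffices for Lemma~\ref{d-imedub unimodal reliability} (whose constant is $33d\abs{\cA}^2$), so nothing downstream breaks; but, strictly speaking, neither your proof nor the paper's establishes the constant $d^2\abs{\cA}$ as stated.
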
	
\begin{proof}
Refer to Lemma~\ref{lem:ce_and_lambda_are_finite} to prove $	\Esp_\nu[\abs{\Lambda_\epsilon}] \!\leq\! 23d^2\abs{\cA}\frac{\log(1/\epsilon)}{\epsilon^6}$. It is exactly the same proof.

Using Lemma~\ref{d-imedub unimodal empirical lower bounds} we have
		\[
		\forall t \geq 1, \quad N_{a_{t+1}}(t) \leq N_{\aul_t} \leq N_{\ahat^\star_t}(t) \,.
		\]
		Since $ \ahat^\star_t \!\in\!\argmin\limits_{\ahat^\star \in \Ahat^\star(t)}N_{\ahat^\star}(t)$, this implies
		\[
			\forall t \geq 1,\forall \ahat^\star \in \Ahat^\star(t),  \quad N_{a_{t+1}}(t) \leq N_{\ahat^\star_t}(t) \leq N_{\ahat^\star}(t) \,.
		\]
		Then, based on the concentration inequalities from Lemma~\ref{unimodal concentration},  we obtain
		\beqan 
		\Esp_\nu[\abs{\cE_\epsilon^c}] 
		&\leq& \sum\limits_{a, a' \in \cA} \Esp_\nu\!\left[\sum\limits_{t\geq1}{\ind_{\Set{a_{t+1}=a,\ N_{a'}(t) \geq  N_{a}(t),\  \abs{\muhat_{a'}(t) - \mu_{a'}} \geq \epsilon }}}\right] \\
		&\leq & \sum\limits_{a, a' \in \cA} \dfrac{10}{\epsilon^4} \\
		&\leq&  \dfrac{10\abs{\cA}^2}{\epsilon^4} \,.
		\eeqan

\end{proof}
Thus combining them with \eqref{eq: d-imed decomp_T_epsilon} we obtain 
\beqan
\Esp_\nu[\abs{\cT_\epsilon^c}] &\leq& \Esp_\nu[\abs{\cE_\epsilon^c}] + \Esp_\nu[\abs{\Lambda_\epsilon}]\\
&\leq& \dfrac{10\abs{\cA}^2}{\epsilon^4} + 23d^2\abs{\cA}\dfrac{\log(1/\epsilon)}{\epsilon^6}  \\
&\leq& 33 d\abs{\cA}^2 \dfrac{\log(1/\epsilon)}{\epsilon^6}   \,.
\eeqan
Indeed, we have
$$ \cT_\epsilon^c \subset \left(\cT_\epsilon^c\setminus\cE_\epsilon^c\right)\cup \cE_\epsilon^c \,.$$
Hence, we just proved the following lemma.
\begin{lemma}[Reliable estimators] \label{d-imedub unimodal reliability}For $ 0 < \epsilon < \epsilon_\nu$,
\[
\Esp_\nu[\abs{\cT_\epsilon^{c}}] \leq 33 d\abs{\cA}^2 \dfrac{\log(1/\epsilon)}{\epsilon^6} \,,
\]
 where $d$ is the maximum degree of nodes in $G$.
\end{lemma}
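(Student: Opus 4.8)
The plan is to mirror the \IMEDUB reliability argument (Lemma~\ref{unimodal reliability}), since all the substantive work has already been carried out in the two immediately preceding lemmas and what remains is pure bookkeeping. First I would record the elementary set inclusion valid for any two subsets of times,
\[
\cT_\epsilon^c \subset \left(\cT_\epsilon^c \setminus \cE_\epsilon^c\right) \cup \cE_\epsilon^c \,,
\]
and then substitute the inclusion $\cT_\epsilon^c \setminus \cE_\epsilon^c \subset \Lambda_\epsilon$ supplied by Lemma~\ref{lem: d-imed decomposition_T_epsilon}. This yields $\cT_\epsilon^c \subset \Lambda_\epsilon \cup \cE_\epsilon^c$, hence the pointwise cardinality bound $\abs{\cT_\epsilon^c} \leq \abs{\Lambda_\epsilon} + \abs{\cE_\epsilon^c}$ along every trajectory.

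Next I would take expectations under $\Esp_\nu$ and insert the two bounds of Lemma~\ref{lem: d-imed ce_and_lambda_are_finite}, namely $\Esp_\nu[\abs{\cE_\epsilon^c}] \leq 10\abs{\cA}^2/\epsilon^4$ and $\Esp_\nu[\abs{\Lambda_\epsilon}] \leq 23 d^2\abs{\cA}\log(1/\epsilon)/\epsilon^6$, to obtain
\[
\Esp_\nu[\abs{\cT_\epsilon^c}] \leq \frac{10\abs{\cA}^2}{\epsilon^4} + 23 d^2\abs{\cA}\frac{\log(1/\epsilon)}{\epsilon^6} \,.
\]
Finally I would collapse this into the advertised single envelope $33\, d\abs{\cA}^2\log(1/\epsilon)/\epsilon^6$ using two crude monotonicity facts valid on the admissible range (in particular $\epsilon \leq 1/2$): first, $1/\epsilon^4 \leq \log(1/\epsilon)/\epsilon^6$, which is equivalent to $\epsilon^2 \leq \log(1/\epsilon)$ and holds because $\log(1/\epsilon) - \epsilon^2$ is decreasing in $\epsilon$ and stays positive at $\epsilon = 1/2$; and second, $1 \leq d \leq \abs{\cA}$ from connectedness of $G$. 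The first fact (together with $d \geq 1$) upgrades the $\cE_\epsilon^c$ term to $10\, d\abs{\cA}^2\log(1/\epsilon)/\epsilon^6$, the second converts $23 d^2\abs{\cA}$ into $23\, d\abs{\cA}^2$, and adding the constants $10 + 23 = 33$ closes the bound.

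I do not anticipate any genuine obstacle, since the lemma is merely a repackaging step. The only point demanding the slightest care is verifying the two numerical inequalities on the range of $\epsilon$ so that the heterogeneous powers $\epsilon^{-4}$ and $\epsilon^{-6}$ can be merged under the common $\epsilon^{-6}\log(1/\epsilon)$ factor. All the non-trivial content lives in the earlier lemmas that I may assume: the inclusion $\cT_\epsilon^c \setminus \cE_\epsilon^c \subset \Lambda_\epsilon$, which exploits unimodality to produce a strictly-better neighbour of any erroneous leader together with the \dIMEDUB empirical lower bound of Lemma~\ref{d-imedub unimodal empirical lower bounds}, and the large-deviation control of $\abs{\Lambda_\epsilon}$ via the concentration lemmas.
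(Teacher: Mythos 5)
Your proof is correct and essentially identical to the paper's: the same decomposition $\cT_\epsilon^c \subset \left(\cT_\epsilon^c\setminus\cE_\epsilon^c\right)\cup\cE_\epsilon^c$, the same substitution of the inclusion $\cT_\epsilon^c\setminus\cE_\epsilon^c\subset\Lambda_\epsilon$ from Lemma~\ref{lem: d-imed decomposition_T_epsilon}, and the same expectation bounds from Lemma~\ref{lem: d-imed ce_and_lambda_are_finite}, summed and collapsed into the single envelope $33\,d\abs{\cA}^2\log(1/\epsilon)/\epsilon^6$. The only difference is that you explicitly justify the final merging step (via $\epsilon^2\leq\log(1/\epsilon)$ on the admissible range and $1\leq d\leq\abs{\cA}$), which the paper asserts without detail.
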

\subsection{\label{subsec: d-imedub proof theorem}Upper bounds on the numbers of pulls of sub-optimal arms}
	In this section, we now combine the different results of the previous sub-sections to prove Theorem~\ref{th:upper bounds}.

	\begin{proof}[Proof of Theorem~\ref{th:upper bounds}.] From Lemma~\ref{d-imedub unimodal reliability}, considering the following subset of times
	\[
		\cT_\epsilon \coloneqq \left\{\begin{array}{l}
		    \hspace{-2mm} t \geq 1:\  \Ahat^\star(t) = \Set{a^\star}  \\
		     \hspace{10mm}\forall a \in \Set{ a^\star, \aul_t, a_{t+1} },\ \abs{\muhat_a(t) - \mu_a } < \epsilon 
		\end{array} \! \right\}\,. 
		\]
		we have 
		\[
		\Esp_\nu[\abs{\cT_\epsilon^{ c}}] \leq 33d\abs{\cA}^2\dfrac{\log(1/\epsilon)}{\epsilon^6} \,,
		\]
		where $d$ is the maximum degree of nodes in $G$.
		Then, let us consider $ a \!\neq\! a^\star$ and a time step $t \!\in\! \cT_\epsilon$ such that $a_{t+1} \!=\! a$. Since $ t \!\in\! \cT_\epsilon $, we have
		\[
		\ahat^\star_t = a^\star \ad \abs{\muhat_a(t) - \mu_a}, \abs{\muhat_{\aul_t}(t) -\mu_{\aul_t}},  \abs{\muhat_{a^\star}(t) - \mu_{a^\star}} < \epsilon \,.
		\]
		Then $\aul_t \neq a^\star$ and, by construction of $\alpha_\nu(\cdot)$ (see Section~\ref{imed_unimodal notations} \!Notations),
		\[
		\KLof{\muhat_{\aul_t}(t)}{\muhat^\star(t)} = \KLof{\muhat_{\aul_t}(t)}{\muhat_{a^\star}(t)} \geq \dfrac{\KLof{\mu_{\aul_t}}{\mu_{a^\star}}}{1 + \alpha_\nu(\epsilon)} \,. \] 
		
		~\\ \underline{\textbf{Case 1}} : $a_{t+1} \!\in\! \Set{\ahat^\star_t}\!\cup\!\cV_{\ahat^\star_t}$, that is $a = \aul_t \in \cV_{a^\star}$
		~\\ Then from Lemma~\ref{d-imed unimodal empirical upper bounds} we get
		\[
		N_a(t) \KLof{\muhat_a(t)}{\muhat^\star(t)} \leq \log(t) \leq \log(T) \,,
		\]
		This implies 
		\[N_{a}(t) \leq \dfrac{1 + \alpha_\nu(\epsilon)}{\KLof{\mu_a}{\mu_{a^\star}}} \log(T)  \,.
		\]
		
		~\\ \underline{\textbf{Case 2}} : $a_{t+1} \!\notin\! \Set{\ahat^\star_t}\!\cup\!\cV_{\ahat^\star_t}$, that is $a \in G_{\aul_t}\!\!\setminus\!\Set{\aul_t}$ and $\aul_t \in \cV_{a^\star}$
		~\\ Then from Lemma~\ref{d-imed unimodal empirical upper bounds} we get
		\[
		N_a(t) \KLpof{\muhat_a(t)}{\muhat_{\aul_t}(t)} \!\leq\! \log\!\left(\dfrac{\log(t)}{\KLof{\muhat_{\aul_t}(t)}{\muhat^\star(t)}}\right)\!\leq\!\log\!\left(\dfrac{1 + \alpha_\nu(\epsilon)}{\KLof{\mu_{\aul_t}}{\mu_{a^\star}}} \log(T)\!\!\right) \!\leq\! \log\!\left(\dfrac{1 + \alpha_\nu(\epsilon)}{\min\limits_{\aul \in \cV_{a^\star}}\KLof{\mu_{\aul}}{\mu_{a^\star}}} \log(T)\!\!\right).
		\]
		Since $G$ is a tree and $a \in G_{\aul_t}\!\!\setminus\!\Set{\aul_t}$, we have $\mu_a < \mu_{\aul_t}$. Since $ \epsilon < \epsilon_\nu$, we have $ \muhat_a(t) < \mu_a + \epsilon < \mu_{\aul_t} -\epsilon < \muhat_{\aul_t}(t)  $ and $\KLpof{\muhat_a(t)}{\muhat_{\aul_t}(t)} = \KLof{\muhat_a(t)}{\muhat_{\aul_t}(t)}$.
		By construction of $\alpha_\nu(\cdot)$, it comes
		\[
		\KLof{\muhat_a(t)}{\muhat_{\aul_t}(t)} = \KLof{\muhat_a(t)}{\muhat_{\aul_t}(t)} \geq \dfrac{\KLof{\mu_a}{\mu_{\aul_t}}}{1 + \alpha_\nu(\epsilon)} \geq \dfrac{1}{1 + \alpha_\nu(\epsilon)} \min\limits_{\aul \in \cV_{a^\star}}\KLof{\mu_a}{\mu_{\aul}}\,.\] 
		Then we have  
		\[N_{a}(t) \leq \dfrac{1 + \alpha_\nu(\epsilon)}{\min\limits_{\aul \in \cV_{a^\star}}\KLof{\mu_a}{\mu_{\aul}}} \log\!\left(\dfrac{1 + \alpha_\nu(\epsilon)}{\min\limits_{\aul \in \cV_{a^\star}}\KLof{\mu_{\aul}}{\mu_{a^\star}}} \log(T)\!\!\right)  \,.
		\]

		~\\ Thus, we have shown that for $a \!\neq\! a^\star$, for all $t \!\in\! \cT_\epsilon $ such that $ a_{t+1} = a $, 
		\[
		N_{a}(T) \!\leq\! \left\{\hspace{-2mm}\begin{array}{ll}
		\dfrac{1 \!+\! \alpha_\nu(\epsilon)}{\KLof{\mu_a}{\mu_{a^\star}}} \log(T)  &\hspace{-3mm}, \textnormal{ if } a \in \cV_{a^\star} \\
	\dfrac{1 + \alpha_\nu(\epsilon)}{\min\limits_{\aul \in \cV_{a^\star}}\KLof{\mu_a}{\mu_{\aul}}} \log\!\left(\dfrac{1 + \alpha_\nu(\epsilon)}{\min\limits_{\aul \in \cV_{a^\star}}\KLof{\mu_{\aul}}{\mu_{a^\star}}} \log(T)\!\!\right) &\hspace{-3mm}, \textnormal{ otherwise}.
		\end{array}\right. 
		\]
		This implies:
	\[
		N_{a}(T) \!\leq\! \left\{\hspace{-2mm}\begin{array}{ll}
		\dfrac{1 \!+\! \alpha_\nu(\epsilon)}{\KLof{\mu_a}{\mu_{a^\star}}} \log(T)   + \abs{\cT_\epsilon^c} + 1 &\hspace{-3mm}, \textnormal{ if } a \in \cV_{a^\star} \\
	\dfrac{1 + \alpha_\nu(\epsilon)}{\min\limits_{\aul \in \cV_{a^\star}}\KLof{\mu_a}{\mu_{\aul}}} \log\!\left(\dfrac{1 + \alpha_\nu(\epsilon)}{\min\limits_{\aul \in \cV_{a^\star}}\KLof{\mu_{\aul}}{\mu_{a^\star}}} \log(T)\!\!\right)  + \abs{\cT_\epsilon^c} + 1  &\hspace{-3mm}, \textnormal{ otherwise}.
		\end{array}\right. 
		\]
		Averaging these inequalities allows us to conclude.
	\end{proof}

\section{Details on numerical experiments}
\label{app: exp}
In this section we briefly describe how the subsets $(\cS_t)_{t\geq1}$ used in \dIMEDUB are dynamically chosen for the experiments.
We assume in this section that $\cA \!=\! \llbracket 1,A\rrbracket$ with $A \!\geq\! 2$.  

Let us introduce  the function $d(\cdot)$ that extracts  dichotomously from an interval $\llbracket a, a'\rrbracket$ a subset of arms from their extreme values to its median.
 \begin{algorithm}[H]
		\caption{Dichotomous function $d(\cdot)$}
		\begin{algorithmic}
		\INPUT $\llbracket a, a' \rrbracket \subset \cA $, where $a < a'$
		\IF{$a' - a $ < 4}
		\STATE \textbf{return} $ \llbracket a, a' \rrbracket $ 
		\ELSE
		\STATE \textbf{return} $\Set{a, a'} \cup d\!\left(\llbracket a +  \lfloor (a' -a)/4\rfloor , a' - \lfloor (a' - a)/4\rfloor   \rrbracket\right) $ 
		\ENDIF
	\end{algorithmic}
\end{algorithm}
Using function $d(\cdot)$ we dynamically build a sequence of subsets $(\tilde \cS_t)_{t \geq1}$ as described in Algorithm~\ref{alg: s}, where:
\begin{itemize}[label=-, leftmargin = *, topsep = 0cm, parsep = 0cm , itemsep = 0cm]
    \item $median(\cdot)$ returns the median of input subset,
    \item $list(\cdot)$ creates the list (indexed from $1$) of input elements,
    \item $index(e,L)$ returns the index of element $e$ in list $L$,
    \item $element(I,L)$ returns the elements of list $L$ with indexes in $I$,
    \item $distance(a, \cS) = \min_{a' \in \cS} \abs{a' - a}$, for all arm $a \in \cA$ and all subset of arms $\cS \subset \cA$,
    \item $ append(e, L)$ returns list $L$ to which is added element $e$.
\end{itemize}
 \begin{algorithm}[H]
 
		\caption{Dynamic sequence of subsets $(\tilde \cS_t)_{t\geq1}$}
		\begin{algorithmic}
		\label{alg: s}
		\STATE $ \tilde \cS_1 \gets d(\cA) $
		\STATE $ \tilde a^\star_1 \gets median(\cS_1) $
		\STATE $ \text{List}_{\tilde\cS} \gets list(\tilde\cS_1) $
		\STATE $ \text{List}_{\tilde a^\star} \gets list(\tilde a^\star_1) $
		\FOR{$t = 2 \dots T$}
		\IF{$\ahat^\star_t \in \tilde\cS_{t-1}$}
		\STATE $\tilde a^\star_t \gets \ahat^\star_t$
		\IF{$\tilde a^\star_t \in \text{List}_{\tilde  a^\star} $}
		\STATE $i \gets index(\ahat^\star_t, \text{List}_{\tilde a^\star})$
		\STATE $\tilde\cS_t \gets element(i, \text{List}_{\tilde \cS})$
		\vspace{2mm}\STATE $\text{List}_{\tilde\cS} \gets element(\llbracket 1, i \rrbracket, \text{List}_{\tilde \cS})   $
		\STATE $\text{List}_{\tilde a^\star} \gets element(\llbracket 1, i \rrbracket, \text{List}_{\tilde a^\star})   $
		\ELSE
		\STATE $ \Delta \gets distance(\tilde a^\star_t, \tilde\cS_{t-1}\!\setminus\!\Set{\tilde a^\star_t}) $
		\STATE $\tilde \cS_t \gets \tilde \cS_{t-1}\cup d(\llbracket \tilde a^\star_t - \Delta, \tilde a^\star_t + \Delta \rrbracket) $
		\vspace{2mm}\STATE $\text{List}_{\tilde\cS} \gets append(\tilde \cS_t, \text{List}_{\tilde \cS})   $
		\STATE $\text{List}_{\tilde a^\star} \gets append(\tilde a^\star_t, \text{List}_{\tilde a^\star})   $
		
		\ENDIF
		\ENDIF
		\ENDFOR
	\end{algorithmic}
\end{algorithm}

Then we build the sequence of subsets $(\cS_t)_{t\geq 1}$ as follows:
$$ \forall t \geq 1,\quad \cS_t = \left\{\begin{array}{ll}
\Set{\aul_t}\cup\Set{a \in \tilde \cS_t:\ a < \aul_t }     & \text{if } \aul_t < \ahat^\star_t \,, \\
 \Set{\aul_t}\cup\Set{a \in \tilde \cS_t:\ a > \aul_t }     & \text{if } \aul_t > \ahat^\star_t \,.
\end{array} \right. $$


\vskip 0.2in
\bibliography{biblio}

\begin{thebibliography}{21}
\providecommand{\natexlab}[1]{#1}
\providecommand{\url}[1]{\texttt{#1}}
\expandafter\ifx\csname urlstyle\endcsname\relax
  \providecommand{\doi}[1]{doi: #1}\else
  \providecommand{\doi}{doi: \begingroup \urlstyle{rm}\Url}\fi

\bibitem[Abbasi-Yadkori et~al.(2011)Abbasi-Yadkori, P{\'a}l, and
  Szepesv{\'a}ri]{abbasi2011improved}
Yasin Abbasi-Yadkori, D{\'a}vid P{\'a}l, and Csaba Szepesv{\'a}ri.
\newblock Improved algorithms for linear stochastic bandits.
\newblock In \emph{Advances in Neural Information Processing Systems}, pages
  2312--2320, 2011.

\bibitem[Agrawal et~al.(1989)Agrawal, Teneketzis, and
  Anantharam]{agrawal1989asymptotically}
Rajeev Agrawal, Demosthenis Teneketzis, and Venkatachalam Anantharam.
\newblock Asymptotically efficient adaptive allocation schemes for controlled
  iid processes: Finite parameter space.
\newblock \emph{IEEE Transactions on Automatic Control}, 34\penalty0 (3), 1989.

\bibitem[Burnetas and Katehakis(1997)]{burnetas1997optimal}
Apostolos~N. Burnetas and Michael~N. Katehakis.
\newblock Optimal adaptive policies for \textsc{M}arkov decision processes.
\newblock \emph{Mathematics of Operations Research}, 22\penalty0 (1):\penalty0
  222--255, 1997.

\bibitem[Capp\'{e} et~al.(2013)Capp\'{e}, Garivier, Maillard, Munos, and
  Stoltz]{CaGaMaMuSt2013}
Olivier Capp\'{e}, Aur\'{e}lien Garivier, Odalric-Ambrym Maillard, R\'{e}mi
  Munos, and Gilles Stoltz.
\newblock Kullback--{L}eibler upper confidence bounds for optimal sequential
  allocation.
\newblock \emph{Annals of Statistics}, 41\penalty0 (3):\penalty0 1516--1541,
  2013.

\bibitem[Combes and Proutiere(2014)]{combes2014unimodal}
Richard Combes and Alexandre Proutiere.
\newblock Unimodal bandits: Regret lower bounds and optimal algorithms.
\newblock In \emph{International Conference on Machine Learning}, 2014.

\bibitem[Combes et~al.(2017)Combes, Magureanu, and
  Proutiere]{combes2017minimal}
Richard Combes, Stefan Magureanu, and Alexandre Proutiere.
\newblock Minimal exploration in structured stochastic bandits.
\newblock In \emph{Advances in Neural Information Processing Systems}, pages
  1763--1771, 2017.

\bibitem[Durand et~al.(2017)Durand, Maillard, and Pineau]{durand2017streaming}
Audrey Durand, Odalric-Ambrym Maillard, and Joelle Pineau.
\newblock Streaming kernel regression with provably adaptive mean, variance,
  and regularization.
\newblock \emph{arXiv preprint arXiv:1708.00768}, 2017.

\bibitem[Graves and Lai(1997)]{graves1997asymptotically}
Todd~L Graves and Tze~Leung Lai.
\newblock Asymptotically efficient adaptive choice of control laws incontrolled
  markov chains.
\newblock \emph{SIAM journal on control and optimization}, 35\penalty0
  (3):\penalty0 715--743, 1997.

\bibitem[Honda and Takemura(2011)]{honda2011asymptotically}
Junya Honda and Akimichi Takemura.
\newblock An asymptotically optimal policy for finite support models in the
  multiarmed bandit problem.
\newblock \emph{Machine Learning}, 85\penalty0 (3):\penalty0 361--391, 2011.

\bibitem[Honda and Takemura(2015)]{honda2015imed}
Junya Honda and Akimichi Takemura.
\newblock Non-asymptotic analysis of a new bandit algorithm for semi-bounded
  rewards.
\newblock \emph{Machine Learning}, 16:\penalty0 3721--3756, 2015.

\bibitem[Lai(1987)]{lai1987adaptive}
Tze~Leung Lai.
\newblock Adaptive treatment allocation and the multi-armed bandit problem.
\newblock \emph{The Annals of Statistics}, pages 1091--1114, 1987.

\bibitem[Lai and Robbins(1985)]{lai1985asymptotically}
Tze~Leung Lai and Herbert Robbins.
\newblock Asymptotically efficient adaptive allocation rules.
\newblock \emph{Advances in applied mathematics}, 6\penalty0 (1):\penalty0
  4--22, 1985.

\bibitem[Lattimore and Szepesvari(2017)]{lattimore2017end}
Tor Lattimore and Csaba Szepesvari.
\newblock The end of optimism? an asymptotic analysis of finite-armed linear
  bandits.
\newblock In \emph{Artificial Intelligence and Statistics}, pages 728--737,
  2017.

\bibitem[Magureanu(2018)]{magureanu2018efficient}
Stefan Magureanu.
\newblock \emph{Efficient Online Learning under Bandit Feedback}.
\newblock PhD thesis, KTH Royal Institute of Technology, 2018.

\bibitem[Magureanu et~al.(2014)Magureanu, Combes, and
  Proutiere]{magureanu2014oslb}
Stefan Magureanu, Richard Combes, and Alexandre Proutiere.
\newblock Lipschitz bandits: Regret lower bounds and optimal algorithms.
\newblock \emph{Machine Learning}, 35:\penalty0 1--25, 2014.

\bibitem[Maillard(2018)]{maillard2018boundary}
O-A Maillard.
\newblock Boundary crossing probabilities for general exponential families.
\newblock \emph{Mathematical Methods of Statistics}, 27\penalty0 (1):\penalty0
  1--31, 2018.

\bibitem[Robbins(1952)]{ro52}
H.~Robbins.
\newblock Some aspects of the sequential design of experiments.
\newblock \emph{Bulletin of the American Mathematics Society}, 58:\penalty0
  527--535, 1952.

\bibitem[Srinivas et~al.(2010)Srinivas, Krause, Kakade, and
  Seeger]{srinivas2010gaussian}
Niranjan Srinivas, Andreas Krause, Sham Kakade, and Matthias Seeger.
\newblock Gaussian process optimization in the bandit setting: no regret and
  experimental design.
\newblock In \emph{Proceedings of the 27th International Conference on
  International Conference on Machine Learning}, pages 1015--1022. Omnipress,
  2010.

\bibitem[Thompson(1933)]{thompson1933likelihood}
William~R Thompson.
\newblock On the likelihood that one unknown probability exceeds another in
  view of the evidence of two samples.
\newblock \emph{Biometrika}, 25\penalty0 (3/4):\penalty0 285--294, 1933.

\bibitem[Thompson(1935)]{thompson1935criterion}
William~R Thompson.
\newblock On a criterion for the rejection of observations and the distribution
  of the ratio of deviation to sample standard deviation.
\newblock \emph{The Annals of Mathematical Statistics}, 6\penalty0
  (4):\penalty0 214--219, 1935.

\bibitem[Yu and Mannor(2011)]{yu2011unimodal}
Jia~Yuan Yu and Shie Mannor.
\newblock Unimodal bandits.
\newblock In \emph{ICML}, pages 41--48. Citeseer, 2011.

\end{thebibliography}

\end{document}